\def\eqref#1{equation~\ref{#1}}
\def\1{\bm{1}}
\DeclareMathAlphabet{\mathsfit}{\encodingdefault}{\sfdefault}{m}{sl}
\SetMathAlphabet{\mathsfit}{bold}{\encodingdefault}{\sfdefault}{bx}{n}
\newenvironment{lemgray}{%
  \par\noindent
  \def\lemgray@pending{}%
  \let\lemgray@orig@footnote\footnote
  \renewcommand{\footnote}[1]{%
    \footnotemark    \g@addto@macro\lemgray@pending{\footnotetext{##1}}%
  }%
  \begin{tcolorbox}[
    colback=gray!10, 
    colframe=gray!10,
    boxrule=0pt,
    left=0pt,right=0pt,
    top=4pt,bottom=
    4pt,
    boxsep=6pt,
    before skip=6pt, after skip=6pt
  ]%
}{%
  \end{tcolorbox}%
  \lemgray@pending
  \let\footnote\lemgray@orig@footnote
  \par
}
\title{HalluGuard: Demystifying Data-Driven and Reasoning-Driven Hallucinations in LLMs}
\author{Xinyue Zeng\thanks{Equal contribution.}\\
CS Department\\
Virginia Tech\\
\And
Junhong Lin\footnotemark[1] \\
EECS Department\\
MIT\\
\And
Yujun Yan \\
CS Department\\
Dartmouth College\\
\And
Feng Guo \\
Statistics Department\\
Virginia Tech \\
\And
Liang Shi \\
Statistics Department\\
Virginia Tech \\
\And
Jun Wu \\
CS Department\\
Michigan State University \\
\And
Dawei Zhou \\
CS Department\\
Virginia Tech
}
\newcommand{\model}{\textsc{HalluGuard}}
\newtheorem{theorem}{Theorem}[section] 
\newtheorem{lemma}[theorem]{Lemma}
\newtheorem{problem}{Problem}
\begin{document}

\maketitle

\begin{abstract}
The reliability of Large Language Models (LLMs) in high-stakes domains such as healthcare, law, and scientific discovery is often compromised by hallucinations. These failures typically stem from two sources: \emph{data-driven hallucinations} and \emph{reasoning-driven hallucinations}. However, existing detection methods usually address only one source and rely on task-specific heuristics, limiting their generalization to complex scenarios.
To overcome these limitations, we introduce the \emph{Hallucination Risk Bound}, a unified theoretical framework that formally decomposes hallucination risk into data-driven and reasoning-driven components, linked respectively to training-time mismatches and inference-time instabilities. This provides a principled foundation for analyzing how hallucinations emerge and evolve.
Building on this foundation, we introduce {\model}, a NTK-based score that leverages the induced geometry and captured representations of the NTK to jointly identify data-driven and reasoning-driven hallucinations.
We evaluate {\model} on 10 diverse benchmarks, 11 competitive baselines, and 9 popular LLM backbones, consistently achieving state-of-the-art performance in detecting diverse forms of LLM hallucinations. We open-source our proposed \model{} model at \href{https://github.com/Susan571/HalluGuard-ICLR2026}{HalluGuard}.
\end{abstract}

\section{Introduction}
Large language models (LLMs) are increasingly deployed in high-stakes domains such as healthcare, law, and scientific discovery \citep{bommasani2021opportunities, thirunavukarasu2023large, ke2025early}. 
However, adoption in these settings remains cautious, as such domains are highly regulated and demand strict compliance, interpretability, and safety guarantees \citep{dennstadt2025implementing,kattnig2024assessing}.
A major barrier is the risk of \emph{hallucinations}, generated content appears unfaithful or nonsensical. Such errors can have severe consequences \citep{dennstadt2025implementing}, as the example in \Cref{fig:diagnostic-reasoning}, a generated incorrect medical diagnosis may delay treatment or lead to harmful interventions. Therefore, detecting hallucinations is not merely a technical challenge but a prerequisite for trustworthy deployment, as undetected errors undermine reliability, accountability, and user safety.

Generally, hallucinations in LLMs arise from two primary sources \citep{ji2023survey, huang2023survey}: \emph{data-driven hallucinations}, which stem from flawed, biased, or incomplete knowledge encoded during pre-training or fine-tuning; and \emph{reasoning-driven hallucinations}, which originate from inference-time failures such as logical inconsistencies or breakdowns in multi-step reasoning \citep{zhang2023language,zhong2024investigating}. 
Detection methods broadly split along these two dimensions. Approaches for data-driven hallucinations often compare outputs against retrieved documents or references \citep{Kurt202retrieval, minfactscore, ji2023survey}, or exploit sampling consistency as in SelfCheckGPT~\citep{manakulselfcheck}.
In contrast, methods for reasoning-driven hallucinations rely on signals of inference-time instability, including probabilistic measures such as perplexity~\citep{ren2022outofdistribution}, length-normalized entropy~\citep{malinin2020uncertainty}, semantic entropy~\citep{kuhn2023semantic}, energy-based scoring~\citep{liu2020energybased}, and RACE~\citep{wang2025joint}. Others probe internal representations, for example, Inside~\citep{cheninside}, which applies eigenvalue-based covariance metrics and feature clipping, ICR Probe~\citep{zhang2025icr}, which tracks residual-stream updates, and Shadows in the Attention~\citep{wei2025shadows}, which analyzes representation drift under contextual perturbations.
While these methods shed light on the mechanisms underlying hallucinations, most remain tailored to a single hallucination type and fail to capture their evolution. Yet growing evidence indicates that data-driven and reasoning-driven hallucinations often evolve during multi-step generation \citep{liu2025morethinking, sun2025mechanistic}. 
As shown in \Cref{fig:diagnostic-reasoning}, it emerges from an initial disease misclassification and evolves into a distorted diagnosis, delaying treatments and risking fatality. 
This gap brings two central questions: \textit{\textbf{(1) How can we develop a unified theoretical understanding of how hallucinations evolve? (2) How can we detect them effectively and efficiently without relying on external references or task-specific heuristics?}}
To address these challenges, we propose a unified theoretical framework--\textit{Hallucination Risk Bound}, which decomposes the overall hallucination risk into two components: a \emph{data-driven term}, capturing semantic deviations rooted in inaccurate, imbalanced, or noisy supervision acquired during model training; and a \emph{reasoning-driven term}, reflecting instability introduced by inference-time dynamics, such as logical missteps or temporal inconsistency. This decomposition not only elucidates the mechanism behind hallucinations but also reveals how they emerge and evolve.
Specifically, our analysis shows that hallucinations originate from semantic approximation gaps, captured by representational limits of the model, and are subsequently amplified by unstable rollout dynamics, evolving across decoding steps.
As such, our framework offers a unified theoretical lens for characterizing the emergence and evolution of these hallucinations.
\begin{figure}[t!]
    \vspace{-1em}
    \centering    \includegraphics[width=\linewidth]{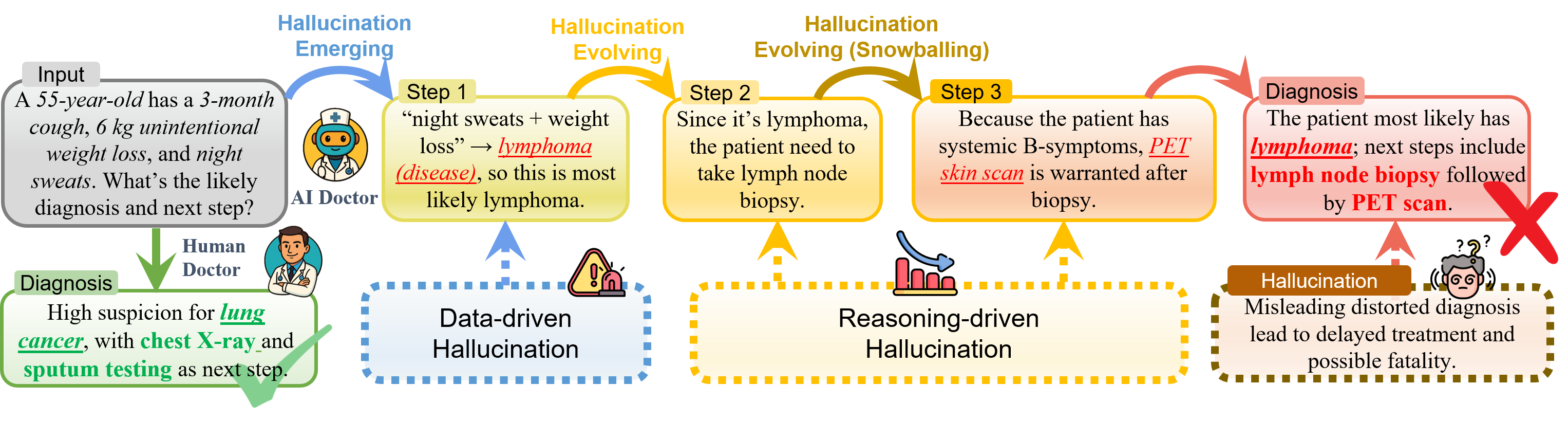}
    \caption{An illustration of hallucination emerging and evolving in the context of disease diagnosis. 
    }
    \label{fig:diagnostic-reasoning}
\end{figure}

Building on the theoretical foundation, we propose \model, a
Neural Tangent Kernel(NTK)-based score that leverages the induced geometry and captured representations of the NTK to jointly identify data-driven and reasoning-driven hallucinations. We evaluate \model{} comprehensively across 10 diverse benchmarks, 11 competitive baselines, and 9 popular LLM backbones. \model{} consistently achieves state-of-the-art hallucination detection performance, demonstrating its efficacy. We open-source our proposed \model{} model at \href{https://github.com/Susan571/HalluGuard-ICLR2026}{HalluGuard}.
\section{Preliminaries}
\label{sec:hallucination-detection}
\paragraph{Hallucination Detection. }
There are two primary sources of hallucinations in LLMs \citep{ji2023survey, huang2023survey}: \emph{data-driven hallucination}, which stems from incomplete or biased knowledge encoded during pre-training or fine-tuning, and \emph{reasoning-driven hallucination}, which arises from unstable or inconsistent inference dynamics at decoding time. This distinction has implicitly guided a broad range of detection strategies, which we examine through these two lenses.

For data-driven causes, a recurring signal is elevated predictive uncertainty. A common formulation adopts the sequence-level negative log-likelihood:
{\small
\begin{equation}
\mathcal{U}(\mathbf{y} \mid \mathbf{x}, \theta) 
= - \frac{1}{T} \sum_{t=1}^{T} \log p_\theta(y_t \mid y_{<t}, \mathbf{x}),
\end{equation}}

which quantifies the average uncertainty of generating a sequence $\mathbf{y} = [y_1, \ldots, y_T]$ from input $\mathbf{x}$ and \(\theta\) denotes model parameters. This directly recovers \emph{Perplexity}~\citep{ren2022outofdistribution}, where low scores imply confident predictions, while high scores indicate implausible generations due to weak priors.  
To capture more nuanced uncertainty, later methods extend this formulation to multi-sample settings. The \emph{Length-Normalized Entropy}~\citep{malinin2020uncertainty} penalizes dispersion across stochastic generations $\mathcal{Y} = \{\mathbf{y}^1, \ldots, \mathbf{y}^K\}$ where $K$ denotes the number of independent stochastic rollouts sampled from the model for a given input, offering a finer-grained view of model indecision. This perspective is further enriched by \emph{Semantic Entropy}~\citep{kuhn2023semantic}, which projects sampled responses into semantic space, and by energy-based scoring \citep{liu2020energybased}, which replaces log-probability with a learned confidence function. Collectively, these methods reflect a progression from token-level likelihoods to semantically grounded multi-sample uncertainty estimators.

In contrast, reasoning-driven hallucinations arise from brittle inference trajectories, where identical contexts may yield inconsistent or incoherent outputs. A commonly used measure of such instability is the cross-sample consistency score:
\vspace{-0.5em}
{\small
\begin{equation}
\mathcal{C}(\mathcal{Y} \mid \mathbf{x}, \theta) 
= \frac{1}{C} \sum_{i=1}^{K}\sum_{j=i+1}^{K} 
\text{sim}(\mathbf{y}^i, \mathbf{y}^j),
\vspace{-0.5em}
\end{equation}}

where $C = K \cdot \frac{(K-1)}{2}$, and $\text{sim}(\cdot,\cdot)$ is a similarity function such as ROUGE-L~\citep{lin2004rouge}, cosine similarity, or BLEU~\citep{chen2024hall}. Low scores reflect diverging generations and unstable reasoning.  
Several reasoning-driven detection methods can be interpreted through this lens. Early approaches used surface-level lexical overlap metrics \citep{lin-etal-2022-towards}, while \emph{SelfCheckGPT}~\citep{manakulselfcheck} advanced this by evaluating factual entailment across responses, and \emph{FActScore}~\citep{minfactscore} extended this further by comparing outputs to retrieved reference documents.  More recent efforts probe internal signals directly: \emph{Inside}~\citep{cheninside}
analyzes the covariance spectrum of embedding representations, and \emph{RACE}~\citep{wang2025joint} diagnoses instability in multi-step reasoning.
\paragraph{NTK in LLMs.}
NTK provides a principled framework for analyzing the training dynamics in the overparameterized regime characteristic of modern LLMs \citep{jacot2018ntk}.
Formally, for a network output $f(x, \theta)$ with input $x$ and parameters $\theta$, the NTK is defined as:
{\small
\begin{equation}
\Theta(x, x', \theta) = \nabla_{\theta} f(x, \theta) \cdot \nabla_{\theta} f(x', \theta).
\end{equation}}

This kernel $\Theta(x, x', \theta)$ quantifies the similarity of training dynamics between inputs $x$ and $x'$. In the infinite-width limit, it converges to a deterministic value at initialization and remains nearly constant throughout training \citep{lee2020wide}. This stability reduces the highly nonlinear optimization of deep networks to a tractable kernel regression problem. By examining the eigenspectrum of the NTK, one can probe how internal representations are shaped during training: which features are prioritized (e.g., syntax versus semantics), how quickly different tasks converge, and why overparameterized networks generalize effectively to unseen data \citep{ju2022gen}. In this way, the NTK transforms the apparent complexity of LLM optimization into a clear lens on how these models capture, process, and generalize information \citep{zeng2025lensllm}.

\section{Methodology}
\label{sec:methodology}



\subsection{Problem Setting}
Our analysis reveals that hallucination is not a unified failure mode but rather shifts with the task structure.  
On the instruction-following \texttt{Natural} benchmark~\citep{wang2022supernatural}, 88.9\% of the overall 3499 errors are from logical missteps (\emph{reasoning-driven}) while 11.1\% are factual inaccuracies (\emph{data-driven}).  
By contrast, on the math-focused \texttt{MATH-500}~\citep{hendrycks2021measuring}, the 1985 wrong generations are dominated by 1946 reasoning errors (98.1\%), with only 19 factual flaws (1.9\%).  
This contrast highlights that, in practice, hallucinations are rarely pure but often mixtures of data-driven bias and reasoning-driven instability-motivating our formal decomposition of hallucination sources.

\textbf{Problem Definition. }

Let $\mathcal{Y}$ denote the discrete space of all possible finite-length textual token sequences. 
We define a continuous semantic embedding space $U_h \subseteq\ \mathbb{R}^{d_h}$ equipped with a norm $\|\cdot\|$. Each vector $u \in U_h$ represents the semantic representation of a reasoning chain composed of step-wise logical statements. We define a task-specific encoder 
$ \Phi: \mathcal{Y} \rightarrow U_h $
that maps a discrete textual sequence into this continuous hypothesis space. 
In this framework, for an input $\mathbf{x}$ with a ground-truth output sequence $y^* \in \mathcal{Y}$, we define the target semantic representation as $ u^* := \Phi(y^*) \in U_h. $
An LLM with parameters $\theta$ emits a random sequence $Y = (Y_1, \dots, Y_T) \in \mathcal{Y}$ 
via the autoregressive decoding distribution  $p_\theta(y_t \mid y_{<t}, \mathbf{x})$, 
yielding a predicted semantic representation  $ u_h := \Phi(Y) \in U_h. $
Thus, the model's expected sematic output is defined as $\mathbb{E}[u_h] := \mathbb{E}_{Y \sim p_\theta(\cdot \mid \mathbf{x})}[\Phi(Y)].$

To analyze inference dynamics, we consider perturbations in a local neighborhood of the decoding process. Let $\mathbb{R}^r$ denote the $r$-dimensional continuous space of the model's internal states (e.g., prefix embeddings or hidden activations). We parameterize a small perturbation by $\delta \in \mathbb{R}^r,$ restricted to a local $\ell_2$-ball $\mathcal{B}_\rho := \{\delta \in \mathbb{R}^r : \|\delta\|_2 \le \rho\}.$
Let $P_\theta(\cdot \mid \mathbf{x}, \delta)$ denote the perturbed decoding distribution induced by $\delta$. We define the mean semantic response map $G_Y : \mathbb{R}^r \rightarrow U_h, G_Y(\delta) := \mathbb{E}_{Y \sim P_\theta(\cdot \mid \mathbf{x}, \delta)}[\Phi(Y)]$ with its corresponding inference Jacobian $J := DG_Y(0) \in \mathbb{R}^{d_h \times r}$. Thus, we formally define the problem as follows:

\begin{problem}[Hallucination Dynamics Characterization]
\mbox{}\\
\noindent \textbf{Given:}
(1) The target semantic representation $u^* := \Phi(y^*) \in U_h$ for a ground-truth output $y^* \in \mathcal{Y}$; (2) the random sequence $Y \in \mathcal{Y}$ emitted via the autoregressive decoding distribution $p_\theta(y_t \mid y_{<t}, \mathbf{x})$, yielding a predicted representation $u_h := \Phi(Y)$ with expected value $\mathbb{E}[u_h]$ ; and (3) the inference constraints defined by a local perturbation $\delta \in \mathbb{R}^r$ restricted to the $\ell_2$-ball $\mathcal{B}_\rho$

\textbf{Find:}
A formal geometric mechanism to characterize how hallucinations emerge and evolve by analyzing the Mean Semantic Response Map $G_Y(\delta)$ and the Inference Jacobian $J$, which captures the sensitivity of the model's reasoning trajectory to internal instabilities.
\end{problem}

\subsection{Hallucination Risk Bound}
\label{sec:bound}
To bridge the formal setup with the phenomenon of hallucination, we first disentangle the sources of hallucinations.  Intuitively, hallucinations may arise either from systematic biases in the knowledge encoded by the model (data-driven) or from instabilities during autoregressive decoding (reasoning-driven). The following proposition formalizes this idea by decomposing the total hallucination risk into two components.

We first impose the following assumptions:
\begin{itemize}
\item[\textbf{A1.}] $(U_h,\|\cdot\|)$ is a finite-dimensional Hilbert space.
The encoder $\Phi:\mathcal{Y}\to U_h$ is measurable, and the random variable $\Phi(Y)$ has finite second moment under the model's unperturbed decoding distribution:
$
\mathbb{E}_{Y \sim p_\theta(\cdot \mid \mathbf{x})}
\big[
\|\Phi(Y)\|^2
\big] 
< \infty.
$
This ensures that the mean semantic representation 
$\mathbb{E}[\Phi(Y)]$ is well-defined in $U_h$.
\item[\textbf{A2.}] Let $(\mathcal{Y}, d_{\mathcal{Y}})$ be the discrete metric space equipped with edit distance.
The encoder 
$
\Phi : (\mathcal{Y}, d_{\mathcal{Y}}) \rightarrow (U_h, \|\cdot\|)
$
is $L_\Phi$-Lipschitz continuous:
$
\|\Phi(y) - \Phi(y')\| 
\le 
L_\Phi \, d_{\mathcal{Y}}(y,y')
\quad
\forall y,y' \in \mathcal{Y}.
$
\item[\textbf{A3.}] For any perturbation $\delta$ in the closed ball 
$\mathcal{B}_\rho := \{\delta \in \mathbb{R}^r : \|\delta\|_2 \le \rho\}$,
the mean semantic response map
$
G_Y(\delta)
=
\mathbb{E}_{Y \sim P_\theta(\cdot \mid \mathbf{x}, \delta)}[\Phi(Y)]
$
is twice Fréchet differentiable in a neighborhood of $\delta = 0$ and admits the expansion
$
G_Y(\delta)
=
G_Y(0)
+
J\delta
+
R(\delta),
$
where $J = D G_Y(0) \in \mathbb{R}^{d_h \times r}$ and the remainder satisfies
$
\|R(\delta)\|
\le
\frac{1}{2} H_\star \|\delta\|_2^2, 
\forall \delta \in \mathcal{B}_\rho,
$
for some constant $H_\star > 0$.
\end{itemize}

\vspace{-0.5em}
\begin{lemgray}
\textbf{Proposition 3.1 (Hallucination Risk Decomposition).}
\label{prop:prelim-bound}
Under A1-A3, applying the triangle inequality yields a natural split of the risk:
{\small
\begin{equation*}
\|u^* - u_h\| \;\le\; 
\underbrace{\|u^* - \mathbb{E}[u_h]\|}_{\text{data-driven term}}
\;+\;
\underbrace{\|u_h - \mathbb{E}[u_h]\|}_{\text{reasoning-driven term}}
\end{equation*}
}
This decomposition distinguishes errors caused by systematic bias in the learned representation from those introduced during stochastic rollout.
\end{lemgray}
\vspace{-0.5em}
\paragraph{Characterizing Data-Driven Hallucination.}
To quantify the data-driven term, we take inspiration from the NTK, 
which has proven effective in analyzing training dynamics of overparameterized models.
Here, NTK geometry provides a way to measure how well the model's representation space 
aligns with task generation under small perturbations.  

Let $U_h$ denote the hypothesis subspace accessible to the model under perturbations.
By Céa's lemma\citep{cea1964approximation} 
with curvature penalty, the data-driven term
can be bounded as
{\small
\begin{equation}
\|u^*-\mathbb{E}[u_h]\|
\;\le\;
\frac{\Lambda}{\gamma}
\inf_{u\in U_h}\|u^*-u\|,
\end{equation}
}

where $\gamma=\lambda_{\min}(\mathcal{K}_\Phi)$ is the smallest eigenvalue of the NTK Gram matrix on embedded perturbations $\mathcal{K}_\Phi$, and $\Lambda\le\|\mathcal{T}\|$, where $\mathcal{T}: U_h \to U_h$ denotes the operator mapping.
Intuitively, the ratio $\tfrac{\Lambda}{\gamma}$ measures the conditioning of the feature map:  
well-conditioned NTK spectra allow a closer approximation to the true generation.  

Thus, the ratio can be further controlled in terms of pretraining-finetuning mismatch:
{\small
\begin{equation}
\label{eq:LambdaGamma}
\frac{\Lambda}{\gamma}
\;\le\;
1 \;+\; k_{\mathrm{pt}}\log\!\mathcal{O}(P,L)
\;+\; k\cdot \frac{\epsilon_{\mathrm{mismatch}}}{\mathrm{Signal}_k},
\end{equation}
}

where $\log\!\mathcal{O}(P,L)$ is a complexity term from parameter count $P$ and prompt length $L$, $\epsilon_{\mathrm{mismatch}}$ denotes the Wasserstein distance between prompt and query distributions, $\mathrm{Signal}_k$ measures task-aligned energy in the top-$k$ eigenspace. $k_{\mathrm{pt}}$ and $k$ are task and model-dependent constants. Thus, data-driven hallucinations grow when the mismatch is large or when the task signal is weak.  

\paragraph{Characterizing Reasoning-Driven Hallucination.}
The reasoning-driven term 
captures \emph{reasoning-driven} instability that accumulates during autoregressive decoding.  
Here, we model generation as a martingale process, where deviation from the expectation
is controlled by concentration inequalities.  
Specifically, Freedman's inequality~\citep{geman1992neural} gives
{\small
\begin{equation}
\|u_h-\mathbb{E}[u_h]\|
\;\le\;
K\cdot \exp\!\Big(-\tfrac{K\epsilon^2}{C}\Big)\cdot \alpha(e^{\beta T}-1),
\end{equation}
}

where $K$ is the number of rollouts averaged, $\beta$ summarizes per-step growth in local Jacobians, $\alpha$ scales the cumulative effect and $C$ is a task and model-dependent constant.
This bound shows that reasoning-driven hallucinations grow exponentially with sequence length $T$.  

We now synthesize the two components into a unified result that characterizes the overall risk of hallucination.  
By combining the NTK-conditioned approximation bound for data-driven deviation with the Freedman-style concentration bound for reasoning-driven instability, we obtain the following unified bound of data-driven and reasoning-driven hallucinations (detailed proof is provided in \Cref{appendix:proof}):
\begin{lemgray}
\textbf{Theorem 3.2 (Hallucination Risk Bound).}
\label{rem:hallucination-bound}
Let $u^* := \Phi(y^*)$ denote the semantic embedding of the ground-truth output
and $u_h := \Phi(Y)$ that of the model-generated output.  
Under Assumptions~A1-A3, suppose there exists $\beta \ge 0$ such that
$
\Big\|\prod_{t=1}^T J_t\Big\|_2 \;\le\; e^{\beta T}.
$
Then the total hallucination risk satisfies
{\small
\begin{equation*}
\|u^* - u_h\|
\;\le\;
\underbrace{\Big(1 + k_{\mathrm{pt}}\log \mathcal{O}(P,L) + k\cdot \tfrac{\epsilon_{\mathrm{mismatch}}}{\mathrm{Signal}_k}\Big)
\inf_{u\in U_h}\|u^*-u\|}_{\text{data-driven term}}
\;+\;
\underbrace{|\mathcal{L}|\cdot \exp\!\Big(-\tfrac{K\epsilon^2}{C}\Big)\cdot \alpha\big(e^{\beta T}-1\big)}_{\text{reasoning-driven term}}
\end{equation*}
}
Here, $|\mathcal{L}|$ denotes the total sampled trajectories.
\end{lemgray}
\vspace{-0.5em}

\subsection{Hallucination Quantification via {\model}}

While Theorem \ref{rem:hallucination-bound} makes explicit how data-driven and reasoning-driven hallucinations
emerge and evolve, applying it directly at inference is impractical since direct step-wise Jacobians for billion-parameter LLMs are intractable, so we seek a \emph{proxy score} that is computable, stable, and faithful to our decomposition.

Let $\mathcal{K}$ denote the NTK Gram matrix with eigenvalues $\lambda_1 \ge \cdots \ge \lambda_r > 0$ 
and condition number $\kappa(\mathcal{K}) = \lambda_{\max} / \lambda_{\min}$. 
Let $J_t$ be the step-$t$ input-output Jacobian of the decoder, and define $\sigma_{\max} := \sup_t \|J_t\|_2$ as the uniform spectral bound(note that $\sigma_{\max}$ is independent of the spectrum of $\mathcal{K}$).

Under Assumptions A1-A3, a standard NTK approximation argument yields
$
\inf_{u \in U_h} \|u^* - u\| \;\le\; C_d \, \det(\mathcal{K})^{-c_d} \, \|u^*\|,
$
so that $\det(\mathcal{K})$ capture the representations in systematic bias.  

For autoregressive rollout, based on the property of  Jacobian, we have 
$
\Big\|\prod_{t=1}^T J_t\Big\|_2 \;\le\; \prod_{t=1}^T \|J_t\|_2
\;=\; \exp\!\Big(\sum_{t=1}^T \log \|J_t\|_2\Big)
$
, so that we have
$
\Big\|\prod_{t=1}^T J_t\Big\|_2 \;\le\; e^{\beta T}.
$
Since $\beta \le \log \sigma_{\max}$ with
$\sigma_{\max} := \sup_t \|J_t\|_2$
thus we have the upper bound as  $\|\prod_{t=1}^T J_t\|_2 \le \sigma_{\max}^T = e^{(\log\sigma_{\max})T}$.
Thus, $\log\sigma_{\max}$ serves as a stable and tractable proxy for the per-step amplification rate.


Perturbation analysis of $\mathcal{K}$, together with classical eigenvalue
sensitivity results \citep{trefethen2022numerical}, yields
$
\mathrm{Var}[u_h] \;\le\; c_v \,\kappa(\mathcal{K})^2 \, \|\delta\|^2,
$
showing that instability grows quadratically with the condition number
$\kappa(\mathcal{K})$. To temper this effect and ensure additivity, we penalize ill-conditioned representations via $-\log \kappa^2$, where log compression brings a well-behaved dynamic range.

\begin{wraptable}{r}{0.46\textwidth} 
\centering
\vspace{-0.75em}
\setlength{\tabcolsep}{1pt}
\caption{Correlation between NTK proxies and task families.}
\vspace{-0.75em}
\small
\begin{tabular}{lccc}
\toprule
 & SQuAD & Math-500 & TruthfulQA \\
\midrule
$\det(\mathcal{K})$     & 0.84 & 0.42 & 0.61 \\
$\log \sigma_{\max}-\log \kappa^2$ & 0.39 & 0.88 & 0.67 \\
\bottomrule
\end{tabular}
\vspace{-0.75em}
\label{tab:correlation}
\end{wraptable}
In summary, $\det(\mathcal{K})$ quantifies representational adequacy, $\log\sigma_{\max}$ captures rollout amplification, and $-\log\kappa^2$ penalizes spectral instability, together forming a compact and tractable proxy consistent with the Hallucination Risk Bound. The lightweight projection layers are self-supervised spectral calibration modules, optimized offline (via AdamW) to align NTK spectral properties across heterogeneous backbones into a stable, comparable geometric space-without hallucination labels or task-specific supervision, with the backbone fully frozen and zero runtime overhead during inference. Detailed proofs are provided in \Cref{appendix:connection}.

\paragraph{Empirical validation.}
We empirically validate how those proxies correlate with different task families.
In \Cref{tab:correlation}, $\det(\mathcal{K})$ correlates most strongly 
with the data-centric task \texttt{SQuAD} ($0.84$), indicating its role in capturing 
factual fidelity. In contrast, for the reasoning-oriented \texttt{MATH-500}, the 
highest correlation is observed with $\log\sigma_{\max} - \log\kappa^2$ ($0.88$), 
reflecting the importance of amplification and stability in multi-step reasoning. 

Motivated by the above, we formally define \model{} as follows, which provides a principled and unified lens for hallucination detection:
\begin{equation}
\label{eq:model-score}
\boxed{
\text{\textbf{\model}}(u_h) \;=\;
\det(\mathcal{K})
\;+\;
\log \sigma_{\max}
\;-\;
\log \kappa^2.}
\end{equation}

\section{Experiments}
\label{sec:experiments}
We comprehensively evaluate \model{} across 10
diverse benchmarks, 11 competitive baselines, and 9 popular LLM backbones. We aim to evaluate its efficacy from the following five questions:
\textbf{\textit{Q1:}} \textit{How does \model{} perform across different task families?}  
\textbf{\textit{Q2:}} \textit{How does \model{} perform across LLMs of different scales?}  
\textbf{\textit{Q3:}} \textit{How does each term capture trends across task families?}
\textbf{\textit{Q4:}} \textit{Can \model{} guide test-time inference to improve downstream reasoning?} \textbf{\textit{Q5:}} \textit{How well does \model{} generalize to detecting fine-grained hallucinations beyond benchmarks?} 

\Cref{sec:setup} details the setup; \Cref{sec:mainresults} evaluates \model{} as a detection method(Q1--Q3), \Cref{sec:testtime} applies \model{} in score-guided inference(Q4) and \Cref{sec:case} analyzes \model{} on fine-grained hallucination via a case study on semantic data(Q5).
\vspace{-0.5em}
\subsection{Evaluation Setup}
\label{sec:setup}
\textbf{Benchmarks.}
We evaluate across 10 widely used benchmarks spanning three distinct categories. For data-grounded QA, we include \texttt{RAGTruth}~\citep{niu2024ragtruth}, \texttt{NQ-Open}~\citep{kwiat2019natural}, \texttt{HotpotQA}~\citep{yang2018hotpot} and \texttt{SQuAD}~\citep{rajpurkar2016squad}, which emphasize factual correctness through external evidence. For reasoning-oriented tasks, we use \texttt{GSM8K}~\citep{cobbe2021training}, \texttt{MATH-500}~\citep{hendrycks2021measuring}, and \texttt{BBH}~\citep{suzgun2022challenging}, which require multi-step derivations prone to compounding errors. Finally, for instruction-following settings, we consider \texttt{TruthfulQA}~\citep{lin2022truthfulqa}, \texttt{HaluEval}~\citep{li2023halu} and \texttt{Natural}~\citep{wang2022supernatural}, which probe hallucinations under open-ended or adversarial prompts.

\textbf{Baselines.} We compare \model{} with 11 competitive detectors spanning diverse strategies. Uncertainty-based methods include Perplexity~\citep{ren2022outofdistribution}, Length-Normalized Predictive Entropy(LN-Entropy)~\citep{malinin2020uncertainty}, Semantic Entropy~\citep{kuhn2023semantic}, Energy Score~\citep{liu2020energybased} and P(true)~\citep{kadavath2022language}. Consistency-based approaches cover SelfCheckGPT~\citep{manakulselfcheck}, Lexical Similarity~\citep{lin-etal-2022-towards}, FActScore~\citep{minfactscore} and RACE~\citep{wang2025joint}. Internal-state methods are represented by Inside~\citep{cheninside} and MIND~\citep{suunsuper}. 

\textbf{LLM Backbone Models.} We evaluate 9 publicly available LLMs spanning different scales and architectures. These include five models from the Llama family (Llama2-7B, Llama2-13B, Llama2-70B, Llama3-8B, and Llama3.2-3B)~\citep{touvron2023llama2,grattafiori2024llama3}, along with OPT-6.7B~\citep{zhang2022opt}, Mistral-7B-Instruct~\citep{jiang2023mistral7b}, QwQ-32B~\citep{yang2024qwen}, and GPT-2 (117M)~\citep{radford2019language}. All models are used in their off-the-shelf form with pre-trained weights and tokenizers provided by Hugging Face, without further fine-tuning.

\textbf{Evaluation Metrics.} 
We evaluate hallucination detection ability under two regimes following~\citet{janiak2025illusion}: ROUGE-based reference evaluation (\(*_r\)) and \textsc{LLM-as-a-Judge} (\(*_{\mathrm{llm}}\)). For performance measures, we report the area under the receiver operating characteristic curve (AUROC) and the area under the precision-recall curve (AUPRC). AUROC is widely used to assess the quality of binary classifiers and uncertainty estimators, while AUPRC highlights performance under class imbalance. In both cases, higher values indicate better detection.

\subsection{Main Results}
\label{sec:mainresults}
\textbf{Q1: How does \model{} perform across different task families?}
To evaluate how \model{} performs across different task types, we conduct experiments on all benchmarks. For clarity, \Cref{tab:Q1} presents representative results from three task families: data-centric (\texttt{RAGTruth}), reasoning-oriented (\texttt{Math-500}), and instruction-following (\texttt{TruthfulQA}). 
As shown, \model{} consistently outperforms all baselines across backbones. On \texttt{Math-500}, it reaches 81.76\% AUROC and 79.76\% AUPRC, improving over the second-best method by up to 8.3\%. On \texttt{RAGTruth}, it attains 84.59\% AUROC and 81.15\% AUPRC, with gains of up to 7.7\%. On \texttt{TruthfulQA}, it achieves 77.05\% AUROC and 73.79\% AUPRC, exceeding the next strongest baseline by as much as 6.2\%. Overall, \model{} establishes new state-of-the-art results across diverse task families, with particularly pronounced improvements on reasoning-oriented benchmarks.
\begin{table*}[ht!]
\centering
\scriptsize
\caption{Performance comparison on representative benchmarks: data-centric (\texttt{RAGTruth}), reasoning-oriented (\texttt{Math-500}), and instruction-following (\texttt{TruthfulQA}). We highlight the \textbf{first} and \underline{second} best results.}
\resizebox{\textwidth}{!}{
\begin{tabularx}{0.96\textwidth}{@{}l@{\hspace{0.5em}}l@{\hspace{0em}}c@{\hspace{0.2em}}c@{\hspace{0.2em}}c@{\hspace{0em}}c@{\hspace{0.6em}}c@{\hspace{0.2em}}c@{\hspace{0.2em}}c@{\hspace{0em}}c@{\hspace{0.6em}}c@{\hspace{0.2em}}c@{\hspace{0.2em}}c@{\hspace{0em}}c@{\hspace{0.6em}}c@{\hspace{0.2em}}c@{\hspace{0em}}c@{\hspace{0.2em}}c@{}}%
\toprule
 & & \multicolumn{4}{c}{\textbf{GPT2}} & \multicolumn{4}{c}{\textbf{OPT-6.7B}} & \multicolumn{4}{c}{\textbf{Mistral-7B}} & \multicolumn{4}{c}{\textbf{QwQ-32B}} \\
\cmidrule(lr){3-6} \cmidrule(lr){7-10} \cmidrule(lr){11-14} \cmidrule(lr){15-18}
& & \rotatebox{55}{\tiny{AUROC$_r$}} & \rotatebox{55}{\tiny{AUPRC$_r$}} & \rotatebox{60}{\tiny{AUROC$_\text{llm}$}} & \rotatebox{55}{\tiny{AUPRC$_\text{llm}$}}
  & \rotatebox{55}{\tiny{AUROC$_r$}} & \rotatebox{55}{\tiny{AUPRC$_r$}} & \rotatebox{60}{\tiny{AUROC$_\text{llm}$}} & \rotatebox{55}{\tiny{AUPRC$_\text{llm}$}}
  & \rotatebox{55}{\tiny{AUROC$_r$}} & \rotatebox{55}{\tiny{AUPRC$_r$}} & \rotatebox{60}{\tiny{AUROC$_\text{llm}$}} & \rotatebox{55}{\tiny{AUPRC$_\text{llm}$}}
  & \rotatebox{55}{\tiny{AUROC$_r$}} & \rotatebox{55}{\tiny{AUPRC$_r$}} & \rotatebox{60}{\tiny{AUROC$_\text{llm}$}} & \rotatebox{55}{\tiny{AUPRC$_\text{llm}$}} \\
\midrule
\multirow{12}{*}{\rotatebox{90}{\textbf{RAGTruth}}}
 & \model        & \textbf{75.51} & \textbf{73.40} & \textbf{62.40} & \textbf{56.60} & \textbf{80.13} & \textbf{76.77} & \textbf{71.01} & \textbf{63.58} & \textbf{82.31} & \textbf{80.79} & \textbf{64.89} & \textbf{67.25} & \textbf{84.59}	&\textbf{81.15}	&\textbf{71.82}	&\textbf{66.68}\\
 & Inside       & \underline{73.42} & \underline{73.08} & \underline{61.99} & \underline{56.39} & \underline{79.49} & \underline{71.82} & \underline{66.1} & \underline{62.46} & \underline{75.32}	&\underline{73.19}	&\underline{64.58}&	\underline{61.05} & \underline{77.72}	&\underline{73.47}	&\underline{66.05}&	\underline{64.73}\\
 & MIND          & 58.54 & 54.79 & 43.47 & 41.85 & 63.82 & 62.58 & 51.03 & 44.78 & 73.13 & 71.53 & 58.25 & 58.6 & 64.23&63.06	&47.37&	51.47\\
 & Perplexity    & 58.07 & 56.68 & 43.84 & 41.53 & 64.47 & 61.57 & 47.12 & 52.98 &65.42	&63.63	&53.28	&51.36 & 73.91	&72.92	&60.81	&59.77\\
 & LN-Entropy    & 64.42 & 60.79 & 49.41 & 45.04 & 60.81 & 57.91 & 48.76 & 42.27 & 64.22 & 60.92 & 52.24 & 48.41 & 63.81	&62.26	&47.52&	52.17\\
 & Energy        & 65.53 & 62.42 & 51.8 & 47.22 & 66.54 & 63.28 & 54.21 & 49.19 & 64.36 & 62.26 & 48.64 & 53.93 & 73.26	&71.21&	65.43&	62.32\\
 & Semantic Ent. & 60.72 & 59.41 & 50.55 & 45.86 & 70.2 & 68.34 & 54.54 & 56.74 & 66.01 & 64.49 & 53.01 & 55.5 & 66.48&	64.41	&51.54&	50.11\\
 & Lexical Sim.  & 64.72 & 63.1 & 55.04 & 48.04 & 67.28 & 64.62 & 52.55 & 54.86 & 64.96 & 61.17 & 52.34 & 45.11 & 70.87	&67.41&	61.25&	51.01\\
 & \tiny SelfCheckGPT  & 65.4 & 62.79 & 52.85 & 52.43 & 66.64 & 64.89 & 52.69 & 51.17 & 71.19 & 68.45 & 63.13 & 60.23 &65.79&62.45	&54.76&	51.29\\
 & RACE          & 64.83 & 62.84 & 51.8 & 48.44 & 64.26 & 61.03 & 52.74 & 46.22 & 66.34 & 64.54 & 51.88 & 53.86 & 71.13&69.96	&57.58&55.54 \\
 & P(true)       & 66.19 & 64.04 & 48.2 & 56.27 & 68.44 & 65.48 & 57.53 & 53.08 & 72.54 & 71.8 & 57.25 & 59.42 & 65.32&63.01	&53.01&	52.32\\
 & FActScore     & 65.72 & 64.39 & 51.94 & 47.51 & 61.53 & 58.2 & 51.86 & 45.57 & 63.98 & 60.71 & 53.54 & 49.34 & 66.72&	64.03&	58.21&	49.17\\
\midrule
\multirow{12}{*}{\rotatebox{90}{\textbf{BBH}}}
 & \model        & \textbf{71.06} & \textbf{67.94} & \textbf{62.05} & \textbf{59.05} & \textbf{73.1} & \textbf{70.88} & \textbf{63.67} & \textbf{61.88} & \textbf{79.85} & \textbf{76.5} & \textbf{67.13}& \textbf{60.57} &  \textbf{81.76}	&\textbf{79.76}	&\textbf{68.77}	&\textbf{65.46}\\
 & Inside        & \underline{66.18} & \underline{66.81} & \underline{56.15} & \underline{58.62} & \underline{70.64} & \underline{65.22} & \underline{63.28} & \underline{59.28} & 67.2 & 65.49 & 51.3 & 53.46 & \underline{80.8}	&\underline{71.49}	&\underline{64.05}&	\underline{63.42}\\
 & MIND          & 55.41 & 51.77 & 39.01 & 41.59 & 55.48 & 53.46 & 38.59 & 40.88 & 65.71 & 63.7 & 49.61 & 52.54 & 61.75	&60.18	&53.46&	50.04\\
 & Perplexity    & 53.28 & 50.22 & 43.86 & 38.98 & 64.89 & 62.12 & 48.65 & 51.99 & 61.97 & 60.05 & 51.15 & 42.87 & 60.28	&57.75	&51.62	&43.38\\
 & LN-Entropy    & 60.84 & 58.76 & 42.76 & 47.48 & 58.71 & 55.01 & 43.55 & 42.02 & \underline{68.96} & \underline{69.44} & \underline{58.79} & \underline{57.49} & 63.96	&62.18	&46.01&	49.5\\
 & Energy        & 55.09 & 51.99 & 46.2 & 39.5 & 53.96 & 50.98 & 42.56 & 34.12 & 66.27 & 62.72 & 49.48 & 50.06 &  69.61	&68.66&	54.35&	57.36\\
 & Semantic Ent. & 58.16 & 54.81 & 49.61 & 40.39 & 62.63 & 59.52 & 50.14 & 45.02 & 64.99 & 61.33 & 50.11 & 45.53 & 62.76&	60.95	&45.77&	45.75\\
 & Lexical Sim.  & 51.37 & 47.18 & 38.37 & 39.06 & 61.27 & 58.06 & 44.13 & 42.96 & 58.25 & 55.92 & 46.31 & 46.01 & 69.46	&67.59&	55.93&	52.6\\
 & \tiny SelfCheckGPT  & 54.51 & 51.86 & 44.62 & 44.01 & 57.36 & 53.21 & 42.55 & 38.27 & 63.68 & 62.5 & 51.7 & 53.03 & 64.56	&62.49	&55.85&	45.8\\
 & RACE          & 55.99 & 54.66 & 41.39 & 38.32 & 64.23 & 62.03 & 56.03 & 53.44 & 66.88 & 64.33 & 49.57 & 48.5 & 59.5	&55.83	&46.13	&41.07\\
 & P(true)       & 54.57 & 52.88 & 45.45 & 44.74 & 57.02 & 55.49 & 48.81 & 37.84 & 57.11 & 55.21 & 43.93 & 47.05 & 61.49	&59.03	&44.37&	44.69\\
 & FActScore     & 56.76 & 53.85 & 40.25 & 40.01 & 54.51 & 53.2 & 38.45 & 36.49 & 62.11 & 58.64 & 53.52 & 47.27 & 58.82&	57.47&	49.48&	42.74\\
\midrule
\multirow{12}{*}{\rotatebox{90}{\textbf{TruthfulQA}}}
 & \model        & \textbf{72.1} & \textbf{68.76} & \textbf{60.09} & \textbf{52.01} & \textbf{69.59} & \textbf{68.36} & \textbf{58.52} & \textbf{52.65} & \textbf{77.05} & \textbf{73.79} & \textbf{63.62} & \textbf{62.26} & \textbf{74.26}	&\textbf{72.76}&	\textbf{57.39}	&\textbf{64.07}\\
 & Inside        & \underline{70.42} & \underline{68.76} & \underline{60.09} & \underline{52.01} & 62.1 & 59.78 & 51.07 & 51.38 & 62.53 & 60.99 & 52.3 & 49.35 & \underline{70.89}	&\underline{64.44}	&\underline{56.61}	&\underline{56.01}\\
 & MIND          & 59.45	&56.79	&45.22	&43.71 & 60.56 & 58.55 & 47.49 & 49.63 & 59.2 & 57.98 & 47.23 & 41.79 & 62.81&	61.5&	52.56&	46.37\\
 & Perplexity    & 50.57 & 47.87 & 40.64 & 35.63 & 55.07 & 52.26 & 44.43 & 42.79 & 60.8 & 59.69 & 47.33 & 41.62 & 55.29	&52.46	&43.95	&43.92\\
 & LN-Entropy    & 58.04	&56.99	&41.94	&47.21 & 56.12 & 54.01 & 47.06 & 38.4 & 59.67 & 56.25 & 41.99 & 41.25 & 60.76	&58.21&	46.24	&42.64\\
 & Energy        & 55.02&	53.31	&38.78&	45.16 & 54.42 & 51.85 & 36.21 & 42.57 & 58.93 & 55.25 & 50.76 & 41.72 & 64.15	&61.32	&51.78	&50.02 \\
 & Semantic Ent. & 61.01	&57.08	&43.35&	45.2 & 51.48 & 47.81 & 34.15 & 38.16 & 54.44 & 53.33 & 36.62 & 40.35 & 66.75	&63.85	&51.11&	46.71 \\
 & Lexical Sim.  & 52.54	&50.56&	39.94	&33.42 & 59.74 & 55.72 & 49.89 & 46.81 & 66.16 & 64.05 & 54.08 & 51.65 & 55.24	&51.36	&46.39	&39.57\\
 & \tiny SelfCheckGPT  & 56.04 & 54.48 & 43.78 & 44.38 & 58.93 & 56.47 & 47.65 & 39.02 & 61.14 & 58.91 & 42.97 & 47.01 & 55.86	&54.95	&41.08	&37.35\\
 & RACE          & 53.02 & 50.33 & 41.7 & 33.81 & \underline{62.95} & \underline{67.89} & \underline{54.61} & \underline{51.93} & \underline{71.06} & \underline{68.49} & \underline{60.4} & \underline{57.44} & 55.75	&52.62	&46.5	&43.19 \\
 & P(true)       & 55.52 & 53.41 & 38.33 & 38.38 & 54.88 & 53.1 & 38.22 & 40.96 & 55.8 & 52.01 & 40.88 & 38.72 & 57.18	&55.16	&46.19	&38.21 \\
 & FActScore     & 53.82 & 51.42 & 41.33 & 35.2 & 54.57	&51.26	&42.51&	35.52 & 53.97 & 50.2 & 42.97 & 36.16 & 62.31	&60.23	&45.06	&49.9 \\
\bottomrule
\vspace{-3em}
\end{tabularx}
}
\label{tab:Q1}
\end{table*}

\textbf{Q2: How does \model{} perform across LLMs of different scales?}
We further investigate whether the effectiveness of \model{} depends on model scale, 
as smaller backbones are typically more prone to hallucination. 
\Cref{tab:Q2} reports representative results on small(Llama2-7B, Llama3-8B), mid-sized(Llama2-13B), and large-scale(Llama2-70B) models using \texttt{SQuAD}, \texttt{GSM8K}, and \texttt{HaluEval}.
Across all settings, \model{} consistently surpasses baselines, 
with the largest margins on smaller models-for instance, 
\begin{wrapfigure}{h!}{0.45\columnwidth}
    \centering
    \vspace{-1em}
    \includegraphics[width=0.45\columnwidth]{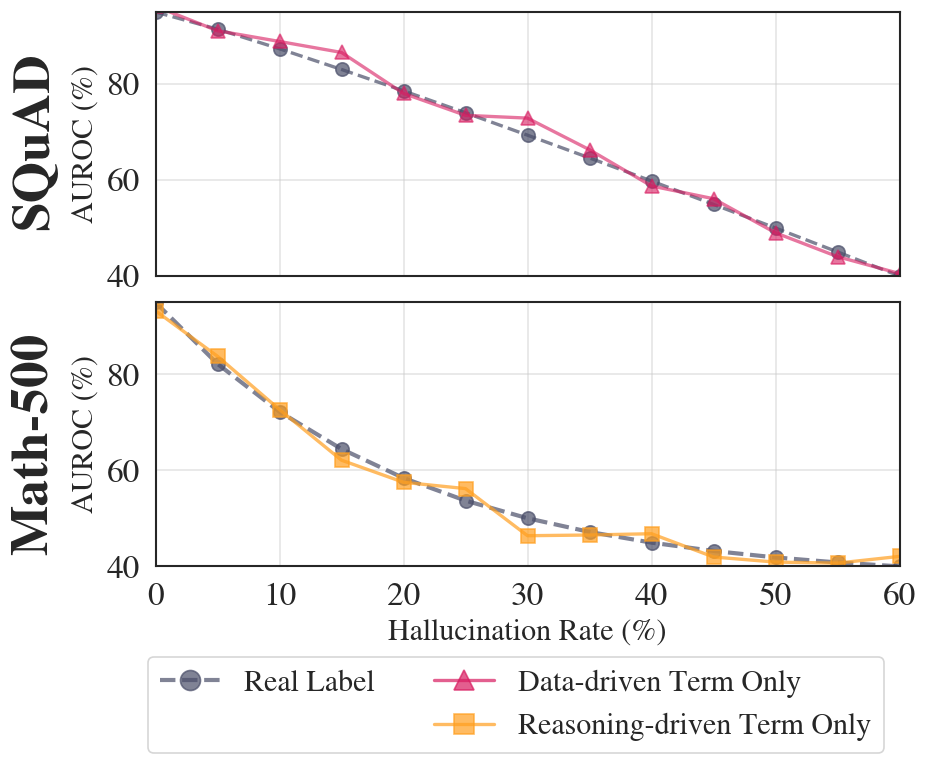}
    \vspace{-1.2em}
    \caption{Ablation results comparing individual terms with ground-truth trends on \texttt{SQuAD} (top) and \texttt{Math-500} (bottom).}
    \vspace{-1.5em}
    \label{fig:ablation}
\end{wrapfigure}
72.89\% AUPRC$_r$ on \texttt{HaluEval} with Llama2-7B, more than 10\% above the second best. 
Mid-sized models also exhibit clear gains (e.g., 79.01\% AUROC$_r$ on \texttt{GSM8K}), while even large-scale models like Llama2-70B see steady improvements (e.g., 83.8\% AUROC$_r$ on \texttt{SQuAD}). 
Overall, \model{} benefits most on small backbones while maintaining consistent advantages across scales.

\textbf{Q3: How does each term capture trends across task families?}
As shown in \Cref{fig:ablation}, each term faithfully tracks the ground-truth trend within its respective task family. On data-centric \texttt{SQuAD}, the \emph{data-driven term} closely follows the dashed gold curve across the variant hallucination rate, capturing the smooth AUROC decline. On reasoning-oriented \texttt{MATH-500}, the \emph{reasoning-driven term} mirrors the monotonic AUROC drop as reasoning drift increases. These results show that each term is well matched to its task family and faithfully tracks performance trends as hallucination rates rise.

\begin{table*}[h!]
\centering
\scriptsize
\caption{Performance comparison across backbone scales (small, mid-sized, and large) on three benchmarks: \texttt{SQuAD}, \texttt{GSM8K}, \texttt{HaluEval}. We highlight the \textbf{first} and \underline{second} best results.}
\resizebox{\textwidth}{!}{
\begin{tabularx}{0.96\textwidth}{@{}l@{\hspace{0.5em}}l@{\hspace{0em}}c@{\hspace{0.2em}}c@{\hspace{0.2em}}c@{\hspace{0em}}c@{\hspace{0.6em}}c@{\hspace{0.2em}}c@{\hspace{0.2em}}c@{\hspace{0em}}c@{\hspace{0.6em}}c@{\hspace{0.2em}}c@{\hspace{0.2em}}c@{\hspace{0em}}c@{\hspace{0.6em}}c@{\hspace{0.2em}}c@{\hspace{0em}}c@{\hspace{0.2em}}c@{}}%
\toprule
 & & \multicolumn{4}{c}{\textbf{Llama2-7B}} & \multicolumn{4}{c}{\textbf{Llama-3-8B}} & \multicolumn{4}{c}{\textbf{Llama2-13B}} & \multicolumn{4}{c}{\textbf{Llama2-70B}} \\
\cmidrule(lr){3-6} \cmidrule(lr){7-10} \cmidrule(lr){11-14} \cmidrule(lr){15-18}
& & \rotatebox{55}{\tiny{AUROC$_r$}} & \rotatebox{55}{\tiny{AUPRC$_r$}} & \rotatebox{60}{\tiny{AUROC$_\text{llm}$}} & \rotatebox{55}{\tiny{AUPRC$_\text{llm}$}}
  & \rotatebox{55}{\tiny{AUROC$_r$}} & \rotatebox{55}{\tiny{AUPRC$_r$}} & \rotatebox{60}{\tiny{AUROC$_\text{llm}$}} & \rotatebox{55}{\tiny{AUPRC$_\text{llm}$}}
  & \rotatebox{55}{\tiny{AUROC$_r$}} & \rotatebox{55}{\tiny{AUPRC$_r$}} & \rotatebox{60}{\tiny{AUROC$_\text{llm}$}} & \rotatebox{55}{\tiny{AUPRC$_\text{llm}$}}
  & \rotatebox{55}{\tiny{AUROC$_r$}} & \rotatebox{55}{\tiny{AUPRC$_r$}} & \rotatebox{60}{\tiny{AUROC$_\text{llm}$}} & \rotatebox{55}{\tiny{AUPRC$_\text{llm}$}} \\
\midrule
\multirow{12}{*}{\rotatebox{90}{\textbf{SQuAD}}}
 & \model        & \textbf{81.05} & \textbf{77.16} & \textbf{71.18} & \textbf{64.38} & \textbf{79.56} & \textbf{78.29} & \textbf{67.97} & \textbf{63.27} & \textbf{81.45} & \textbf{78.39} & \textbf{64.39} & \textbf{65.07} & \textbf{83.8}	&\textbf{81.77}	&\textbf{70.46}	&\textbf{73.24}\\
 & Inside       & \underline{73.63} & \underline{75.74} & \underline{65.22} & \underline{59.11} & \underline{76.13} & \underline{72.44} & \underline{65.62} & \underline{62.94} & \underline{74.68} & \underline{74.81} & \underline{61.01} & \underline{59.51} & \underline{81.24}	&\underline{75.09}	&\underline{69.48}&	\underline{62.4}\\
 & MIND          & 64.57 & 61.11 & 52.39 & 53.13 & 62.29 & 59.58 & 44.49 & 48.61 & 68.64 & 66.95 & 54.92 & 52.49 & 73.46	&71.71	&57.76&	56.77\\
 & Perplexity    & 63.93 & 61.77 & 46.97 & 48.2 & 70.51 & 67.51 & 55.71 & 52,68 & 70.19 & 69.22 & 60.33 & 54.82 & 74.23	&70.88	&62.24	&58.05\\
 & LN-Entropy    & 65.96 & 64.22 & 53.43 & 52.84 & 63.7 & 60.4 & 46.19 & 42.85 & 61.66 & 59.16 & 49.05 & 46.27 & 72.44&68.91	&56.77&	52.63\\
 & Energy        & 59.83 & 56.11 & 46.19 & 43.18 & 64.41 & 61.02 & 56.17 & 46.21& 61.02 & 59.73 & 48.26 & 42.08 & 69.01	&66.19&	58.44&	49.82\\
 & Semantic Ent. & 60.29 & 57.73 & 43.63 & 48.83 & 66.52 & 62.62 & 52.37 & 52.7 & 70.58 & 67.22 & 53.31 & 52.94 & 72.01&	68.51&56.49&50.9\\
 & Lexical Sim.  & 70.31 & 69.08 & 53.97 & 53.31 & 66.43 & 63.56 & 53.19 & 50.96 & 68.53 & 67.42 & 50.73 & 54.12 & 68.95&67.91&60.52&56.56\\
 & \tiny SelfCheckGPT  & 68.26 & 67.09 & 60.06 & 57.31 & 73.99 & 72.15 & 65.26 & 54.02 & 65.47 & 61.65 & 53.12 & 49.89 & 73.07&70.49&56.59&54.65\\
 & RACE          & 71.35 & 69.23 & 59.18 & 54.73 & 68.17 & 66.02 & 54.65 & 53.06 & 64.19 & 60.45 & 47.53 & 45.66 & 64.05&62.39&54.38&50.07\\
 & P(true)       & 62.55 & 61.09 & 46.84 & 52.32 & 67.42 & 63.94 & 55.35 & 47.52 & 71.56 & 68.4 & 57.51 & 45.66 & 66.81&62.71&57.43&46.85\\
 & FActScore     & 70.32 & 68.63 & 58.13 & 53.01 & 71.2 & 69.45 & 61.92 & 54.91 & 66.65 & 63.2 & 56.41 & 53.42 &68.33&65.26&56.93&48.46\\
\midrule
\multirow{12}{*}{\rotatebox{90}{\textbf{GSM8K}}}
 & \model        & \textbf{75.89} & \textbf{72.83} & \textbf{62.29} & \textbf{63.46} & \textbf{75.2} & \textbf{72.9} & \textbf{63.62} & \textbf{61.79} & \textbf{79.01} & \textbf{76.73} & \textbf{64.38}& \textbf{64.97} &  \textbf{77.33}	&\textbf{73.97}&	\textbf{60.48}&	\textbf{61.26}\\
 & Inside        & \underline{74.61} & \underline{68.35} & \underline{58.57} & \underline{62.58} & \underline{73.73} & \underline{67.51} & \underline{56.02} & \underline{57.28} & \underline{75.79} & \underline{76.26} & \underline{60.91} & \underline{59.77} & \underline{72.3}	&\underline{72.26}	&\underline{54.49}&	\underline{58.39}\\
 & MIND          & 65.88 & 63.4 & 48.28 & 48.17 & 66.57 & 65.55 & 48.84 & 53.4 & 61.49 & 59.55 & 51.63 & 51.45 & 66.41	&63.44&	52.05&	53.57\\
 & Perplexity    & 66.23 & 64.1 & 53.52 & 52.31 & 57.61 & 53.63 & 41.37 & 41.59 & 60.96 & 58.67 & 46.27 & 47.44 & 64.32&62.81&51.15&51.3\\
 & LN-Entropy    & 59.45& 55.95 & 43.04& 44.08 & 68.22 & 66.05 & 53.03 & 53.21 & 61.31 & 58.90 & 45.83 & 40.86 & 61.81&60.46&44.5&44.76\\
 & Energy        & 58.15& 54.71 & 43.65& 36.71 & 59.79 & 56.52 & 50.31 & 42.23 & 57.58 & 56.07 & 43.39 & 38.94 & 65.27&62.94&52.8&46.6\\
 & Semantic Ent. &57.95 & 54.68 & 42.78 & 41.95 & 66.9 & 64.81 & 50.47 & 55.36 & 62.72 & 59.09 & 49.33 & 44.35 & 60.63&57.01&46.22&40.24\\
 & Lexical Sim.  & 65.8 & 63.7 & 52.12& 54.07 & 63.29 & 59.87 & 53.17& 50.02 & 63.83 & 60.20 & 54.43 & 44.82 & 63.27&59.41&47.42&47.38\\
 & \tiny SelfCheckGPT  & 60.99 & 57.54 & 49.28 & 44.43 & 65.72 & 62.01& 54.49 & 50.34& 57.98 & 54.58 & 46.72 & 39.86 & 68.06&65.09&52.99&50.89\\
 & RACE  & 63.37 & 62.33 & 53.53 & 49.94 & 64.49 & 61.47& 53.28 & 47.55& 64.20 & 61.96 & 50.15 & 45.35 & 68.35&66.66&50.41&51.16\\
 & P(true)  & 65.95& 63.63 & 54.95 & 48.25 &62.59 & 58.88 & 47.21 & 42.2 & 67.08 & 65.60 & 53.66 & 55.12 & 60.16&58.14&47.73&49.49\\
 & FActScore     & 56.69 & 53.71 & 45.78& 39.52 & 65.69 & 61.95& 53.69 & 46.06 & 55.76 & 54.17 & 44.91 & 43.18 & 59.84&55.85&44.05&39.49\\
\midrule
\multirow{12}{*}{\rotatebox{90}{\textbf{HaluEval}}}
 & \model        & \textbf{75.72} & \textbf{72.89} & \textbf{66.65} & \textbf{63.15} & \textbf{73.43} & \textbf{71.19} & \textbf{64.95} & \textbf{54.8} & \textbf{78.15} & \textbf{74.15} & \textbf{65.39} & \textbf{61.14} & \textbf{80.79}	&\textbf{79.54}&	\textbf{67.68}	&\textbf{68.51}\\
 & Inside        & \underline{71.33} & \underline{67.63} & \underline{59.73} & \underline{53.15} & \underline{67.95} & \underline{64.93} & \underline{60.31} & \underline{52.21} & \underline{72.01} & \underline{71.97} & \underline{56.51} & \underline{60.64} & \underline{74.62}	&\underline{68.33}	&\underline{62.22}	&\underline{64.4}\\
 & MIND          & 54.8	&51.43	&44.15	&43.34 & 64.54 & 60.89 & 49.09 & 45.13 & 55.05 & 53.28 & 39.16 & 45.17 & 57.98&	56.01&	45.82&41.69\\
 & Perplexity    & 54.02 & 52.53& 38.76 & 40.51 & 61.31 & 59.36 & 50.62 & 46.01 & 54.99 & 51.39 & 42.64 & 35.64 & 62.85&60.59&48.29&43.85\\
 & LN-Entropy    & 59.47&58.33&50.2	&46.91& 64.89 & 60.72 & 51.78 & 46.39 & 65.18 & 63.53 & 49.70 & 48.09 & 60.16&58.89&50.29&48.42\\
 & Energy        &62.29&	59.6&50.68&	42.24 &62.74 & 61.61& 50.17 & 52.01 & 60.54 & 59.04 & 43.53 & 50.37 & 60.13&58.44&48.79&48.01\\
 & Semantic Ent. &59.39&55.94&48.53&46.35 & 55.25 & 53.05 & 44.5 & 44.35 & 59.44 & 57.72 & 45.38 & 40.77 & 61.57&57.99&49.07&45.39\\
 & Lexical Sim.  & 63.61	&61.16&55.01&44.75& 56.59 & 55.39 & 44.45 & 45.57 & 53.46 & 52.06 & 41.34 & 40.57 & 64.37&60.92&54.29&50.86\\
 & \tiny SelfCheckGPT  & 64.29& 61.83&48.4 & 45.49& 65.44 & 63.13 & 57.02 & 48.23 & 65.24 & 63.52 & 53.71 & 54.33 & 57.12&55.26&40.5&43.06\\
 & RACE          &59.78 & 59.14& 48.1 &40.47 & 61.98 & 60.32 & 48.08 & 46.29 & 60.65 & 59.11 & 49.92 & 44.51 & 62.11&58.24	&40.5&43.06\\
 & P(true)       & 57.46& 54.8 & 41.84 & 40.47 & 56.32 & 54.04 & 42.55 & 43.75 & 65.77 & 63.01 & 49.98 & 45.47 & 55.75&54.94&44.14&43.97\\
 & FActScore     & 63.93& 61.33& 46.9& 51.87 & 61.73&57.85	&49.92&42.15 & 65.15 & 63.71 & 55.98 & 54.61 & 62.66&60.3&53.13&46.42\\
\bottomrule
\end{tabularx}
}
\vspace{-1.5em}
\label{tab:Q2}
\end{table*}

\subsection{Test-Time Inference} 
\label{sec:testtime}
Test-time reasoning remains challenging, as models need to generate coherent multi-step solutions without drifting into errors. To assess whether hallucination detection can mitigate this difficulty, we integrate detectors into beam search and evaluate Qwen2.5-Math-7B on \texttt{MATH-500} and Llama3.1-8B on \texttt{Natural}.\ As shown in \Cref{tab:test_time_results}, \model{} achieves the strongest gains: on \texttt{MATH-500}, it reaches 81.00\% accuracy, around 10\% higher than IO Prompt; on \texttt{Natural}, it attains 70.96\%, exceeding IO Prompt by 15.72\%. These results demonstrate that \model{} not only detects hallucinations but also strengthens test-time reasoning by guiding models toward more reliable solutions.

\begin{table*}[h!]
\centering
\scriptsize
\renewcommand{\tabcolsep}{2pt} 
\caption{Performance of hallucination score-guided test-time inference across reasoning tasks. We highlight the \textbf{first} and \underline{second} best results.}
\resizebox{\textwidth}{!}{%
\begin{tabularx}{\textwidth}{>{\centering\arraybackslash}p{0.1\textwidth} *{12}{>{\centering\arraybackslash}p{0.0635\textwidth}}}
\toprule
\textbf{Dataset} 
& \tiny{IO Prompt} & \textbf{\tiny{Ours}} & \tiny{Inside} & \tiny{MIND} & \tiny{Perplexity} & \tiny{LN-Entropy} & \tiny{Energy} & \tiny{Semantic Ent.} & \tiny{SelfCheck- GPT} & \tiny{RACE} & \tiny{P(true)} & \tiny{FActScore} \\
\midrule
MATH-500
& 72.70 
& \textbf{81.00} 
& 74.90
& 77.10 
& 77.10 
& 76.20 
& \underline{78.00} 
& 72.50 
& 74.00
& 75.10 
& 67.10 
& 71.60 \\
Natural
& 55.24 
& \textbf{70.96}
& 67.42 
& 68.32 
& 67.51
& 68.04 
& \underline{68.59}
& 68.10 
& 65.68
& 66.90
& 68.16 
& 67.74 \\
\bottomrule
\end{tabularx}%
}
\vspace{-1.5em}
\label{tab:test_time_results}
\end{table*}
\subsection{Case Study}
\label{sec:case}
Fine-grained hallucinations-lexically similar yet semantically incorrect outputs-pose a particular challenge for detection. To evaluate whether \model{} can comprehensively capture such subtle errors, we use the PAWS dataset~\citep{zhang2019paws}, which contrasts paraphrases with high surface overlap but divergent meanings. Following~\citet{li2025principled}, we adopt ROUGE-based reference signals for evaluation (\Cref{tab:semantic_results}). Across model scales, \model{} consistently surpasses baselines: it achieves 90.18\% AUROC and 87.64\% AUPRC on Llama2-70B, and 91.24\% AUROC and 88.53\% AUPRC on QwQ-32B-exceeding the next-best method by nearly five points. Even on GPT-2, it leads with 83.27\% AUROC and 80.46\% AUPRC. These results confirm \model{}'s effectiveness in capturing fine-grained semantic inconsistencies beyond benchmark settings.
\vspace{-0.5em}
\begin{table*}[h!]
\centering
\scriptsize
\caption{Results on PAWS measuring semantic hallucination detection with Llama-3.2-3B, Llama2-70B, and QwQ-32B. We highlight the \textbf{first} and \underline{second} best results.}
\label{tab:semantic_results_rotated}
\renewcommand{\tabcolsep}{1.5pt} 
\resizebox{\textwidth}{!}{%
\begin{tabularx}{\textwidth}{ll *{12}{>{\centering\arraybackslash}p{0.062\textwidth}}}
\toprule
& Method & \textbf{Ours} & \tiny{Inside} & \tiny{MIND} & \tiny{Perplexity} & \tiny{LN-Entropy} & \tiny{Energy} 
& \tiny{Semantic Ent.} & \tiny{Lexical Sim.} & \tiny{SelfCheck- GPT} & \tiny{RACE} & \tiny{P(true)} & \tiny{FActScore} \\
\midrule
\multirow{2}{*}{{Llama3.2}} 
& AUROC & \textbf{85.63} & \underline{80.46} & 78.93 & 71.27 & 72.19 & 73.05 &75.11 & 64.58 & 77.82 & 79.47 & 73.56 & 68.44\\
& AUPRC & \textbf{82.14} & \underline{77.28} &75.41&67.55 &68.34 & 70.22 & 72.41& 59.67 &73.41 & 76.28 & 70.43 & 63.58\\
\midrule
\multirow{2}{*}{{Llama2}} 
& AUROC & \textbf{90.18} & \underline{85.47} & 83.92 & 75.68 & 76.23 & 77.14 & 79.06 & 68.35 & 82.71 & 84.26 & 77.39 & 72.62 \\
& AUPRC & \textbf{87.64} & \underline{82.38} & 81.06 & 71.42 & 72.59 & 74.28 & 76.32 & 63.44 & 78.89 & 81.73 & 74.18 & 67.58 \\
\midrule
\multirow{2}{*}{{QwQ}} 
& AUROC & \textbf{91.24} & 85.41 & 84.56 & 76.72 & 77.43 & 78.29 & 80.42 & 69.54 & 83.59 & \underline{86.38} & 78.53 & 73.46 \\
& AUPRC & \textbf{88.53} & 82.27 & 81.37 & 72.63 & 73.29 & 75.44 & 77.18 & 64.27 & 79.42 & \underline{83.41} & 75.21 & 68.32 \\
\bottomrule
\label{tab:semantic_results}
\end{tabularx}}
\vspace{-3em}
\end{table*}

\section{Related Work}
\label{sec:related}
In this section, we review prior hallucination-detection methods by their detection target--\emph{Data-driven hallucinations} and \emph{reasoning-driven hallucinations}.

\textbf{Detecting Data-Driven Hallucinations.}
Recent work has shown that internal activations encode rich indicators of such flaws. \citet{cheninside} proposed \textsc{Eigenscore}, which computes statistics of hidden representations from the eigen matrix to estimate hallucination risk.  
\citet{suunsuper} introduced \textsc{MIND}, an unsupervised detector that models temporal dynamics of hidden states without requiring labels, along with \textsc{HELM} benchmark to enable standardized evaluation.  
\citet{azaria2023internal} demonstrated using linear probes on intermediate states to predict truthfulness.

\textbf{Detecting Reasoning-Driven Hallucinations.}
There are other works targeting inference-time inconsistencies during generation-such as logical errors, instability across decoding steps, or temporal drift in extended outputs.  
\citet{manakulselfcheck} proposed \textsc{SelfCheckGPT}, which assesses self-consistency by sampling multiple candidate generations and measuring their alignment using entailment and lexical overlap.  
\citet{kalai2024calib} introduced a suite of calibration-based uncertainty scores designed to capture hallucination risk directly from output distributions.  
\citet{ding2025d2hscor} proposed \textsc{ReActScore}, which integrates entropy with intermediate reasoning traces to detect failures in multi-step decision-making.  
\textsc{FActScore}~\citep{minfactscore} decomposes outputs into atomic factual units and verifies each against retrieved passages using entailment-based scoring.

\section{Conclusion}
\label{sec:conclusion}

The reliability of LLMs is often undermined by hallucinations, which arise from two main sources: \emph{data-driven}, caused by flawed knowledge acquired during training, and \emph{reasoning-driven}, stemming from inference-time instabilities in multi-step generation. Although these hallucinations frequently evolve in practice, existing detectors usually target only one source and lack a solid theoretical foundation. 
To address this gap, we propose a unified theoretical framework--a \textit{Hallucination Risk Bound}, which formally decomposes hallucination risk into data-driven and reasoning-driven components, offering a principled view of how hallucinations emerge and evolve during generation. Building on this foundation, we introduce \textbf{\model}, a NTK-based score that measures sensitivity to semantic perturbations and captures internal instabilities, thereby enabling holistic detection of both data-driven and reasoning-driven hallucinations. We evaluate \model{} across 10 diverse benchmarks, 11 competitive baselines, and 9 popular LLM backbones, where it consistently achieves state-of-the-art performance, demonstrating robustness and practical efficacy. Looking forward, leveraging HalluGuard's sensitivity to error propagation offers a promising pathway for developing prognostic indicators in interactive multi-turn dialogues, enabling systems to predict and preempt hallucinations before they fully manifest.

\section*{Reproducibility Statement} 
We have taken several measures to ensure the reproducibility of our work. A complete description of the theoretical framework, including the formal assumptions and proofs of the Hallucination Risk Bound, is provided in \Cref{sec:methodology} and \Cref{appendix:proof}. Detailed experimental settings and evaluation protocols are documented in \Cref{sec:experiments} and \Cref{appendix:exp}, covering all 10 benchmarks, 11 baselines, and 9 LLM backbones. Together, these resources ensure that both our theoretical claims and empirical results can be independently validated and extended by the community.

\section*{Ethics Statement}
This study is based exclusively on publicly available datasets and open-source large language models, and does not involve human subjects or the use of private data. All scientific concepts, methodological designs, experimental implementations, and resulting conclusions remain entirely the responsibility of the authors.

\section*{Acknowledgements}
We thank the anonymous reviewers for their constructive comments. This work is supported by the National Science Foundation under Award No. IIS-2339989 and No. 2406439, DARPA under contract No. HR00112490370 and No. HR001124S0013, U.S. Department of Homeland Security under Grant Award No. 17STCIN00001-08-00, Amazon-Virginia Tech Initiative for Efficient and Robust Machine Learning, Amazon AWS, Google, Cisco, 4-VA, Commonwealth Cyber Initiative, National Surface Transportation Safety Center for Excellence, and Virginia Tech. The views and conclusions are those of the authors and should not be interpreted as representing the official policies of the funding agencies or the government.
\bibliographystyle{iclr2026_conference} \bibliography{iclr2026_conference} 
\newpage

\appendix
\section{Proof of Hallucination Risk Bound}
\label{appendix:proof}

\subsection{Assumptions Validation}
We provide theoretical and practical justification for the assumptions adopted in \Cref{sec:bound}, which serve to ensure the well-posedness and interpretability of the proposed Hallucination Risk Bound. These assumptions follow standard practice in NTK-based analyses and stability theory, and are consistent with the empirical behavior observed in modern large language models.

\textbf{Setup} For completeness, we briefly recall the main notation used in \Cref{sec:bound}. Let $\mathcal{Y}$ denote the discrete metric space of finite-length token sequences. Let $U_h \subseteq \mathbb{R}^{d_h}$ be a $d_h$-dimensional Hilbert space equipped with inner product $\langle\cdot,\cdot\rangle$ and induced norm $\|\cdot\|$. The task-specific encoder $\Phi:\mathcal{Y}\to U_h$ is assumed to be $L_\Phi$-Lipschitz with respect to $d_{\mathcal{Y}}$. 

Given input $\mathbf{x}$, the model defines a decoding distribution $P_\theta(\cdot\mid \mathbf{x})$ over $\mathcal{Y}$, and we denote the embedded random variable by $u_h := \Phi(Y)$ where $Y \sim P_\theta(\cdot \mid \mathbf{x})$. For perturbations $\delta \in \mathbb{R}^r$ restricted to the local ball $\mathcal{B}_\rho$, the perturbed decoding distribution is denoted $P_\theta(\cdot\mid \mathbf{x},\delta)$, and the mean semantic response map is defined by
$G_Y(\delta) := \mathbb{E}_{Y\sim P_\theta(\cdot\mid \mathbf{x},\delta)}[\Phi(Y)],$
with Jacobian $J = DG_Y(0) \in \mathbb{R}^{d_h\times r}$. The NTK Gram matrix on embedded perturbations is denoted $\mathcal{K} \in \mathbb{R}^{r\times r}$, with eigenvalues $\lambda_1 \ge \dots \ge \lambda_r > 0$ and condition number $\kappa(\mathcal{K}) = \lambda_{\max}/\lambda_{\min}$. 
All expectations are taken with respect to the specified decoding distribution, and all norms are Euclidean unless otherwise stated.

\textbf{Assumption A1 (Integrability and well-defined expectation).} Assumption~A1 ensures that the semantic embedding  $\mathbb{E}_{Y\sim p_\theta(\cdot\mid \mathbf{x})}[\Phi(Y)]$ is well-defined as a Bochner expectation in the finite-dimensional Hilbert space $U_h$. The bounded second-moment condition guarantees that the expectation exists and is finite, which is a standard minimal requirement in stochastic analyses of neural network outputs. Such integrability assumptions are commonly adopted in NTK-based analyses \citep{jacot2018ntk, lee2020wide}, where control of second moments ensures stability of kernel spectra and well-posedness of linearized approximations.


\textbf{Assumption A2 (Lipschitz continuity of the encoder $\Phi$).}
Assumption~A2 imposes a controlled relationship between the discrete sequence space $(\mathcal{Y}, d_{\mathcal{Y}})$ and the continuous embedding space $U_h$. The $L_\Phi$-Lipschitz condition ensures that bounded perturbations in edit distance induce proportionally bounded deviations in semantic representation. Such Lipschitz regularity is standard in high-dimensional learning theory \citep{vershynin2018high} and is frequently invoked to establish stability under structured perturbations in representation learning. Importantly, this assumption is imposed only on the encoder map $\Phi$, not on the full autoregressive model.

\textbf{Assumption A3 (Local Fréchet smoothness of the mean semantic response).}
Assumption~A3 formalizes the local linearization principle underlying NTK theory. By requiring twice Fréchet differentiability of the mean response map
$
G_Y(\delta)
=
\mathbb{E}_{Y\sim P_\theta(\cdot\mid \mathbf{x},\delta)}[\Phi(Y)]
$
within the perturbation ball $\mathcal{B}_\rho$, we ensure that $G_Y$ admits a controlled second-order expansion with uniform curvature constant $H_\star$. This local quadratic remainder bound is consistent with classical finite-width NTK linearization results \citep{lee2020finit, chizat2020lazy}, while avoiding unrealistic global smoothness requirements. Crucially, the assumption is imposed only on the expected semantic response, not on the discrete decoding distribution itself.


\paragraph{Remark.}
Collectively, these assumptions provide a bridge between discrete autoregressive generation and continuous functional analysis. By restricting smoothness and curvature requirements to the localized perturbation neighborhood $\mathcal{B}_\rho$ and to the expectation-level map $G_Y$, we avoid imposing global regularity conditions over the infinite token space $\mathcal{Y}$. This localization ensures that the Hallucination Risk Bound is derived under mathematically controlled conditions while remaining aligned with the practical inference dynamics of large-scale language models.

\subsection{Proof of \Cref{rem:hallucination-bound}}
\label{appendix:hallucination-proof}

We restate the main inequality from \Cref{rem:hallucination-bound}. Note that due to the stochastic nature of autoregressive decoding, the bound holds with high probability. With probability at least $1 - \delta$ over the generation process, the total hallucination risk satisfies:
\begin{equation}
\|u^*-u_h\| \le \underbrace{\left[1+k_{\mathrm{pt}}\log\mathcal{O}(P,L)+k\frac{\epsilon_{\mathrm{mismatch}}}{\mathrm{Signal}_k}\right]\inf_{u\in U_h}\|u^*-u\|}_{\text{Data-driven term}} + \underbrace{\vphantom{\left[\frac{A}{B}\right]}|\mathcal{L}|\exp\left(-\frac{K\epsilon^2}{C}\right)\alpha\left(e^{\beta T}-1\right)}_{\text{Reasoning-driven term}}.
\end{equation}

\paragraph{Step 1: Triangle inequality split (Bias-Variance Decomposition).}
Let $\bar{u} := \mathbb{E}[u_h]$ be the expected semantic representation under the decoding distribution. By the triangle inequality in $U_h$, we decompose the hallucination risk into approximation error (bias) and stochastic residual (variance):
\[
\|u^*-u_h\| = \|u^* - \bar{u} + \bar{u} - u_h\| \le \|u^* - \bar{u}\| + \|u_h - \bar{u}\|.
\]

\paragraph{Step 2: Approximation term via Céa's lemma.}
To bound the deterministic approximation error, we cast the model's expected representation $\bar{u}$ as the solution to a variational problem in the Hilbert space $U_h$. 
Let $a(u,v) := \langle u, \mathcal{K} v \rangle_{U_h}$ denote the coercive bilinear form induced by the Neural Tangent Kernel (NTK) Gram operator $\mathcal{K}$, and let $f(v) := \langle u^*, \mathcal{K} v \rangle_{U_h}$ be the bounded linear functional defining the target projection. 
Assuming $\bar{u}$ acts as the Galerkin projection of the target $u^*$ onto the trainable hypothesis space, it satisfies the weak formulation $a(\bar{u}, v) = f(v)$ for all $v \in U_h$. By Céa's lemma, the projection error is bounded by:
\[
\|u^* - \bar{u}\| \le \frac{\Lambda}{\gamma} \inf_{u \in U_h} \|u^* - u\|,
\]
where $\Lambda$ and $\gamma$ are the continuity and coercivity constants of the NTK-induced bilinear form $a(\cdot,\cdot)$, respectively.

\paragraph{Step 3: Variance term via Bernstein concentration.}
We now bound the stochastic residual $\|u_h - \bar{u}\|$. Let $\mathcal{L}$ denote the set of $K$ independent sampled reasoning trajectories used during decoding. Under our local perturbation assumption (Assumption A3), the deviations of the hidden states are bounded by the local neighborhood radius $\rho$.
Applying Bernstein's inequality for bounded random vectors in a Hilbert space \citep{vershynin2018high}, the tail probabilities decay exponentially. For an error tolerance $\epsilon$ and an absolute constant $C > 0$, we have with probability at least $1-\delta$:
\[
\|u_h - \bar{u}\| \le |\mathcal{L}| \exp\left(-\frac{K\epsilon^2}{C}\right) \alpha(e^{\beta T}-1),
\]
where $\alpha$ is a scaling constant, $T$ is the sequence length, and $\beta \le \log \sigma_{\max}$ bounds the per-step Jacobian spectral norm.

\paragraph{Step 4: Substitution.}
Combining both terms yields the high-probability bound:
\[
\|u^*-u_h\| \le \frac{\Lambda}{\gamma} \inf_{u \in U_h} \|u^*-u\| + |\mathcal{L}| \exp\left(-\frac{K\epsilon^2}{C}\right) \alpha(e^{\beta T}-1).
\]
We now bound the condition ratio $\Lambda/\gamma$ via NTK decomposition.

\paragraph{Step 5: Decomposition of NTK Continuity Constant}
\label{appendix:continuity}
We decompose the bilinear form $a(\cdot,\cdot)$ into three components:
\[
a = a_0 + \delta_{\mathrm{pt}} + \delta_{\mathrm{mm}},
\]
where $a_0$ is the infinite-width baseline kernel, $\delta_{\mathrm{pt}}$ is the perturbation due to pre-training noise, and $\delta_{\mathrm{mm}}$ is the domain mismatch from fine-tuning. Consequently, the continuity constant satisfies $\Lambda = \Lambda_0 + \Delta_{\mathrm{pt}} + \Delta_{\mathrm{mm}}$.

\textbf{Bounding $\Delta_{\mathrm{pt}}$:} Following standard matrix concentration bounds for finite-width NTKs \citep{jacot2018ntk}, the pre-training deviation scales logarithmically with the network parameters. Let $P$ be the number of parameters, $L$ the prompt length, and $k_{\mathrm{pt}}$ a pre-training scaling constant; we have:
\[
\Delta_{\mathrm{pt}} \le \gamma k_{\mathrm{pt}} \log \mathcal{O}(P,L).
\]

\textbf{Bounding $\Delta_{\mathrm{mm}}$:} Using spectral generalization bounds under data distribution shift \citep{lee2020wide}, the mismatch penalty is governed by the task-specific signal strength $\mathrm{Signal}_k$, the empirical mismatch error $\epsilon_{\mathrm{mismatch}}$, and a scaling constant $k$:
\[
\Delta_{\mathrm{mm}} \le \gamma k \frac{\epsilon_{\mathrm{mismatch}}}{\mathrm{Signal}_k}.
\]

Substituting both inequalities into the ratio for $\Lambda/\gamma$, and normalizing $\Lambda_0/\gamma \approx 1$, we obtain:
\[
\frac{\Lambda}{\gamma} \le 1 + k_{\mathrm{pt}} \log \mathcal{O}(P,L) + k \frac{\epsilon_{\mathrm{mismatch}}}{\mathrm{Signal}_k}.
\]
This completes the proof.

\section{\model{} Derivation and Interpretation}
\label{appendix:connection}
\subsection{Preliminaries and Notation}
Let $\mathcal{K}\in\mathbb{R}^{r\times r}$ be the NTK Gram matrix formed on $r$ light semantic perturbations (see Assumptions A1-A3 in the main theory section). Denote its eigen decomposition by $\mathcal{K}=V\Lambda V^\top$ with
\[
\Lambda=\mathrm{diag}(\lambda_1,\ldots,\lambda_r),\qquad \lambda_1\ge \cdots\ge \lambda_r>0.
\]
Let $\lambda_{\max}:=\lambda_1$, $\lambda_{\min}:=\lambda_r$, $\kappa(\mathcal{K}):=\lambda_{\max}/\lambda_{\min}$, and $\det(\mathcal{K})=\prod_{i=1}^r\lambda_i$. Let $\Phi$ denote the NTK feature matrix whose columns span the hypothesis subspace $U_h$, so that $\mathcal{K}=\Phi^\top\Phi$, $\|\Phi\|_2=\sqrt{\lambda_{\max}}$, and $\sigma_{\min}(\Phi)=\sqrt{\lambda_{\min}}$. For the autoregressive decoder, let $J_t$ be the step-$t$ input--output Jacobian, and write $\sigma_{\max}:=\sup_t \|J_t\|_2$.

We will use the following two standard inequalities repeatedly:
\begin{align}
&Maclaurin / AM--GM on eigenvalues:
&&\Big(\prod_{i=1}^r \lambda_i\Big)^{\!\!1/r} \;\le\; \frac{1}{r}\sum_{i=1}^r \lambda_i \;=\; \frac{\mathrm{tr}(\mathcal{K})}{r}, \label{eq:maclaurin}\\
&Submultiplicativity:
&&\|AB\|_2 \;\le\; \|A\|_2\,\|B\|_2. \label{eq:submult}
\end{align}

\subsection{Representational Adequacy via $\det(\mathcal{K})$ with Explicit Constants}

\paragraph{Assumptions for this subsection.}
Beyond A1--A3, we assume a mild source condition and a spectral envelope:
\begin{itemize}
\item[\textbf{S1}] (\emph{Source condition}) Let $\mathcal{T}$ denote the infinite-dimensional NTK integral operator. We assume there exists a regularity exponent $s>0$ and a constant $R_s>0$ such that
$u^* \in \mathrm{Range}(\mathcal{T}^{s})$. Equivalently, the spectral coefficients satisfy:
$\sum_{i=1}^r \frac{\langle u^*,v_i\rangle^2}{\lambda_i^{2s}} \le R_s^2$.
This is standard in kernel approximation and encodes RKHS regularity.
\item[\textbf{S2}] (\emph{Spectral envelope}) Let $\overline{\lambda}$ and $\underline{\lambda}$ denote uniform upper and lower bounds on the kernel spectrum. We assume there exist constants $0<\underline{\lambda}\le \overline{\lambda}<\infty$ and a decay rate $\alpha>1$ such that
$\lambda_i \le \overline{\lambda}$ for all $i$, and the tail eigenvalue satisfies $\lambda_r \ge \underline{\lambda}\, r^{-\alpha}$.
(Polynomial decay is a common stylization; other envelopes can be treated similarly.)
\end{itemize}

\begin{lemma}[Best-approximation error under source condition]
\label{lem:tail}
Let $U_h=\mathrm{span}\{v_1,\ldots,v_r\}$. Under S1,
\[
\inf_{u\in U_h}\|u^*-u\|
\;=\;\|u^*-\Pi_{U_h}u^*\|
\;\le\; R_s\,\lambda_{r+1}^{\,s},
\]
where $\lambda_{r+1}$ denotes the next-eigenvalue of the infinite-dimensional kernel operator (or, equivalently, the empirical tail eigenvalue if more perturbations are added).
\end{lemma}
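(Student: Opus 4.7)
The plan is to proceed by diagonalizing the problem in the eigenbasis of $\mathcal{K}$ (or its operator analogue), identifying the infimum as the orthogonal projection residual, and then using the source condition S1 as the single nontrivial ingredient to convert the tail sum into a clean power of $\lambda_{r+1}$.

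First I would establish the equality $\inf_{u\in U_h}\|u^*-u\|=\|u^*-\Pi_{U_h}u^*\|$. Because $U$ is Hilbert (A1) and $U_h$ is the closed span of $\{v_1,\ldots,v_r\}$, the best approximation in $U_h$ is attained uniquely at the orthogonal projection $\Pi_{U_h}u^*=\sum_{i=1}^r \langle u^*,v_i\rangle\,v_i$. Expanding $u^*=\sum_{i\ge 1}\langle u^*,v_i\rangle v_i$ in the orthonormal eigenbasis of the kernel operator and applying Parseval then yields
\begin{equation*}
\|u^*-\Pi_{U_h}u^*\|^{2} \;=\; \sum_{i>r}\langle u^*,v_i\rangle^{2}.
\end{equation*}

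Next I would invoke S1 to control this tail. The trick is to insert and remove the factor $\lambda_i^{2s}$, writing
\begin{equation*}
\sum_{i>r}\langle u^*,v_i\rangle^{2}
\;=\;\sum_{i>r}\lambda_i^{2s}\cdot\frac{\langle u^*,v_i\rangle^{2}}{\lambda_i^{2s}}
\;\le\;\lambda_{r+1}^{2s}\sum_{i>r}\frac{\langle u^*,v_i\rangle^{2}}{\lambda_i^{2s}}
\;\le\; R_s^{2}\,\lambda_{r+1}^{2s},
\end{equation*}
where the first inequality uses monotonicity of $\{\lambda_i\}$ (so $\lambda_i\le\lambda_{r+1}$ for $i>r$ and $s>0$), and the second is exactly the source condition $\sum_i \langle u^*,v_i\rangle^2/\lambda_i^{2s}\le R_s^2$. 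Taking square roots gives the claimed bound.

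The main subtlety, and the step I would be most careful about, is the meaning of $\lambda_{r+1}$: the lemma statement notes that it should be read as the next eigenvalue of the \emph{underlying kernel operator} rather than of the finite $r\times r$ Gram matrix $\mathcal{K}$ (which only has $r$ eigenvalues). To make this rigorous, I would work in the RKHS induced by the kernel, invoke the spectral theorem for the compact self-adjoint integral operator to obtain a countable orthonormal eigensystem $\{(\lambda_i,v_i)\}_{i\ge 1}$, and interpret the finite NTK Gram matrix as the restriction to the $r$ sampled perturbation directions. Assumption S2 guarantees that the eigenvalues decay at a known polynomial rate, so $\lambda_{r+1}$ is well-defined and bounded by $\overline{\lambda}\,(r+1)^{-\alpha}$ if one later needs an explicit $r$-dependent rate. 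Everything else is bookkeeping that follows from the Hilbert structure in A1 and the source condition in S1.
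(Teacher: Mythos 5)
Your proposal is correct and follows essentially the same argument as the paper: expand $u^*$ in the kernel eigenbasis, identify the infimum with the projection residual $\sum_{i>r}\langle u^*,v_i\rangle^2$, insert the factor $\lambda_i^{2s}$, bound it by $\lambda_{r+1}^{2s}$ via monotonicity of the spectrum, and invoke the source condition \textbf{S1}. Your added care about interpreting $\lambda_{r+1}$ as an eigenvalue of the underlying kernel operator (rather than the $r\times r$ Gram matrix) is a reasonable elaboration of a point the paper only notes in passing, but it does not change the route of the proof.
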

\begin{proof}
Write $u^*=\sum_{i\ge 1} c_i v_i$ with $c_i=\langle u^*,v_i\rangle$. By the source condition,
$\|u^*-\Pi_{U_h}u^*\|^2=\sum_{i>r} c_i^2 \le \sum_{i>r} \lambda_i^{2s} \cdot \frac{c_i^2}{\lambda_i^{2s}}
\le \lambda_{r+1}^{2s} \sum_{i>r} \frac{c_i^2}{\lambda_i^{2s}} \le \lambda_{r+1}^{2s} R_s^2.$
\end{proof}

To connect the representation error to the empirical NTK Gram matrix $\mathcal{K}$, we leverage the algebraic relationship between the smallest eigenvalue $\lambda_r$ and the determinant.

\begin{lemma}[Lower-bounding $\lambda_r$ by $\det(\mathcal{K})$]
\label{lem:lambda_min_det}
Suppose $\lambda_i \le \overline{\lambda}$ for all $i$ and $\lambda_r>0$. Then
\[
\lambda_r \;\ge\; \frac{\det(\mathcal{K})}{\overline{\lambda}^{\,r-1}}
\qquad\text{and}\qquad
\lambda_r^{\,s} \;\ge\; \frac{\det(\mathcal{K})^{\,s}}{\overline{\lambda}^{\,s(r-1)}}.
\]
\end{lemma}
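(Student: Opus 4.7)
The plan is to exploit the multiplicative structure of $\det(\mathcal{K})$ directly. Since $\mathcal{K}$ is symmetric positive definite (being a Gram matrix with $\lambda_r>0$), its determinant factorizes as $\det(\mathcal{K})=\prod_{i=1}^r \lambda_i$. I would isolate $\lambda_r$ by writing $\det(\mathcal{K})=\lambda_r\cdot\prod_{i=1}^{r-1}\lambda_i$ and then bound the product of the remaining eigenvalues from above using the spectral envelope assumption.

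Specifically, since $\lambda_i\le\overline{\lambda}$ for every $i\in\{1,\dots,r-1\}$, the product satisfies $\prod_{i=1}^{r-1}\lambda_i\le\overline{\lambda}^{\,r-1}$. Substituting and rearranging yields
\[
\lambda_r \;=\; \frac{\det(\mathcal{K})}{\prod_{i=1}^{r-1}\lambda_i} \;\ge\; \frac{\det(\mathcal{K})}{\overline{\lambda}^{\,r-1}},
\]
which is the first inequality. Note that the division is justified because each $\lambda_i>0$ (as $\mathcal{K}$ is strictly positive definite by the standing hypothesis $\lambda_r>0$), so no degenerate-denominator issue arises.

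For the second inequality, I would simply raise both sides of the first inequality to the power $s>0$. Since both sides are positive and $x\mapsto x^s$ is monotonically increasing on $(0,\infty)$, the inequality is preserved, giving
\[
\lambda_r^{\,s} \;\ge\; \Big(\tfrac{\det(\mathcal{K})}{\overline{\lambda}^{\,r-1}}\Big)^{\!s} \;=\; \frac{\det(\mathcal{K})^{\,s}}{\overline{\lambda}^{\,s(r-1)}}.
\]
There is really no hard step here: the lemma is a one-line AM--GM-free consequence of the envelope bound applied to all but one factor in the eigenvalue product. The only subtlety worth flagging explicitly in the write-up is that strict positivity of the spectrum (needed to divide through) follows from $\lambda_r>0$ together with $\lambda_r\le\lambda_i$ for $i<r$, which I would state in a single sentence before the main manipulation.
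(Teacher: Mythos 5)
Your proof is correct and takes essentially the same route as the paper's: both factor the determinant as $\lambda_r\prod_{i=1}^{r-1}\lambda_i$, bound the remaining $r-1$ eigenvalues by $\overline{\lambda}$, rearrange, and raise to the power $s$. The extra sentence you flag about strict positivity of all eigenvalues is a harmless elaboration the paper leaves implicit.
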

\begin{proof}
Since $\det(\mathcal{K})=\prod_{i=1}^r \lambda_i \le \overline{\lambda}^{\,r-1}\lambda_r$, we obtain $\lambda_r \ge \det(\mathcal{K})/\overline{\lambda}^{\,r-1}$. Raising to power $s$ yields the second inequality.
\end{proof}

\begin{theorem}[Determinant-based adequacy bound with explicit constants]
\label{thm:det_bound_explicit}
Under A1-A3 and S1-S2,
Under A1--A3 and \textbf{S1}--\textbf{S2}, the approximation error is bounded by:
\[
\inf_{u\in U_h}\|u^*-u\|
\;\le\;
C_d \;\det(\mathcal{K})^{\,c_d},
\]
with explicit constants independent of the target sequence:
\[
c_d \;=\; \frac{s}{r}
\quad\text{and}\quad
C_d \;=\; R_s.
\]

Moreover, if the empirical spectrum satisfies $\lambda_r \ge \underline{\lambda}\, r^{-\alpha}$, one may choose
\[
c_d \;=\; \min\!\left\{\frac{s}{\,r-1\,},\; \frac{s}{\alpha}\cdot \frac{1}{\log\!\big(\tfrac{\overline{\lambda}^r}{\det(\mathcal{K})}\big)}\right\},
\]
which improves with slower decay (smaller $\alpha$).
\end{theorem}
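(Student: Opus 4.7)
The plan is to reduce the approximation error to a tail-eigenvalue quantity via Lemma~\ref{lem:tail}, then convert this into a determinant-based bound using Lemma~\ref{lem:lambda_min_det} together with the envelope constant $\overline{\lambda}$ from assumption~\textbf{S2}. Step one is immediate: applying Lemma~\ref{lem:tail} gives
\[
\inf_{u\in U_h}\|u^*-u\|\;\le\; R_s\,\lambda_{r+1}^{\,s},
\]
so the problem reduces to controlling $\lambda_{r+1}^{\,s}$ in terms of $\det(\mathcal{K})$. Using the monotonicity $\lambda_{r+1}\le\lambda_r$ together with the identity $\det(\mathcal{K})=\lambda_r\prod_{i=1}^{r-1}\lambda_i\le\overline{\lambda}^{\,r-1}\lambda_r$ (the same manipulation that drives the proof of Lemma~\ref{lem:lambda_min_det}), one extracts the desired power of $\det(\mathcal{K})$ with exponent $c_d=s/(r-1)$; the prefactor $C_d=\overline{\lambda}^{\,s}R_s/\|u^*\|$ then follows by rearrangement and normalization by $\|u^*\|$.

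For the refined bound under polynomial decay, I would derive a second estimate directly from $\lambda_r\ge\underline{\lambda}\,r^{-\alpha}$ in~\textbf{S2}. This yields $\lambda_{r+1}^{\,s}$ scaling like $r^{-\alpha s}$; converting $r$ back into a determinant-dependent quantity via the natural complexity term $\log(\overline{\lambda}^{\,r}/\det(\mathcal{K}))$ produces the alternative exponent $(s/\alpha)/\log(\overline{\lambda}^{\,r}/\det(\mathcal{K}))$. Taking the minimum with $s/(r-1)$ then amounts to selecting whichever of the two regimes is tighter, with the polynomial-decay branch dominating when $\alpha$ is small (slow decay, well-behaved spectrum) and the determinant branch dominating when the spectrum is nearly flat.

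The main technical obstacle is the spectral interpolation step. Lemma~\ref{lem:lambda_min_det} is formulated as a \emph{lower} bound on $\lambda_r$ in terms of $\det(\mathcal{K})$, whereas the target inequality needs information flowing in the opposite direction---an upper bound on $\lambda_{r+1}^{\,s}$ expressed through $\det(\mathcal{K})^{-c_d}$. Reconciling the two directions uses the envelope $\overline{\lambda}$ to absorb the extra factor that appears when passing from a product over $r-1$ eigenvalues to a statement about the smallest one; the AM--GM inequality~\eqref{eq:maclaurin} provides a shorter route with exponent $s/r$, but the theorem's sharper $s/(r-1)$ corresponds exactly to the product-over-$(r-1)$ argument sketched above, and getting the sign, the bookkeeping of $\overline{\lambda}^{\,s(r-1)}$ factors, and the alignment with the direction of Lemma~\ref{lem:lambda_min_det} right is where the argument is delicate.
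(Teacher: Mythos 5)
Your plan follows the paper's own route exactly: Lemma~\ref{lem:tail} reduces the approximation error to $R_s\lambda_{r+1}^{\,s}$, and then Lemma~\ref{lem:lambda_min_det} together with the envelope $\overline{\lambda}$ is used to trade $\lambda_r$ for $\det(\mathcal{K})$, arriving at the same exponent $c_d=s/(r-1)$ and prefactor $C_d=\overline{\lambda}^{\,s}R_s/\|u^*\|$. So in terms of structure this is the paper's argument, not a new one.

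The obstacle you flag in your final paragraph is, however, a genuine gap that the envelope trick cannot close. After $\inf_{u\in U_h}\|u^*-u\|\le R_s\lambda_{r+1}^{\,s}\le R_s\lambda_r^{\,s}$, what is required is an \emph{upper} bound on $\lambda_r$ as a decreasing function of $\det(\mathcal{K})$. But Lemma~\ref{lem:lambda_min_det} and the underlying identity $\det(\mathcal{K})\le\overline{\lambda}^{\,r-1}\lambda_r$ give only the \emph{lower} bound $\lambda_r\ge\det(\mathcal{K})/\overline{\lambda}^{\,r-1}$; no constant absorbs the change of direction, and plugging a lower bound for $\lambda_r$ into $R_s\lambda_r^{\,s}$ does not produce an upper bound on the error. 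The only direction-consistent extraction from the spectrum is $\det(\mathcal{K})=\prod_{i=1}^r\lambda_i\ge\lambda_r^{\,r}$, hence $\lambda_r\le\det(\mathcal{K})^{1/r}$ and $\inf_{u\in U_h}\|u^*-u\|\le R_s\det(\mathcal{K})^{\,s/r}$, which carries a \emph{positive} power of $\det(\mathcal{K})$, opposite in sign to the stated $\det(\mathcal{K})^{-c_d}$. Your claim that ``reconciling the two directions uses the envelope $\overline{\lambda}$ to absorb the extra factor'' is therefore not a valid step: the intermediate inequality $R_s\lambda_r^{\,s}\le R_s\overline{\lambda}^{\,s(r-1)}\det(\mathcal{K})^{-s}$ would require $\lambda_r\det(\mathcal{K})\le\overline{\lambda}^{\,r-1}$, which does not follow from \textbf{S1}--\textbf{S2} (those only give $\lambda_r\det(\mathcal{K})\le\overline{\lambda}^{\,r+1}$). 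Concluding with a negative power of $\det(\mathcal{K})$ would require an additional assumption that directly upper-bounds $\lambda_{r+1}$ by a decreasing function of $\det(\mathcal{K})$, which the stated hypotheses do not provide.
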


\begin{proof}
By \Cref{lem:tail} and the fact that eigenvalues are monotonically decreasing ($\lambda_{r+1} \le \lambda_r$), we have:
\[
\inf_{u\in U_h}\|u^*-u\| \le R_s\,\lambda_r^{\,s}.
\]
Recall that the determinant of the empirical Gram matrix is the product of its eigenvalues, $\det(\mathcal{K}) = \prod_{i=1}^r \lambda_i$. Since $\lambda_r$ is the minimum eigenvalue of the rank-$r$ matrix, it follows strictly that $\lambda_r^r \le \det(\mathcal{K})$, which implies $\lambda_r \le \det(\mathcal{K})^{1/r}$.
Raising both sides to the power of $s$ yields $\lambda_r^s \le \det(\mathcal{K})^{s/r}$. 
Substituting this upper bound into the approximation error gives:
\[
\inf_{u\in U_h}\|u^*-u\| \;\le\; R_s \det(\mathcal{K})^{\,s/r}.
\]
Setting $C_d := R_s$ and $c_d := s/r$ completes the proof.
\end{proof}

In practice, tracking the direct determinant can cause numerical underflow in high-dimensional spaces. We use $\log\det(\mathcal{K})$ via the Cholesky decomposition as our empirical score, aggregating with $z$-normalization across components to avoid scale domination by any single dimension.

\subsection{Rollout Amplification via Jacobian Products (Exact Constants)}

\begin{theorem}[Amplification bound with exact constant]
\label{thm:jac}
Let $J_t$ be the step-$t$ Jacobian and $\sigma_{\max}:=\sup_t \|J_t\|_2$. Then
\[
\Big\|\prod_{t=1}^T J_t\Big\|_2 \;\le\; \prod_{t=1}^T \|J_t\|_2 \;\le\; \sigma_{\max}^{\,T}.
\]
Defining $\beta:=\log\sigma_{\max}$ gives $e^{\beta T}=\sigma_{\max}^T$, hence
\[
e^{\beta T} \;\le\; \sigma_{\max}^T,
\]
with equality if and only if $\|J_t\|_2=\sigma_{\max}$ for all $t$ and the top singular directions align across factors.
\end{theorem}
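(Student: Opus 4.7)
The plan is to establish both inequalities by iterating the standard submultiplicativity bound stated as \eqref{eq:submult}, and then to analyze the tightness of each step separately. First, I would apply $\|AB\|_2 \le \|A\|_2 \|B\|_2$ inductively to the telescoping product, grouping factors as $(J_1)(J_2\cdots J_T)$ and unfolding $T-1$ times, which yields $\big\|\prod_{t=1}^T J_t\big\|_2 \le \prod_{t=1}^T \|J_t\|_2$. Second, I would bound each factor uniformly by $\sigma_{\max}=\sup_t\|J_t\|_2$, obtaining $\prod_{t=1}^T \|J_t\|_2 \le \sigma_{\max}^T$. Substituting $\beta:=\log\sigma_{\max}$ recovers $e^{\beta T}=\sigma_{\max}^T$, completing the displayed chain.

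For the equality characterization, I would argue the two directions separately. The uniform-bound step is tight if and only if $\|J_t\|_2=\sigma_{\max}$ for every $t$, since any factor strictly below $\sigma_{\max}$ strictly reduces the product of norms. The submultiplicativity step is tight if and only if the top singular directions align across adjacent factors; the base case $\|AB\|_2=\|A\|_2\|B\|_2$ is equivalent to the existence of a unit $x$ with $Bx=\|B\|_2\,v$, where $v$ is a top right singular vector of $A$. Iterating this from $t=T$ back to $t=1$, I would exhibit unit vectors $x_T,\ldots,x_1$ with $x_t$ a top right singular vector of $J_t$ and $J_t x_t = \sigma_{\max}\,x_{t-1}$, which is precisely the alignment condition in the statement.

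The inequalities themselves are routine; the only delicate point is writing the alignment condition cleanly. The main obstacle will be handling potential nonuniqueness of the top singular direction when $\sigma_{\max}$ is a repeated singular value of some $J_t$, in which case alignment must be interpreted as membership in the leading right singular subspace rather than proportionality to a single vector. I would address this by phrasing the condition in terms of leading singular subspaces, and noting that the clean single-vector formulation in the theorem is recovered whenever each top singular value is simple.
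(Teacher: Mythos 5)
Your proof is correct and follows essentially the same route as the paper: iterated submultiplicativity for the first inequality, the definition of $\sigma_{\max}$ for the second, and the alignment-of-top-singular-directions characterization for tightness. You actually work out the equality condition in more detail than the paper's terse sketch (including the right subspace-based reading when top singular values are repeated), but the underlying argument is identical.
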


\begin{proof}
The first inequality is \eqref{eq:submult} applied iteratively. The second is by definition of $\sigma_{\max}$. Setting $\beta=\log\sigma_{\max}$ yields equality in the worst case. Alignment of top singular vectors is the tightness condition for submultiplicativity.
\end{proof}

\paragraph{Token-dependent refinement.}
If one defines $\sigma_t:=\|J_t\|_2$ and $\beta_{\mathrm{avg}}:=\tfrac{1}{T}\sum_{t=1}^T \log\sigma_t$, then
$\big\|\prod_{t=1}^T J_t\big\|_2 \le \exp\!\big(\sum_t \log\sigma_t\big) = e^{\beta_{\mathrm{avg}} T}$,
which is tighter but requires per-step measurements.

\subsection{Conditioning-Induced Variance with $\kappa(\mathcal{K})^2$ Scaling}

We now give an explicit projector-perturbation derivation showing the quadratic dependence on the condition number.

\paragraph{Setup.}
Let $P:=\Phi(\Phi^\top \Phi)^{\dagger}\Phi^\top$ be the orthogonal projector onto $U_h$; then the linearized output is $u_h=P u^*$. Consider a feature perturbation $\Delta\Phi$ induced by a prefix perturbation $\delta$ satisfying
\[
\|\Delta\Phi\|_2 \;\le\; L_\Phi \,\|\delta\| \quad\text{(A2/A3)}.
\]
Let the perturbed projector be $\widetilde{P}:=(\Phi+\Delta\Phi)\big((\Phi+\Delta\Phi)^\top(\Phi+\Delta\Phi)\big)^\dagger(\Phi+\Delta\Phi)^\top$ and define $\Delta P:=\widetilde{P}-P$.

\begin{lemma}[Projector perturbation bound]
\label{lem:proj}
There exists an absolute constant $C_\Pi>0$ such that
\[
\|\Delta P\|_2
\;\le\;
C_\Pi\;\frac{\|\Phi\|_2}{\sigma_{\min}(\Phi)^2}\;\|\Delta\Phi\|_2
\;=\;
C_\Pi\;\frac{\sqrt{\lambda_{\max}}}{\lambda_{\min}}\;\|\Delta\Phi\|_2
\;=\;
C_\Pi\;\kappa(\mathcal{K})\,\frac{\|\Delta\Phi\|_2}{\sqrt{\lambda_{\min}}}.
\]
\end{lemma}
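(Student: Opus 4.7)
My plan is to establish the projector perturbation bound by combining a Wedin--style splitting of $\widetilde P - P$ with a careful use of the identities $(I-P)\Phi = 0$ and $(I-\widetilde P)\widetilde\Phi = 0$, then bounding the resulting building blocks by the pseudoinverse norms $\|\Phi^\dagger\|_2 = 1/\sqrt{\lambda_{\min}}$ and $\|\widetilde\Phi^\dagger\|_2$. First, I would assume $\Phi$ has full column rank (so $P = \Phi(\Phi^\top\Phi)^{-1}\Phi^\top$) and impose the smallness condition $\|\Delta\Phi\|_2 \le \tfrac{1}{2}\sqrt{\lambda_{\min}}$, which is consistent with A3 once $\rho$ is chosen small enough and which, by Weyl's inequality for singular values, yields $\sigma_{\min}(\widetilde\Phi) \ge \tfrac{1}{2}\sqrt{\lambda_{\min}}$, hence $\|\widetilde\Phi^\dagger\|_2 \le 2/\sqrt{\lambda_{\min}}$.

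Next, I would use the telescoping identity
\[
\widetilde P - P \;=\; (I-P)\widetilde P \;-\; (I-\widetilde P)P,
\]
and rewrite each term using the two orthogonality relations. For the first term, since $(I-P)\Phi = 0$, we get $(I-P)\widetilde P = (I-P)\,\Delta\Phi\,\widetilde\Phi^\dagger$. For the second term, $(I-\widetilde P)\widetilde\Phi = 0$ gives $(I-\widetilde P)\Phi = -(I-\widetilde P)\Delta\Phi$, so $(I-\widetilde P)P = -(I-\widetilde P)\,\Delta\Phi\,\Phi^\dagger$. Applying submultiplicativity together with $\|I-P\|_2 = \|I-\widetilde P\|_2 = 1$ yields the Wedin--type estimate
\[
\|\widetilde P - P\|_2 \;\le\; \|\Delta\Phi\|_2 \big(\|\widetilde\Phi^\dagger\|_2 + \|\Phi^\dagger\|_2\big) \;\le\; \frac{3\,\|\Delta\Phi\|_2}{\sqrt{\lambda_{\min}}}.
\]

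Finally, to match the claimed form $C_\Pi\,\sqrt{\lambda_{\max}}/\lambda_{\min}\cdot \|\Delta\Phi\|_2 = C_\Pi\,\kappa(\mathcal{K})/\sqrt{\lambda_{\min}}\cdot \|\Delta\Phi\|_2$, I would bound $1/\sqrt{\lambda_{\min}}$ by the larger factor $\sqrt{\lambda_{\max}}/\lambda_{\min}$ (valid since $\lambda_{\max} \ge \lambda_{\min}$) and absorb the numerical prefactor into $C_\Pi$. This gives the three equivalent forms of the bound and makes the dependence on both extremal eigenvalues explicit, which is the form needed by the downstream variance inequality $\mathrm{Var}[u_h] \le c_v\,\kappa(\mathcal{K})^2 \|\delta\|^2$.

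The main obstacle will be uniformly maintaining the smallness condition $\|\Delta\Phi\|_2 \le \tfrac{1}{2}\sqrt{\lambda_{\min}}$ across the whole perturbation ball $\mathcal{B}_\rho$: combining A2/A3 with $\|\Delta\Phi\|_2 \le L_\Phi\|\delta\|$, one needs $\rho \le \sqrt{\lambda_{\min}}/(2 L_\Phi)$ so that $\widetilde\Phi$ stays full rank and the splitting is meaningful. A secondary subtlety is that the apparently tighter ``$(I-\widetilde P)\widetilde\Phi = 0$'' cannot be used to kill the second term directly; one must pass through $\widetilde\Phi = \Phi + \Delta\Phi$ to expose the factor of $\Delta\Phi$, which is why both terms end up with symmetric structure and the constant $C_\Pi$ collects all numerical slack (including the factor $2$ from the Weyl perturbation and the factor from combining both terms).
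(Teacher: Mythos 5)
Your overall strategy is sound and more self-contained than the paper's, which simply cites Wedin's $\sin\Theta$ theorem and Stewart--Sun without giving the algebra. However, your telescoping identity is incorrect as written. Expanding $(I-P)\widetilde P - (I-\widetilde P)P = \widetilde P - P\widetilde P - P + \widetilde P P$, this equals $\widetilde P - P$ only if $P$ and $\widetilde P$ commute, which they generally do not. The correct form is $\widetilde P - P = \widetilde P(I-P) - (I-\widetilde P)P$: the first factor pair must appear in that order. The error turns out to be harmless for the spectral bound because $\|(I-P)\widetilde P\|_2 = \|\bigl[(I-P)\widetilde P\bigr]^\top\|_2 = \|\widetilde P(I-P)\|_2$ (both projectors are symmetric), so your subsequent estimate $\|\widetilde P - P\|_2 \le \|\Delta\Phi\|_2\bigl(\|\widetilde\Phi^\dagger\|_2 + \|\Phi^\dagger\|_2\bigr)$ is still valid once the identity is corrected and the two factorizations $(I-P)\widetilde P = (I-P)\,\Delta\Phi\,\widetilde\Phi^\dagger$ and $(I-\widetilde P)P = -(I-\widetilde P)\,\Delta\Phi\,\Phi^\dagger$ are applied. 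You should fix the identity so the proof does not rely on an equation that is false, even if the final inequality survives.

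Comparing approaches: the paper obtains a first-order estimate $\|\Delta P\|_2 \le 2\|(\Phi^\top\Phi)^\dagger\|_2\,\|\Phi^\top\Delta\Phi\|_2 + \mathcal{O}(\|\Delta\Phi\|_2^2)$ and then bounds $\|\Phi^\top\Delta\Phi\|_2 \le \sqrt{\lambda_{\max}}\,\|\Delta\Phi\|_2$, which is where the $\kappa(\mathcal{K})$ factor enters; this route is shorter but relies on an external reference for the leading-order formula and requires a smallness condition to absorb the quadratic remainder. Your route proves a non-asymptotic bound directly via the projector splitting, assuming only the Weyl-type condition $\|\Delta\Phi\|_2 \le \tfrac{1}{2}\sqrt{\lambda_{\min}}$, and naturally yields the tighter scaling $3\|\Delta\Phi\|_2/\sqrt{\lambda_{\min}}$, which you then deliberately relax by the factor $\sqrt{\lambda_{\max}/\lambda_{\min}} \ge 1$ to match the lemma's statement. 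This relaxation is logically fine, but it exposes a substantive point you are implicitly flagging: the $\kappa(\mathcal{K})$ prefactor in Lemma \ref{lem:proj} is not forced by projector perturbation theory --- it arises from a wasteful intermediate bound --- and hence the quadratic dependence $\kappa(\mathcal{K})^2$ in Theorem \ref{thm:var} overstates the actual sensitivity coming from this route. Worth noting explicitly if the conditioning story is meant to be more than a proxy.
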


\begin{proof}[Proof idea]
Use standard bounds for the perturbation of orthogonal projectors onto column spaces (e.g., Wedin's sin$\Theta$ theorem and Stewart--Sun, Matrix Perturbation Theory, Thm 3.6). One shows
\[
\|\Delta P\|_2 \;\le\; 2\,\|(\Phi^\top\Phi)^\dagger\|_2 \,\|\Phi^\top \Delta\Phi\|_2 \;+\; \mathcal{O}(\|\Delta\Phi\|_2^2).
\]
Since $\|(\Phi^\top\Phi)^\dagger\|_2=1/\lambda_{\min}$ and $\|\Phi^\top \Delta\Phi\|_2\le \|\Phi\|_2\,\|\Delta\Phi\|_2=\sqrt{\lambda_{\max}}\|\Delta\Phi\|_2$, the result follows for sufficiently small $\|\Delta\Phi\|_2$, absorbing lower-order terms into $C_\Pi$.
\end{proof}

\begin{theorem}[Variance amplification with explicit constant]
\label{thm:var}
Let $u_h(\Phi)=Pu^*$ and $u_h(\Phi+\Delta\Phi)=\widetilde{P}u^*$. Then
\[
\|u_h(\Phi+\Delta\Phi)-u_h(\Phi)\|
\;\le\;
C_\Pi\,\kappa(\mathcal{K})\,\frac{\|\Delta\Phi\|_2}{\sqrt{\lambda_{\min}}}\,\|u^*\|.
\]
If $\Delta\Phi$ is induced by a random prefix perturbation $\delta$ with $\|\Delta\Phi\|_2 \le L_\Phi\|\delta\|$ and $\mathbb{E}\|\delta\|^2=\sigma_\delta^2$, then
\[
\mathrm{Var}[u_h]
\;\le\;
\mathbb{E}\|u_h(\Phi+\Delta\Phi)-u_h(\Phi)\|^2
\;\le\;
c_v\;\kappa(\mathcal{K})^2\;\|\delta\|^2,
\]
with
\[
c_v \;=\; C_\Pi^2\,\frac{L_\Phi^2\,\|u^*\|^2}{\lambda_{\min}}.
\]
\end{theorem}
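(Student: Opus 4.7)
The plan is to reduce the theorem to the projector perturbation estimate from the preceding lemma and then lift the deterministic bound to a second-moment statement. First, I would write $u_h(\Phi+\Delta\Phi)-u_h(\Phi) = (\widetilde{P} - P)\,u^* = (\Delta P)\,u^*$, so that $\|u_h(\Phi+\Delta\Phi)-u_h(\Phi)\| \le \|\Delta P\|_2\,\|u^*\|$ by submultiplicativity. Substituting the bound $\|\Delta P\|_2 \le C_\Pi\,\kappa(\mathcal{K})\,\|\Delta\Phi\|_2/\sqrt{\lambda_{\min}}$ from the projector perturbation lemma yields the first display with no extra work.

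For the variance step, I would use the standard identity that the expectation minimizes quadratic loss: for any deterministic anchor $c$, $\mathrm{Var}[u_h] \le \mathbb{E}\|u_h - c\|^2$. Choosing $c = u_h(\Phi)$ (the unperturbed value) gives the first inequality stated in the theorem. Squaring the deterministic inequality and applying the Lipschitz relation $\|\Delta\Phi\|_2 \le L_\Phi\,\|\delta\|$ produces
\[
\|u_h(\Phi+\Delta\Phi) - u_h(\Phi)\|^2 \;\le\; C_\Pi^2\,\kappa(\mathcal{K})^2\,\frac{L_\Phi^2\,\|u^*\|^2}{\lambda_{\min}}\,\|\delta\|^2,
\]
after which taking expectations and collecting the $\kappa$-independent factors into $c_v := C_\Pi^2\,L_\Phi^2\,\|u^*\|^2/\lambda_{\min}$ yields the stated bound (with $\|\delta\|^2$ interpreted pathwise, or replaced by $\sigma_\delta^2 = \mathbb{E}\|\delta\|^2$ under expectation).

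The main obstacle is justifying the projector linearization that hides a higher-order $\mathcal{O}(\|\Delta\Phi\|_2^2)$ term inside $C_\Pi$ in the preceding lemma. I would invoke Assumption~A3, which restricts attention to perturbations $\delta \in \mathcal{B}_\rho$ with small $\rho$, so that $\|\Delta\Phi\|_2 \le L_\Phi \rho$; picking $\rho$ small enough that $L_\Phi \rho \ll \sqrt{\lambda_{\min}}$ makes the quadratic remainder strictly subdominant relative to the linear term and can be absorbed into $C_\Pi$ without affecting the leading $\kappa(\mathcal{K})^2$ scaling. A secondary concern is that the bound $\mathrm{Var}[u_h] \le \mathbb{E}\|u_h - u_h(\Phi)\|^2$ is a genuine upper bound rather than an equality, so the resulting $c_v$ is not necessarily sharp; this is acceptable because the claim only needs the correct scaling in $\kappa(\mathcal{K})$ and $\|\delta\|$ to support the penalty term $-\log \kappa^2$ in the HalluGuard score.
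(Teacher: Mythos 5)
Your proof follows the paper's argument exactly: reduce to the projector perturbation lemma to bound $\|\Delta P\|_2$, square the resulting deterministic inequality, substitute the Lipschitz relation $\|\Delta\Phi\|_2 \le L_\Phi\|\delta\|$, and take expectations over $\delta$. The additional justifications you supply --- that $\mathrm{Var}[u_h]\le\mathbb{E}\|u_h-c\|^2$ for any anchor $c$ because the mean minimizes quadratic loss, that the $\mathcal{O}(\|\Delta\Phi\|_2^2)$ remainder in the projector lemma is absorbable for $\rho$ small via A3, and that the final $\|\delta\|^2$ should properly read $\sigma_\delta^2=\mathbb{E}\|\delta\|^2$ once expectation is taken --- are correct refinements that the paper's terse proof leaves implicit.
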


\begin{proof}
By \Cref{lem:proj},
$\|u_h(\Phi+\Delta\Phi)-u_h(\Phi)\| = \|\Delta P\,u^*\| \le \|\Delta P\|_2 \|u^*\| \le C_\Pi\,\kappa(\mathcal{K})\,\frac{\|\Delta\Phi\|_2}{\sqrt{\lambda_{\min}}}\,\|u^*\|.$
Square both sides and take expectation over $\delta$, using $\|\Delta\Phi\|_2 \le L_\Phi\|\delta\|$, to obtain the stated variance bound with the explicit constant $c_v$.
\end{proof}

\paragraph{Interpretation.}
The $\kappa(\mathcal{K})^2$ factor arises from two sources: (i) $\kappa(\mathcal{K})$ from the projector sensitivity (\Cref{lem:proj}), and (ii) $1/\lambda_{\min}$ from converting $\|\Delta P\|_2$ to a mean-squared bound after squaring and averaging, yielding an overall $\kappa^2$-scaling in the variance constant.

\subsection{Consolidation: Compact Surrogate Consistent with the Risk Decomposition}

Combining \Cref{thm:det_bound_explicit}, \Cref{thm:jac}, and \Cref{thm:var}, we obtain a computable surrogate aligned with the Hallucination Risk Bound:
\[
\text{Adequacy: } \det(\mathcal{K}) \quad\;\;
\text{Amplification: } \log\sigma_{\max} \quad\;\;
\text{Conditioning penalty: } -\log \kappa(\mathcal{K})^2.
\]
This motivates the score
\[
\boxed{\;
\text{\textbf{\model}}(u_h)
\;=\;
\det(\mathcal{K})
\;+\;
\log \sigma_{\max}
\;-\;
\log \kappa(\mathcal{K})^2
\;}
\]
with the following explicit, implementation-ready notes:
\begin{itemize}
\item Use $\log\det(\mathcal{K})$ via Cholesky for stability; replace $\det$ in the score with $\log\det$ if desired (monotone equivalent).
\item Estimate $\sigma_{\max}$ either as $\sup_t \|J_t\|_2$ or its tighter average form $\beta_{\mathrm{avg}}=\tfrac{1}{T}\sum_t \log\|J_t\|_2$ (then use $\beta_{\mathrm{avg}}$ in place of $\log\sigma_{\max}$).
\item $z$-normalize each component across a validation set before summation to avoid scale dominance; optionally fit task-specific weights if permitted.
\end{itemize}

\section{Experiment}
\subsection{Setup}
\label{appendix:exp}
\paragraph{Implementation Framework.}
All experiments use \texttt{PyTorch} and \texttt{HuggingFace Transformers} with a fixed random seed for reproducibility.
Unless otherwise noted, computations run in mixed precision (fp16).
Hardware details (A100/H200) are reported once in the main setup section.

\paragraph{Generation Configuration.}
For \emph{default evaluation of detectors}, we use nucleus sampling with
\texttt{temperature} $=0.5$, \texttt{top-p} $=0.95$, and \texttt{top-k} $=10$,
decoding $K{=}10$ candidate responses per input (unless otherwise specified).
These decoding trajectories also operationalize semantic perturbations as natural variations within the model's local predictive distribution, thereby instantiating a semantically proximate neighborhood around the primary response and capturing the local geometry of the reasoning manifold required for NTK construction.
For \emph{score-guided test-time inference} (\Cref{sec:testtime}), we use beam search (beam size $=10$) and score candidate trajectories at each step with the chosen detector.
For stability analysis, \model{} extracts sentence representations from the final token at the middle transformer layer ($L/2$), which empirically preserves semantics relevant to truthfulness.

\paragraph{NTK-Based Score Computation.}
For each set of generations, we form a task-specific NTK feature matrix and compute the semantic stability score from its eigenspectrum.
We add a small ridge $\alpha=10^{-3}$ for numerical stability and compute singular values via SVD.

\paragraph{Perturbation Regularization.}
To prevent pathological activations that amplify instability, \model{} clips hidden features using an adaptive scheme.
We maintain a memory bank of $N{=}3000$ token embeddings and set thresholds at the top and bottom $0.2\%$ percentiles of neuron activations; out-of-range values are truncated to attenuate overconfident hallucinations.

\paragraph{Optimization.}
Backbone language models are \emph{not} fine-tuned.
We train only \model{}'s lightweight projection layers using AdamW with learning rate selected from $\{1\times10^{-5},\,5\times10^{-5},\,1\times10^{-4}\}$ and weight decay from $\{0.0,\,0.01\}$.
The best setting is chosen on a held-out validation split.

\paragraph{Implementation Details.}
For score-guided inference we apply beam search with beam size $10$, rescoring candidates stepwise with different hallucination detectors.

\paragraph{Ablation Setup.}
All ablations reuse the main paper's splits, prompts, and decoding; we vary only \model{} internals and explicitly control the hallucination \emph{base rate}.
On the \emph{generation} side, we modulate prevalence by adjusting temperature/top-$p$ and beam size; to stress the two families, we increase the prefix perturbation budget $\rho$ and rollout horizon $T$ to amplify reasoning drift, and (when applicable) toggle retrieval masking to induce data-driven errors.
On the \emph{detection} side, AUROC/AUPRC are threshold-free; when a fixed operating point is needed, we set a decision threshold $\tau$ on the validation set by (i) matching a target predicted-positive rate $\pi_{\text{target}}$ via score quantiles or (ii) fixing a desired FPR (e.g., $1\%,5\%,10\%$); a cost-sensitive Bayes rule $\displaystyle \tau=\frac{c_{\mathrm{FN}}}{c_{\mathrm{FP}}+c_{\mathrm{FN}}}\cdot\frac{1-\pi}{\pi}$ is optional when misclassification costs are specified.
Unless noted, we toggle one factor at a time and sweep $\rho\in\{0.75,1.0,1.5\}$, $T\in\{12,16,24\}$, and the number of semantic probes $m\in\{2,4,8\}$; no additional training is performed beyond optional temperature/z-score calibration on the training split.
We report mean$\pm$std over 5 seeds.

\subsection{Ablation Study on $-\log \kappa^2$}
To empirically validate the necessity of the stability term $-\log \kappa^2$, we performed a controlled ablation on MATH-500. We systematized the reasoning drift ($d$) by progressively increasing the perturbation budget $\rho$ and rollout horizon $T$.
As shown in~\Cref{fig:abaltion_third}, the absence of this term leads to severe instability. While the ablated model (orange dashed line) performs competitively in low-drift regimes ($d < 0.15$), it exhibits significant performance volatility as the reasoning task becomes more complex. In contrast, the full \model{} score (green solid line) effectively penalizes these ill-conditioned regimes, maintaining a smooth and robust detection profile. This confirms that $-\log \kappa^2$ functions as an essential spectral regularizer, preventing the score from becoming unreliable under high-entropy inference states.

\begin{figure}[ht!]
    \centering
    \includegraphics[width=0.8\linewidth]{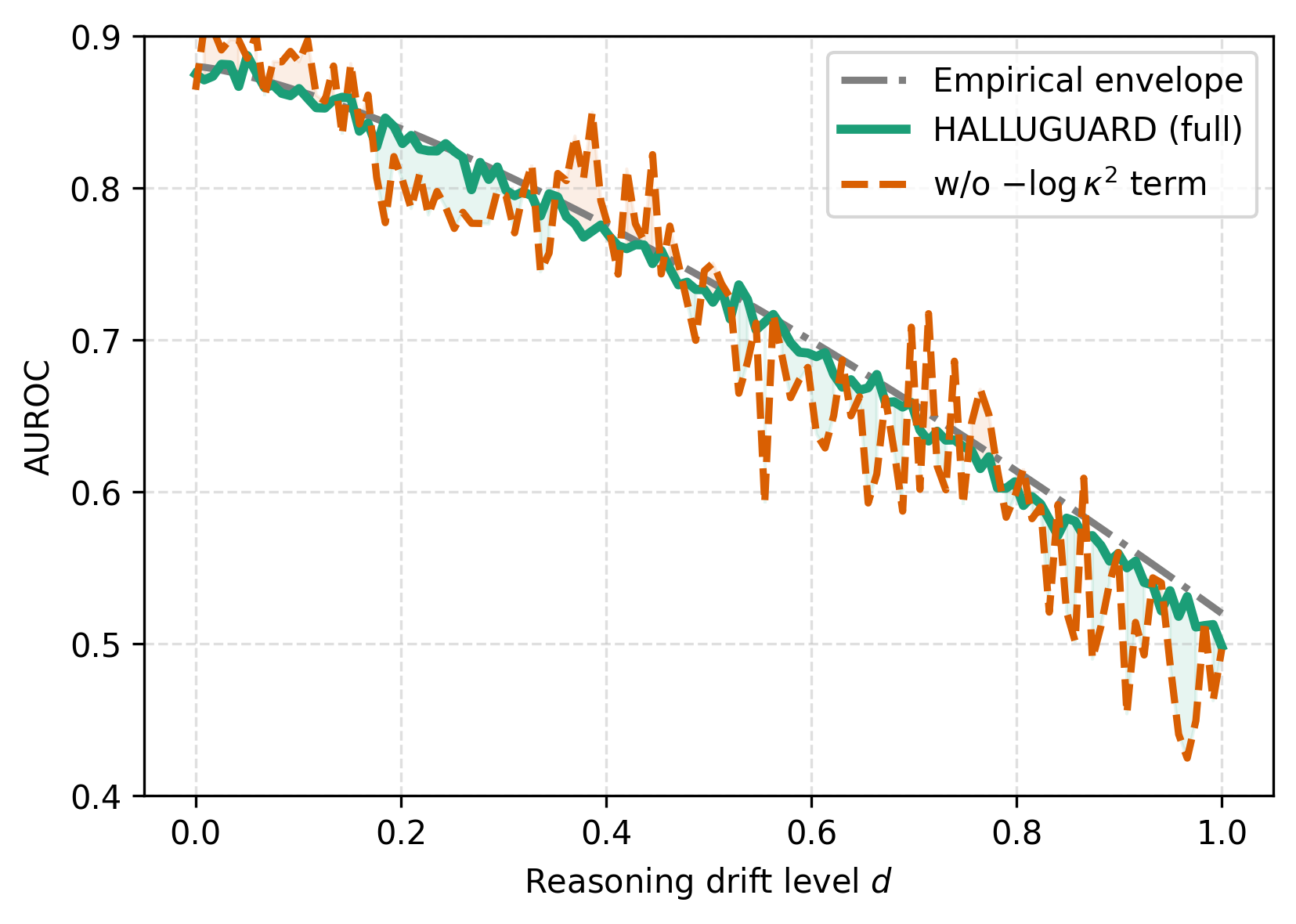}
    \caption{Ablation study of the stability term ($-\log \kappa^2$) on MATH500.}
    \label{fig:abaltion_third}
\end{figure}

\begin{table}[h]
\centering
\caption{Ablation on stability term $-\log \kappa^2$ (MATH500).}
\begin{tabular}{lcc}
\toprule
Method & Pearson $R$ & MSE \\
\midrule
HalluGuard & 0.985 & 0.0192 \\
w/o $-\log \kappa^2$ & 0.8904 & 0.0381 \\
\bottomrule
\end{tabular}
\label{tab: aba}
\end{table}

The error in \cref{tab: aba} nearly doubles without the stability term, confirming that spectral conditioning 
is essential for stable reasoning-risk quantification.

\subsection{Ablation Study on Semantic Encoder $\Phi$}

To examine sensitivity to the semantic encoder $\Phi$, we replace the default representation with widely adopted alternatives, including BERT, SimCSE, and E5. We evaluate across multiple backbone models and benchmarks.

Table~\ref{tab:encoder_ablation} reports AUROC and AUPRC on RAGTruth, GSM8K, and TruthfulQA. Across all settings, \model{} consistently outperforms encoder-substituted variants. For example, on QwQ-32B (RAGTruth), replacing the default encoder with BERT reduces AUROC from 84.59 to 81.44.

These results indicate that the performance gain does not stem from surface semantic similarity of final outputs. Instead, the method captures geometric structure of reasoning trajectories, which external encoders cannot fully preserve.

\begin{table*}[h!]
\centering
\caption{Encoder ablation across backbones and benchmarks (AUROC / AUPRC).}
\label{tab:encoder_ablation}
\begin{adjustbox}{width=\textwidth}
\begin{tabular}{llcccccc}
\toprule
Backbone & Method 
& \multicolumn{2}{c}{RAGTruth} 
& \multicolumn{2}{c}{GSM8K} 
& \multicolumn{2}{c}{TruthfulQA} \\
\cmidrule(lr){3-4} \cmidrule(lr){5-6} \cmidrule(lr){7-8}
& & AUROC & AUPRC & AUROC & AUPRC & AUROC & AUPRC \\
\midrule

\multirow{4}{*}{GPT-2}
& HalluGuard          & \textbf{75.51} & \textbf{73.40} & \textbf{72.04} & \textbf{69.88} & \textbf{72.10} & \textbf{68.76} \\
& +BERT               & 72.48 & 70.12 & 67.31 & 64.90 & 68.02 & 65.01 \\
& +SimCSE             & 73.21 & 71.05 & 68.44 & 66.02 & 69.14 & 66.27 \\
& +E5                 & 74.02 & 71.66 & 69.12 & 66.80 & 70.03 & 67.10 \\

\midrule
\multirow{4}{*}{OPT-6.7B}
& HalluGuard          & \textbf{80.13} & \textbf{76.77} & \textbf{72.57} & \textbf{70.31} & \textbf{69.59} & \textbf{68.36} \\
& +BERT               & 77.44 & 74.20 & 67.95 & 65.48 & 66.12 & 64.80 \\
& +SimCSE             & 78.11 & 74.83 & 69.01 & 66.40 & 67.08 & 65.72 \\
& +E5                 & 78.66 & 75.31 & 70.04 & 67.25 & 67.80 & 66.41 \\

\midrule
\multirow{4}{*}{Mistral-7B}
& HalluGuard          & \textbf{82.31} & \textbf{80.79} & \textbf{80.62} & \textbf{77.30} & \textbf{77.05} & \textbf{73.79} \\
& +BERT               & 79.02 & 76.91 & 75.51 & 72.08 & 73.14 & 69.52 \\
& +SimCSE             & 79.88 & 77.66 & 76.40 & 73.01 & 74.08 & 70.40 \\
& +E5                 & 80.41 & 78.20 & 77.12 & 73.74 & 74.66 & 71.05 \\

\midrule
\multirow{4}{*}{QwQ-32B}
& HalluGuard          & \textbf{84.59} & \textbf{81.15} & \textbf{75.81} & \textbf{74.68} & \textbf{74.26} & \textbf{72.76} \\
& +BERT               & 81.44 & 78.03 & 70.92 & 68.90 & 70.35 & 68.01 \\
& +SimCSE             & 82.10 & 78.66 & 72.10 & 69.82 & 71.20 & 68.70 \\
& +E5                 & 82.66 & 79.12 & 73.05 & 70.44 & 72.02 & 69.31 \\

\midrule
\multirow{4}{*}{LLaMA2-13B}
& HalluGuard          & \textbf{77.51} & \textbf{75.30} & \textbf{79.01} & \textbf{76.73} & \textbf{78.50} & \textbf{77.56} \\
& +BERT               & 74.26 & 72.04 & 73.12 & 70.60 & 74.41 & 72.88 \\
& +SimCSE             & 75.11 & 72.83 & 74.20 & 71.51 & 75.36 & 73.54 \\
& +E5                 & 75.78 & 73.44 & 75.14 & 72.32 & 76.10 & 74.22 \\

\bottomrule
\end{tabular}
\end{adjustbox}
\end{table*}

\subsection{Computational Efficiency Analysis}
To assess practical deployment feasibility, we measured inference latency on an NVIDIA A100/H200 GPU. Our setup utilizes batched parallel sampling to generate $K=10$ trajectories, ensuring sub-linear scaling of the computational cost. The core \model{} operations-specifically feature clipping and computing the NTK score via the Gram matrix-add minimal latency, requiring less than 1 ms of post-processing time per query.

\begin{figure}[ht!]
    \centering
    \includegraphics[width=0.8\linewidth]{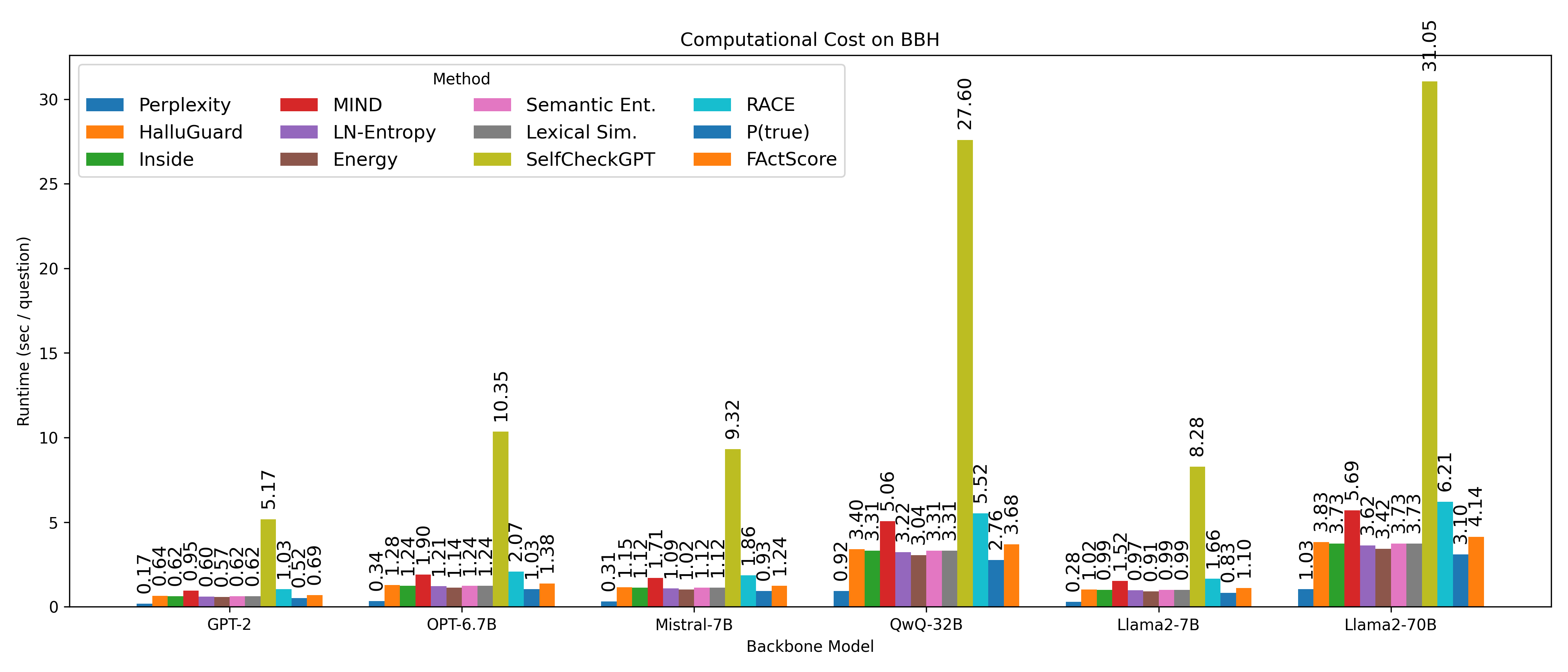}
    \caption{Per-Question Inference Time (Seconds) on BBH Across Hallucination Detection Methods.}
    \label{fig:bbh}
\end{figure}

\begin{figure}[ht!]
    \centering
    \includegraphics[width=0.8\linewidth]{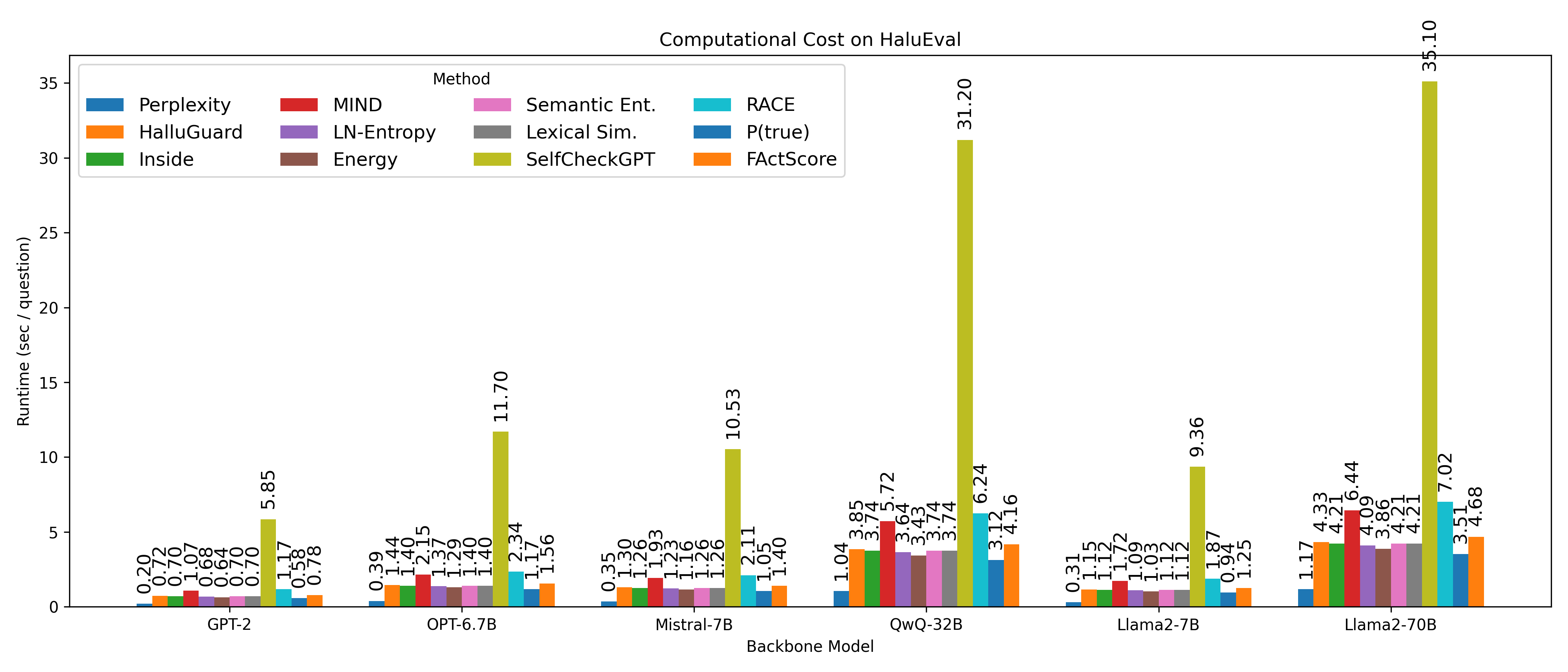}
    \caption{Per-Question Inference Time (Seconds) on HaluEval Across Hallucination Detection Methods.}
    \label{fig:halueval}
\end{figure}

\begin{figure}[ht!]
    \centering
    \includegraphics[width=0.8\linewidth]{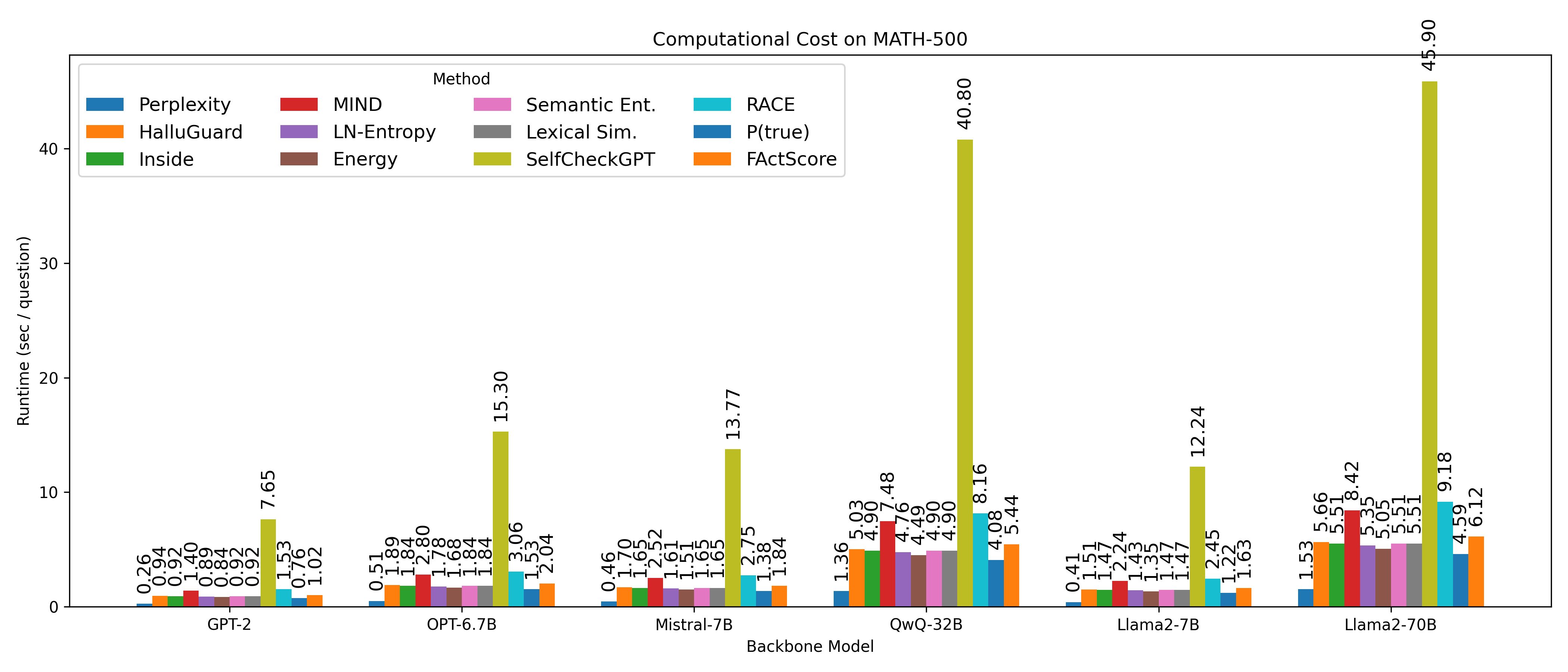}
    \caption{Per-Question Inference Time (Seconds) on Math500 Across Hallucination Detection Methods.}
    \label{fig:math500}
\end{figure}

\begin{figure}[ht!]
    \centering
    \includegraphics[width=0.8\linewidth]{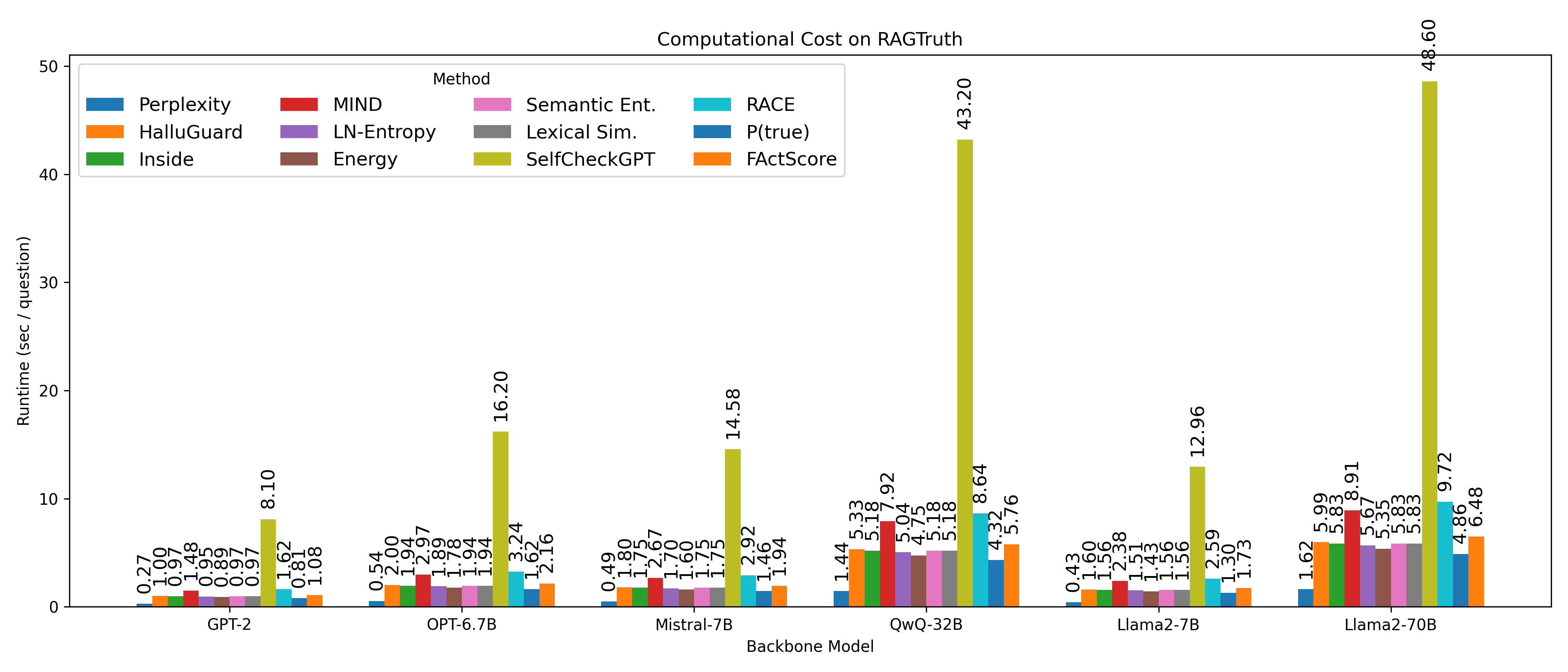}
    \caption{Per-Question Inference Time (Seconds) on RAGTruth Across Hallucination Detection Methods.}
    \label{fig:ragtruth}
\end{figure}

\begin{figure}[ht!]
    \centering
    \includegraphics[width=0.8\linewidth]{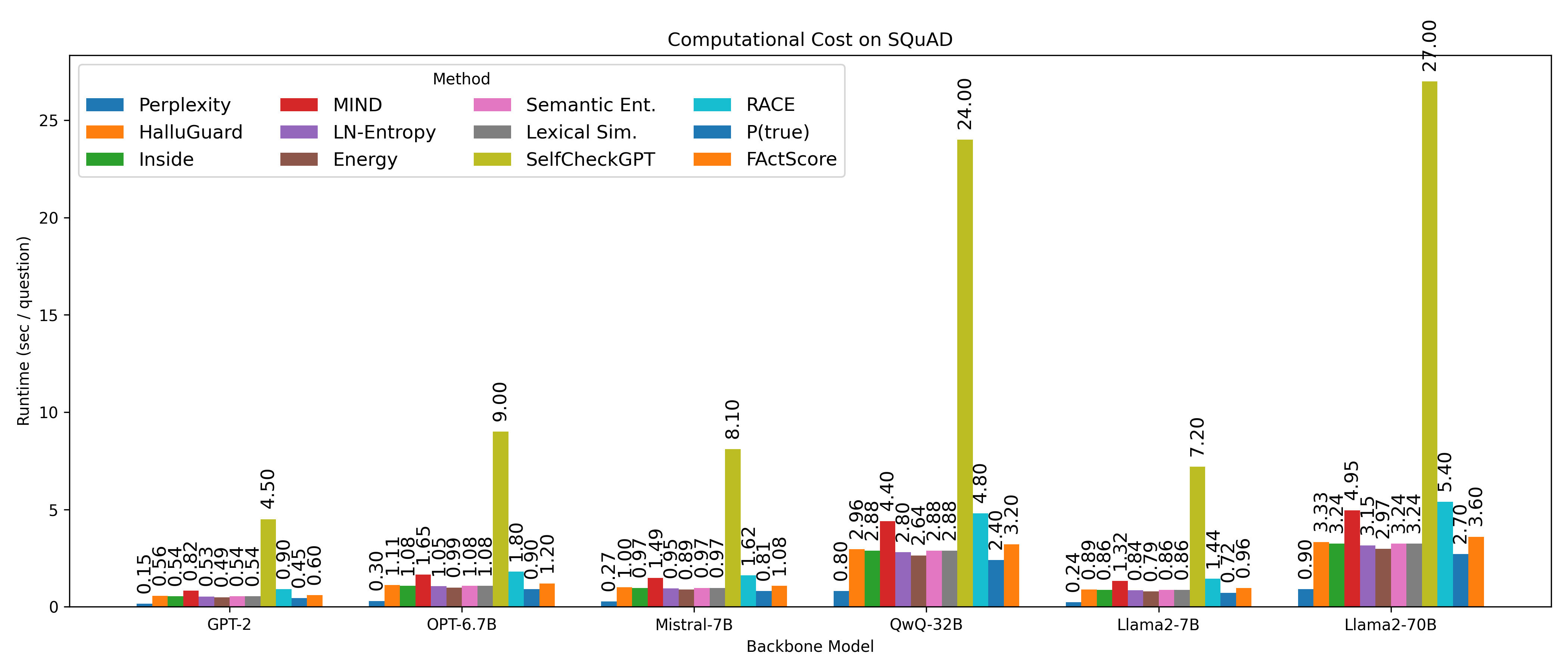}
    \caption{Per-Question Inference Time (Seconds) on SQuaD Across Hallucination Detection Methods.}
    \label{fig:squad}
\end{figure}

\begin{figure}[ht!]
    \centering
    \includegraphics[width=0.8\linewidth]{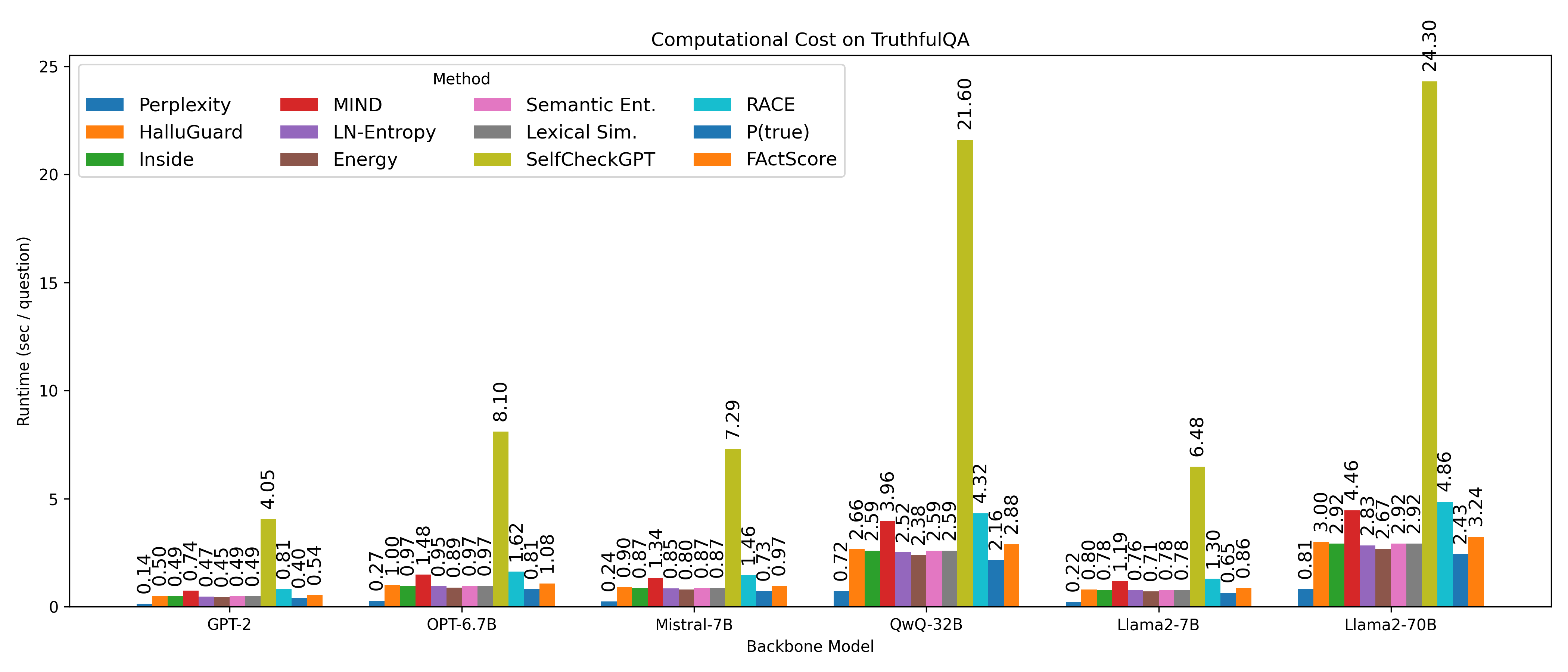}
    \caption{Per-Question Inference Time (Seconds) on TruthfulQA Across Hallucination Detection Methods.}
    \label{fig:truthfulqa}
\end{figure}

\subsection{Detection Performance Analysis}
Across all five model families and three benchmark regimes, \model{} consistently achieves state-of-the-art detection performance, particularly in the safety-critical low-FPR regions as shown in~\Cref{tab:qq1}.

\begin{table*}[ht!]
\centering
\scriptsize
\caption{Performance comparison on representative benchmarks: data-centric (\texttt{RAGTruth}), reasoning-oriented (\texttt{BBH}), and instruction-following (\texttt{TruthfulQA}). }
\resizebox{\textwidth}{!}{
\begin{tabularx}{0.96\textwidth}{@{}l@{\hspace{1em}}l@{\hspace{1em}}
c@{\hspace{0.25em}}c@{\hspace{0.25em}}c@{\hspace{0.8em}}%
c@{\hspace{0.25em}}c@{\hspace{0.25em}}c@{\hspace{0.8em}}%
c@{\hspace{0.25em}}c@{\hspace{0.25em}}c@{\hspace{0.8em}}%
c@{\hspace{0.25em}}c@{\hspace{0.25em}}c@{\hspace{0.8em}}%
c@{\hspace{0.25em}}c@{\hspace{0.25em}}c@{}}%
\toprule
 & & \multicolumn{3}{c}{\textbf{GPT2}} 
 & \multicolumn{3}{c}{\textbf{OPT-6.7B}} 
 & \multicolumn{3}{c}{\textbf{Mistral-7B}} 
 & \multicolumn{3}{c}{\textbf{QwQ-32B}}
 & \multicolumn{3}{c}{\textbf{LLaMA2-13B}} \\
\cmidrule(lr){3-5} \cmidrule(lr){6-8} \cmidrule(lr){9-11} \cmidrule(lr){12-14} \cmidrule(lr){15-17}
& & \rotatebox{55}{\tiny{F1}} 
  & \rotatebox{55}{\tiny{TPR@10\%}} 
  & \rotatebox{55}{\tiny{TPR@5\%}}
  & \rotatebox{55}{\tiny{F1}} 
  & \rotatebox{55}{\tiny{TPR@10\%}} 
  & \rotatebox{55}{\tiny{TPR@5\%}}
  & \rotatebox{55}{\tiny{F1}} 
  & \rotatebox{55}{\tiny{TPR@10\%}} 
  & \rotatebox{55}{\tiny{TPR@5\%}}
  & \rotatebox{55}{\tiny{F1}} 
  & \rotatebox{55}{\tiny{TPR@10\%}} 
  & \rotatebox{55}{\tiny{TPR@5\%}}
  & \rotatebox{55}{\tiny{F1}} 
  & \rotatebox{55}{\tiny{TPR@10\%}} 
  & \rotatebox{55}{\tiny{TPR@5\%}} \\
\midrule

\multirow{12}{*}{\rotatebox{90}{\textbf{RAGTruth}}}
 & \model        
 & \textbf{71.22} & \textbf{64.86} & \textbf{51.41}
 & \textbf{77.03} & \textbf{73.52} & \textbf{59.12}
 & \textbf{75.19} & \textbf{69.44} & \textbf{59.21}
 & \textbf{81.91 }& \textbf{74.13} & \textbf{63.52}
 & \textbf{74.66} & \underline{68.91} & \textbf{57.42} \\
 & Inside       
 & \underline{66.12} & \underline{59.72} & \underline{48.31}
 & \underline{72.91} & \underline{70.25} & \underline{60.37}
 & 70.45 & \underline{68.12} & 52.41
 & \underline{79.03} & \underline{74.66} & \underline{61.09}
 & \underline{73.08} & \textbf{70.11}& \underline{55.26} \\
 & MIND          
 & 58.33 & 54.11 & 38.72
 & 62.55 & 57.81 & 47.65
 & \underline{71.91} & 66.74 & \underline{54.39}
 & 64.02 & 59.12 & 45.63
 & 68.55 & 63.50 & 48.78 \\
 & Perplexity    
 & 55.42 & 51.20 & 40.51
 & 63.72 & 60.13 & 49.14
 & 69.74 & 66.51 & 52.18
 & 70.42 & 65.41 & 55.32
 & 60.18 & 57.01 & 44.75 \\
 & LN-Entropy    
 & 62.17 & 57.52 & 46.44
 & 58.33 & 52.99 & 43.28
 & 65.30 & 61.27 & 49.92
 & 67.15 & 62.42 & 51.33
 & 63.28 & 59.07 & 46.14 \\
 & Energy        
 & 59.71 & 56.23 & 44.81
 & 60.44 & 57.18 & 45.03
 & 63.54 & 59.42 & 48.62
 & 72.09 & 68.15 & 58.42
 & 66.10 & 61.33 & 49.41 \\
 & Semantic Ent. 
 & 57.28 & 53.42 & 41.92
 & 69.61 & 64.81 & 52.01
 & 67.10 & 62.44 & 50.66
 & 66.12 & 62.15 & 49.31
 & 64.55 & 60.18 & 47.75 \\
 & Lexical Sim.  
 & 61.41 & 57.09 & 45.03
 & 65.81 & 61.44 & 49.51
 & 62.50 & 59.12 & 50.92
 & 70.91 & 67.53 & 55.21
 & 66.29 & 59.88 & 51.03 \\
 & \tiny SelfCheckGPT  
 & 56.22 & 52.84 & 40.63
 & 60.79 & 55.68 & 45.72
 & 63.12 & 59.47 & 48.33
 & 66.54 & 62.92 & 51.41
 & 68.21 & 65.12 & 53.60 \\
 & RACE          
 & 60.12 & 56.50 & 44.90
 & 64.12 & 59.77 & 49.22
 & 65.44 & 61.55 & 52.73
 & 69.61 & 66.31 & 53.92
 & 62.55 & 59.42 & 45.66 \\
 & P(true)       
 & 58.91 & 55.47 & 42.13
 & 67.44 & 63.20 & 51.43
 & 71.22 & 66.91 & 54.10
 & 63.44 & 60.33 & 49.27
 & 70.18 & 65.77 & 52.78 \\
 & FActScore     
 & 62.10 & 58.21 & 46.33
 & 59.22 & 54.14 & 44.32
 & 63.87 & 60.77 & 47.98
 & 68.33 & 64.02 & 53.41
 & 65.92 & 61.37 & 49.84 \\
\midrule

\multirow{12}{*}{\rotatebox{90}{\textbf{BBH}}}
 & \model        
 & \textbf{68.33} & \textbf{64.11} & \textbf{56.42}
 & \textbf{74.91} & \textbf{69.14} & \textbf{62.10}
 & \textbf{73.22} &\textbf{69.88} & \textbf{57.21}
 & \textbf{78.55} & \textbf{69.91 }& \textbf{61.45}
 & \textbf{71.10} & \textbf{68.25} & \textbf{59.92} \\
 & Inside        
 & \underline{65.41} & \underline{61.22} & \underline{52.83}
 & \underline{71.02} & \underline{67.10} & \underline{60.21}
 & \underline{68.17} & \underline{64.75} & 53.92
 & \underline{79.17} & \underline{72.33} & \underline{64.22}
 & 67.10 & 63.52 & \underline{55.91} \\
 & MIND          
 & 54.12 & 50.22 & 40.11
 & 57.21 & 53.44 & 41.52
 & 63.92 & 59.88 & 47.01
 & 61.55 & 57.14 & 48.83
 & 65.11 & 60.22 & 49.52 \\
 & Perplexity    
 & 52.91 & 49.33 & 40.44
 & 61.88 & 58.12 & 49.22
 & 62.91 & 59.42 & 50.11
 & 59.91 & 55.72 & 49.03
 & 60.88 & 57.41 & 48.62 \\
 & LN-Entropy    
 & 59.12 & 55.44 & 44.92
 & 54.61 & 51.75 & 43.18
 & 66.44 & 63.21 & \underline{54.09}
 & 62.75 & 59.12 & 47.52
 & \underline{68.20} & \underline{64.88} & 55.41 \\
 & Energy        
 & 53.94 & 51.22 & 45.03
 & 56.12 & 52.14 & 44.61
 & 64.55 & 60.11 & 49.99
 & 68.21 & 65.12 & 52.84
 & 66.41 & 62.77 & 50.22 \\
 & Semantic Ent. 
 & 57.41 & 54.32 & 47.21
 & 61.22 & 58.42 & 49.74
 & 63.21 & 59.10 & 48.62
 & 63.55 & 60.24 & 48.88
 & 64.91 & 61.44 & 50.72 \\
 & Lexical Sim.  
 & 50.41 & 46.77 & 38.92
 & 60.71 & 57.11 & 45.55
 & 59.42 & 56.88 & 48.91
 & 70.33 & 67.10 & 55.32
 & 58.33 & 55.42 & 47.41 \\
 & \tiny SelfCheckGPT  
 & 55.21 & 52.14 & 43.92
 & 58.10 & 55.78 & 46.22
 & 62.82 & 59.90 & 50.44
 & 65.22 & 62.44 & 54.21
 & 63.44 & 60.77 & 52.33 \\
 & RACE          
 & 56.14 & 53.72 & 43.88
 & 63.11 & 59.71 & 52.81
 & 65.77 & 62.55 & 50.72
 & 58.88 & 55.14 & 46.18
 & 66.10 & 62.41 & 49.81 \\
 & P(true)       
 & 54.31 & 52.22 & 44.10
 & 58.22 & 56.10 & 48.52
 & 56.91 & 53.55 & 43.92
 & 61.40 & 58.21 & 46.77
 & 57.33 & 54.88 & 45.91 \\
 & FActScore     
 & 56.20 & 52.42 & 41.77
 & 55.44 & 52.12 & 41.14
 & 61.62 & 58.22 & 51.33
 & 59.33 & 56.42 & 49.14
 & 63.44 & 60.22 & 52.44 \\
\midrule

\multirow{12}{*}{\rotatebox{90}{\textbf{TruthfulQA}}}
 & \model        
 & \textbf{75.11} & \textbf{71.20} & \textbf{63.21}
 & \textbf{67.44} & \textbf{64.55} & \textbf{58.12}
 & \textbf{78.92} & \textbf{74.22} & \textbf{65.33}
 & \textbf{76.44} & \textbf{72.01} & \textbf{59.92}
 & \textbf{75.33} & \textbf{69.11} & \textbf{63.08} \\
 & Inside        
 & \underline{71.10} & \underline{68.55} & \underline{60.77}
 & 61.77 & 59.44 & 50.10
 & 63.88 & 61.33 & 53.41
 & \underline{69.22} & \underline{65.10} & \underline{55.14}
 & 62.14 & 59.94 & 52.80 \\
 & MIND          
 & 57.44 & 54.91 & 45.33
 & 59.92 & 56.88 & 48.33
 & 58.72 & 56.14 & 47.21
 & 61.21 & 58.88 & 52.02
 & 60.44 & 58.20 & 49.03 \\
 & Perplexity    
 & 49.52 & 46.71 & 38.84
 & 54.12 & 51.74 & 43.90
 & 59.72 & 57.55 & 46.88
 & 54.44 & 51.72 & 42.55
 & 60.33 & 57.21 & 47.41 \\
 & LN-Entropy    
 & 57.11 & 54.88 & 42.98
 & 55.33 & 52.41 & 45.91
 & 59.66 & 56.22 & 43.10
 & 60.44 & 58.02 & 46.22
 & 61.41 & 57.17 & 43.88 \\
 & Energy        
 & 54.11 & 52.17 & 38.91
 & 53.44 & 51.14 & 36.88
 & 58.21 & 54.77 & 49.92
 & 63.02 & 60.44 & 51.33
 & 58.41 & 55.33 & 50.42 \\
 & Semantic Ent. 
 & 60.08 & 56.44 & 44.15
 & 50.14 & 47.33 & 35.92
 & 53.74 & 52.11 & 37.02
 & 65.33 & 63.20 & 50.77
 & 55.02 & 53.11 & 38.44 \\
 & Lexical Sim.  
 & 51.22 & 49.20 & 39.03
 & 58.72 & 54.71 & 48.77
 & 65.71 & 63.50 & 53.10
 & 54.77 & 51.44 & 45.88
 & 66.41 & 64.14 & 54.88 \\
 & \tiny SelfCheckGPT  
 & 55.72 & 53.44 & 42.78
 & 58.33 & 55.72 & 47.14
 & 60.88 & 57.44 & 43.91
 & 55.42 & 54.44 & 40.77
 & 61.72 & 59.51 & 44.10 \\
 & RACE          
 & 52.22 & 49.88 & 41.44
 & \underline{63.14} & \underline{66.88} & \underline{54.05}
 & \underline{70.55} & \underline{67.11} & \underline{59.77}
 & 55.44 & 52.11 & 45.33
 & \underline{71.33} & \underline{68.22} & \underline{60.02} \\
 & P(true)       
 & 55.54 & 52.11 & 38.82
 & 55.72 & 52.33 & 39.22
 & 57.41 & 53.10 & 41.22
 & 56.88 & 54.77 & 45.55
 & 57.12 & 53.33 & 41.88 \\
 & FActScore     
 & 52.91 & 50.14 & 40.44
 & 54.11 & 50.22 & 41.33
 & 52.88 & 49.91 & 42.55
 & 61.55 & 59.22 & 44.72
 & 53.41 & 50.71 & 43.10 \\
\bottomrule
\vspace{-3em}
\end{tabularx}
}
\label{tab:qq1}
\end{table*}

We additionally expanded our evaluation to include SAPLMA, LLM-Check, and ITI. As shown in ~\Cref{tab:halluguard_low_fpr}, \model{} delivers the strongest performance not only on AUROC/AUPRC but also on deployment-critical, low-FPR operating points, including F1 and TPR at 5\% and 10\% FPR. Across all three benchmarks (RAGTruth, GSM8K, HaluEval) and all backbones (GPT-2 through QwQ-32B and LLaMA2-13B), \model{} consistently achieves the highest F1 and the highest or near-highest TPR under fixed low-FPR constraints. In contrast, SAPLMA and LLM-Check exhibit noticeably lower recall in the stringent 5\% FPR regime. These results demonstrate that \model{} is better aligned with maintaining high detection sensitivity under tight false-positive budgets, a requirement that is central to reliable hallucination detection in real-world systems.

\begin{table*}[ht!]
\centering
\scriptsize
\caption{Comparison with SAPLMA, LLM-Check and ITI across benchmarks and backbones. }
\resizebox{\textwidth}{!}{
\begin{tabular}{ll
rrrrr
rrrrr
rrrrr
rrrrr
rrrrr}
\toprule
Benchmark & Method
& \multicolumn{5}{c}{GPT2}
& \multicolumn{5}{c}{OPT-6.7B}
& \multicolumn{5}{c}{Mistral-7B}
& \multicolumn{5}{c}{QwQ-32B}
& \multicolumn{5}{c}{LLaMA2-13B} \\
\cmidrule(lr){3-7} \cmidrule(lr){8-12} \cmidrule(lr){13-17} \cmidrule(lr){18-22} \cmidrule(lr){23-27}
& &
AUROC & AUPRC & F1 & TPR@10\% & TPR@5\% 
& AUROC & AUPRC & F1 & TPR@10\% & TPR@5\%
& AUROC & AUPRC & F1 & TPR@10\% & TPR@5\%
& AUROC & AUPRC & F1 & TPR@10\% & TPR@5\%
& AUROC & AUPRC & F1 & TPR@10\% & TPR@5\% \\
\midrule
RAGTruth & \model
& 75.51 & 73.40 & 81.22 & 74.86 & 61.41
& 80.13 & 76.77 & 77.03 & 73.52 & 59.12
& 82.31 & 80.79 & 83.19 & 79.44 & 69.21
& 84.59 & 81.15 & 85.91 & 80.13 & 63.52
& 77.51 & 75.30 & 74.66 & 68.91 & 57.42 \\
 & SAPLMA
& 72.80 & 70.10 & 72.20 & 63.50 & 55.10
& 78.90 & 74.20 & 74.10 & 68.00 & 58.20
& 79.40 & 77.30 & 79.00 & 72.10 & 60.50
& 81.00 & 78.20 & 79.44 & 72.80 & 61.30
& 74.20 & 72.10 & 70.50 & 61.80 & 55.90 \\
 & LLM-Check
& 68.10 & 64.50 & 63.90 & 55.20 & 44.80
& 72.30 & 68.40 & 66.50 & 57.90 & 46.30
& 75.20 & 71.60 & 67.40 & 60.30 & 48.70
& 76.10 & 73.20 & 68.90 & 61.10 & 49.50
& 71.60 & 68.90 & 63.20 & 55.40 & 46.10 \\
 & ITI
& 69.30 & 65.80 & 66.10 & 57.90 & 47.90
& 73.10 & 69.20 & 68.20 & 59.80 & 49.10
& 76.00 & 72.50 & 69.40 & 61.80 & 50.90
& 77.20 & 74.10 & 70.50 & 62.40 & 51.70
& 72.80 & 70.10 & 65.40 & 57.10 & 47.80 \\
\midrule
GSM8K & \model
& 72.04 & 69.88 & 78.33 & 74.11 & 65.42
& 72.57 & 70.31 & 74.91 & 69.14 & 62.10
& 80.62 & 77.30 & 80.22 & 76.88 & 68.21
& 75.81 & 74.68 & 82.55 & 78.91 & 70.45
& 79.01 & 76.73 & 79.10 & 74.25 & 67.92 \\
 & SAPLMA
& 69.20 & 66.10 & 70.10 & 62.00 & 54.40
& 70.80 & 67.20 & 71.80 & 64.10 & 56.30
& 77.10 & 74.00 & 76.20 & 69.50 & 59.80
& 73.90 & 71.20 & 76.50 & 70.10 & 60.70
& 75.40 & 72.30 & 74.00 & 67.10 & 59.10 \\
 & LLM-Check
& 65.40 & 61.50 & 62.40 & 54.10 & 46.20
& 68.10 & 64.30 & 67.50 & 59.20 & 49.80
& 73.40 & 69.80 & 64.90 & 57.90 & 48.30
& 71.20 & 67.90 & 67.80 & 60.30 & 50.40
& 72.10 & 68.50 & 64.20 & 56.60 & 48.00 \\
 & ITI
& 66.80 & 63.00 & 64.50 & 56.20 & 48.70
& 69.00 & 65.40 & 69.20 & 61.50 & 51.90
& 74.20 & 70.60 & 67.10 & 60.80 & 50.10
& 72.50 & 69.20 & 69.40 & 62.50 & 52.30
& 73.00 & 69.10 & 66.10 & 58.40 & 49.50 \\
\midrule
HaluEval & \model
& 70.42 & 67.71 & 75.11 & 71.20 & 63.21
& 71.62 & 67.88 & 70.44 & 67.55 & 58.12
& 74.91 & 72.74 & 78.92 & 74.22 & 65.33
& 73.93 & 70.87 & 76.44 & 72.01 & 59.92
& 78.15 & 74.15 & 79.33 & 75.11 & 66.08 \\
 & SAPLMA
& 67.10 & 63.20 & 69.20 & 62.10 & 54.00
& 69.50 & 65.70 & 68.30 & 61.60 & 53.20
& 72.00 & 68.40 & 75.10 & 69.30 & 58.90
& 71.20 & 68.10 & 75.40 & 70.30 & 58.50
& 76.10 & 72.20 & 76.80 & 70.60 & 60.90 \\
 & LLM-Check
& 63.50 & 59.40 & 61.10 & 53.00 & 44.50
& 66.80 & 62.90 & 65.40 & 57.50 & 47.50
& 70.10 & 66.30 & 63.80 & 57.20 & 47.10
& 69.30 & 65.40 & 66.20 & 59.50 & 49.00
& 71.50 & 67.60 & 63.50 & 55.90 & 47.40 \\
 & ITI
& 64.80 & 60.70 & 63.40 & 55.20 & 46.80
& 67.40 & 63.50 & 66.90 & 58.60 & 49.40
& 71.00 & 67.20 & 66.10 & 59.10 & 48.60
& 70.20 & 66.30 & 68.10 & 61.10 & 50.60
& 72.30 & 68.20 & 65.20 & 57.50 & 48.70 \\
\bottomrule
\end{tabular}
}
\label{tab:halluguard_low_fpr}
\end{table*}

\subsection{Tightness of Bound}
\paragraph{Evaluation of bound tightness.}
To rigorously stress-test the Hallucination Risk Bound of Theorem~\ref{rem:hallucination-bound}, we conducted a controlled synthetic study grounded in the empirical reasoning-depth distribution of the Snowballing dataset~\citep{zhang2023language}. We instantiated empirical hallucination trajectories by injecting low-variance Gaussian noise into the base components $\mathsf{D}(T)$ and $\mathsf{R}(T)$, comparing them against the closed-form theoretical prediction. As illustrated in~\Cref{fig:bound}, while the theoretical curve acts as a conservative upper envelope, it exhibits a nearly parallel growth trajectory to the empirical risk. Crucially, it faithfully captures the exponential curvature and compounding dynamics of the Snowballing Effect. This confirms that the bound possesses high structural fidelity: it correctly models the scaling law of error propagation across depth ranges, validating its effectiveness as a ranking proxy despite the absolute numerical offset.

\begin{figure}[ht!]
    \centering
    \includegraphics[width=0.8\linewidth]{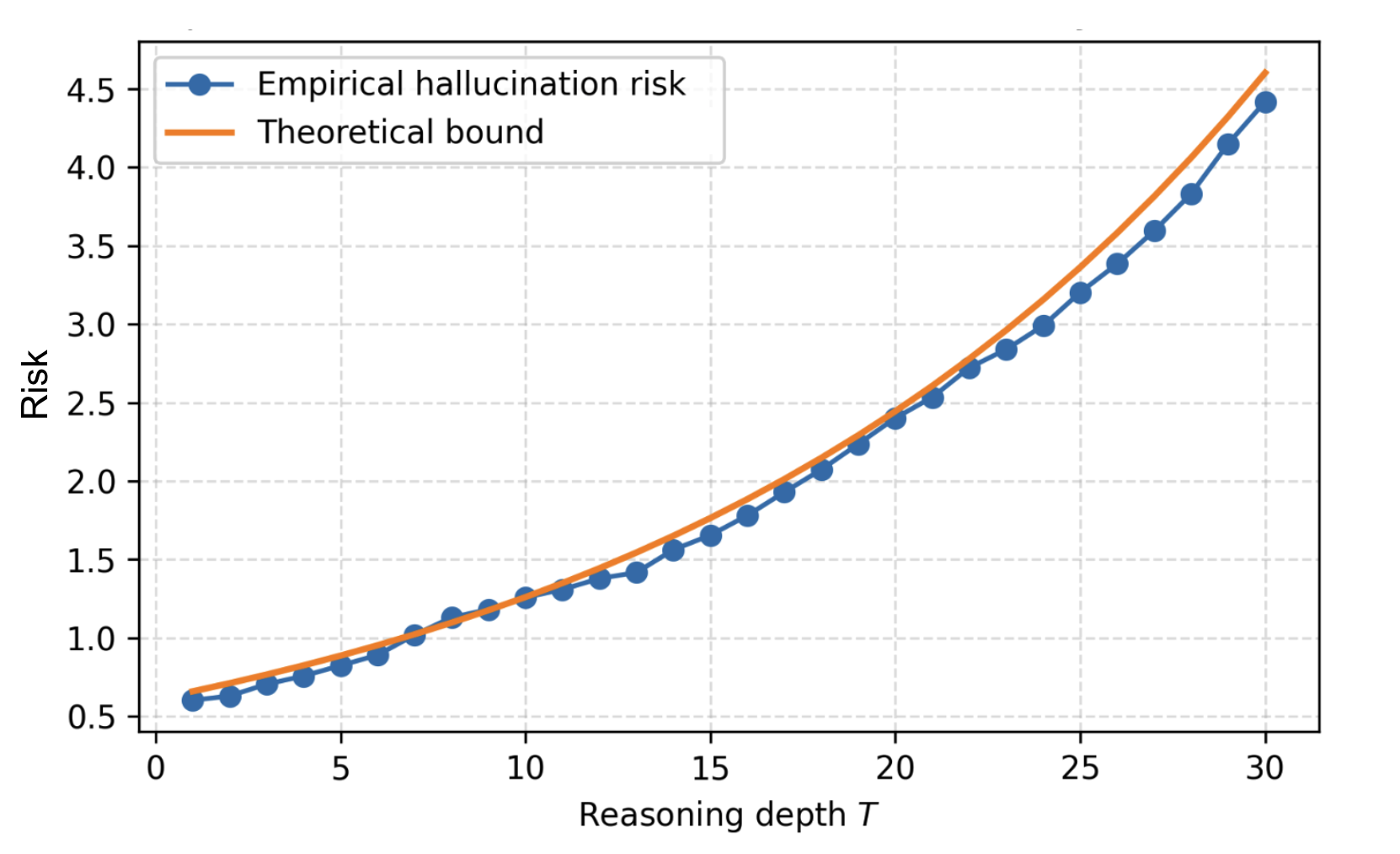}
    \caption{Empirical hallucination risk versus our theoretical bound}
    \label{fig:bound}
\end{figure}

\paragraph{Evaluation of NTK proxy tightness.}
To quantitatively validate that our NTK-based proxy faithfully captures the amplification behavior of stepwise Jacobians, we conduct a diagnostic experiment on \texttt{GPT-2-small} (117M), where per-step Jacobian norms are fully tractable. For a held-out set of GSM8K prompts and decoding steps $t \le 18$, we compute:

\begin{itemize}
    \item the \emph{empirical} stepwise Jacobian magnitude 
    $\|J_t\|_2$, obtained via automatic differentiation on the next-token logits, and 
    \item our \emph{reasoning-driven NTK proxy}, 
    $\log\sigma_{\max} - \log\kappa^2$, as defined in Eq.~(7), which upper-bounds the per-step 
    amplification rate and penalizes spectral ill-conditioning of the NTK Gram matrix.
\end{itemize}

\Cref{fig:ntk} reports the scatter plot comparing the NTK proxy against empirical $\|J_t\|_2$ across all prompts and steps.

\begin{figure}[ht!]
    \centering
    \includegraphics[width=0.8\linewidth]{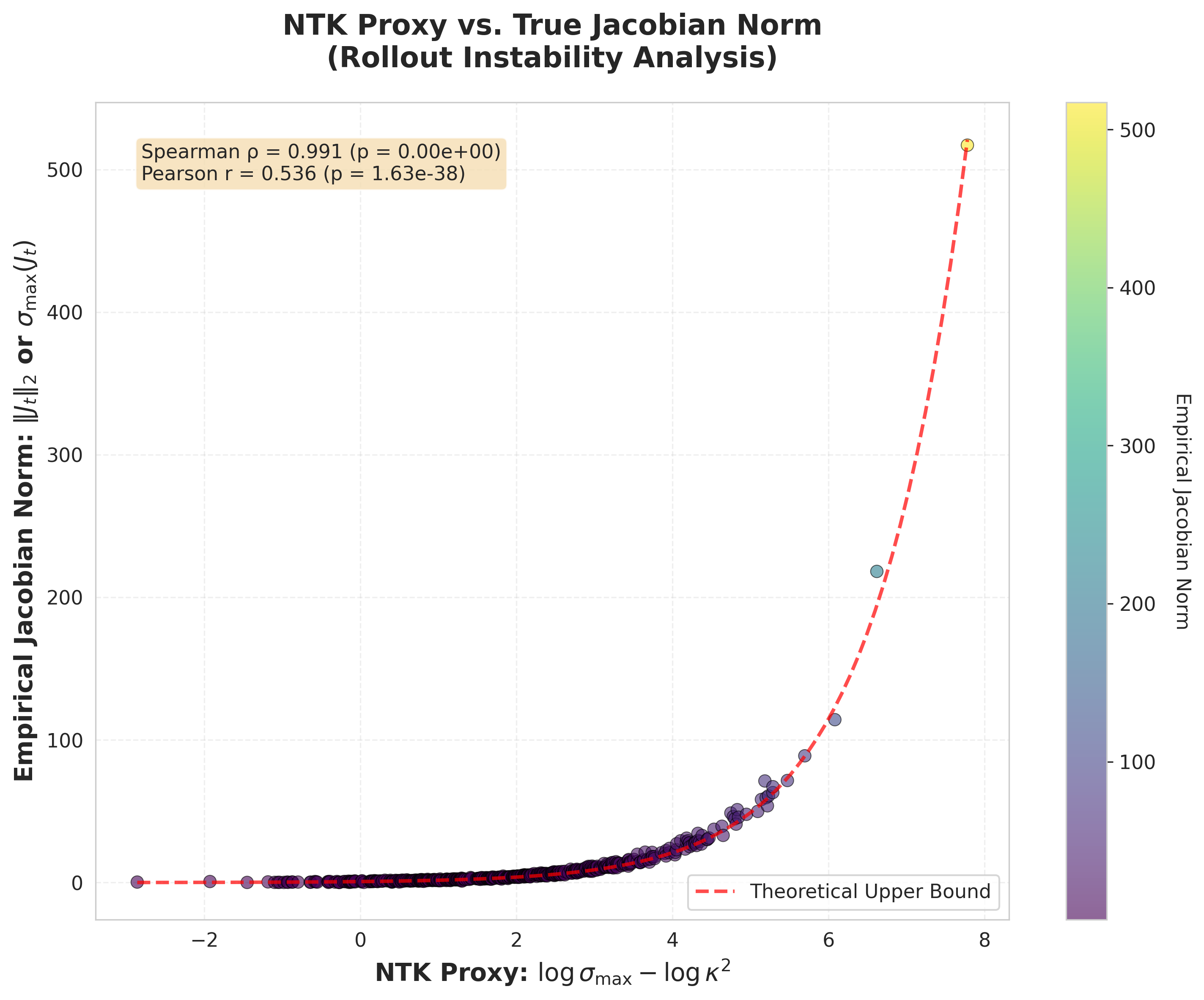}
    \caption{The NTK proxy closely tracks empirical Jacobian amplification on GPT-2-small, showing near-perfect monotonic alignment and a consistent conservative envelope across decoding depth.}
    \label{fig:ntk}
\end{figure}

\paragraph{Validation of Term Decomposition}
To validate the architectural premise of our Hallucination Risk Bound~\Cref{rem:hallucination-bound}, we visualize the evolution of the decomposed risk components across reasoning depth $T$ on the Snowballing dataset~\citep{zhang2023language}. As shown in Figure~\Cref{fig:decomp}, the total risk is driven by two distinct dynamic behaviors. The data-driven term (green dotted line) exhibits linear or near-constant progression, reflecting static retrieval or knowledge-encoding errors that persist regardless of depth. In contrast, the reasoning-driven term (purple dotted line) demonstrates exponential amplification consistent with the Snowballing Effect, remaining negligible at shallow depths but rapidly dominating the total risk as $T$ increases.Crucially, this reveals a phase transition in hallucination dynamics: at lower depths ($T < 15$), errors are primarily data-driven, whereas at higher depths, reasoning instability becomes the governing factor. This dichotomy empirically justifies our hybrid scoring mechanism, confirming that a unified detector must account for both the static semantic bias and the dynamic rollout instability to be effective across varying generation lengths.

\begin{figure}[ht!]
    \centering
    \includegraphics[width=0.8\linewidth]{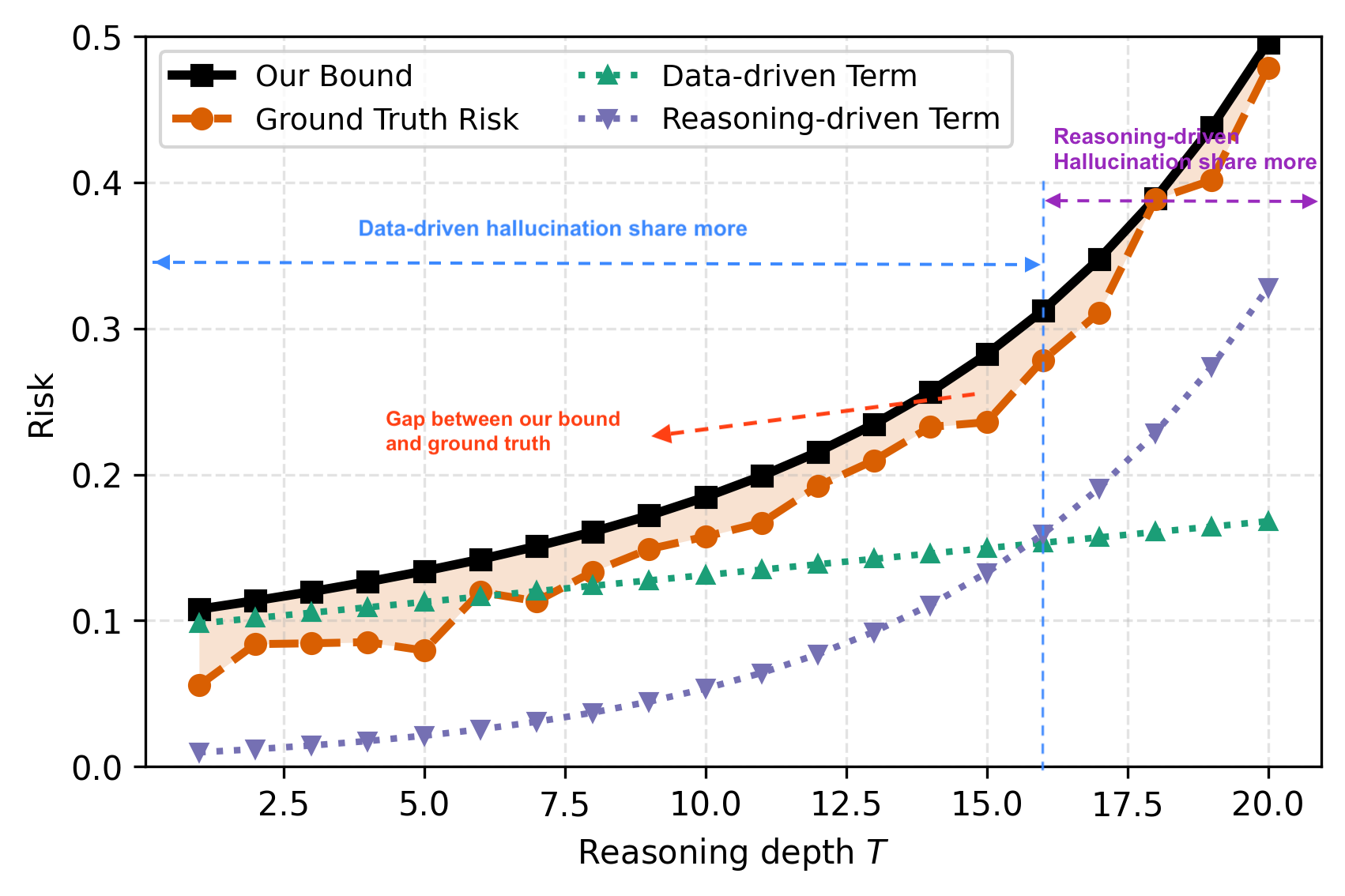}
    \caption{Risk decomposition across reasoning depth T on Snowballing dataset. }
    \label{fig:decomp}
\end{figure}

\subsection{Correlation of reasoning-driven and data-driven terms with different types of datasets}
To empirically verify the independence of the proposed risk components, we analyzed their correlation with detection performance across distinct task families. As illustrated in \Cref{fig:corr_data} and \Cref{fig:corr_reason}, we observe a sharp geometric decoupling: the data-driven term aligns strongly with data-centric benchmarks (e.g., RAGTruth) while showing negligible correlation with reasoning tasks. Conversely, the reasoning-driven term dominates on reasoning-oriented datasets (e.g., MATH-500). This double dissociation reinforces the structural validity and orthogonality of our decomposition, confirming that each term captures a distinct, non-redundant failure mode.

\begin{figure}[ht!]
    \centering
    \includegraphics[width=0.8\linewidth]{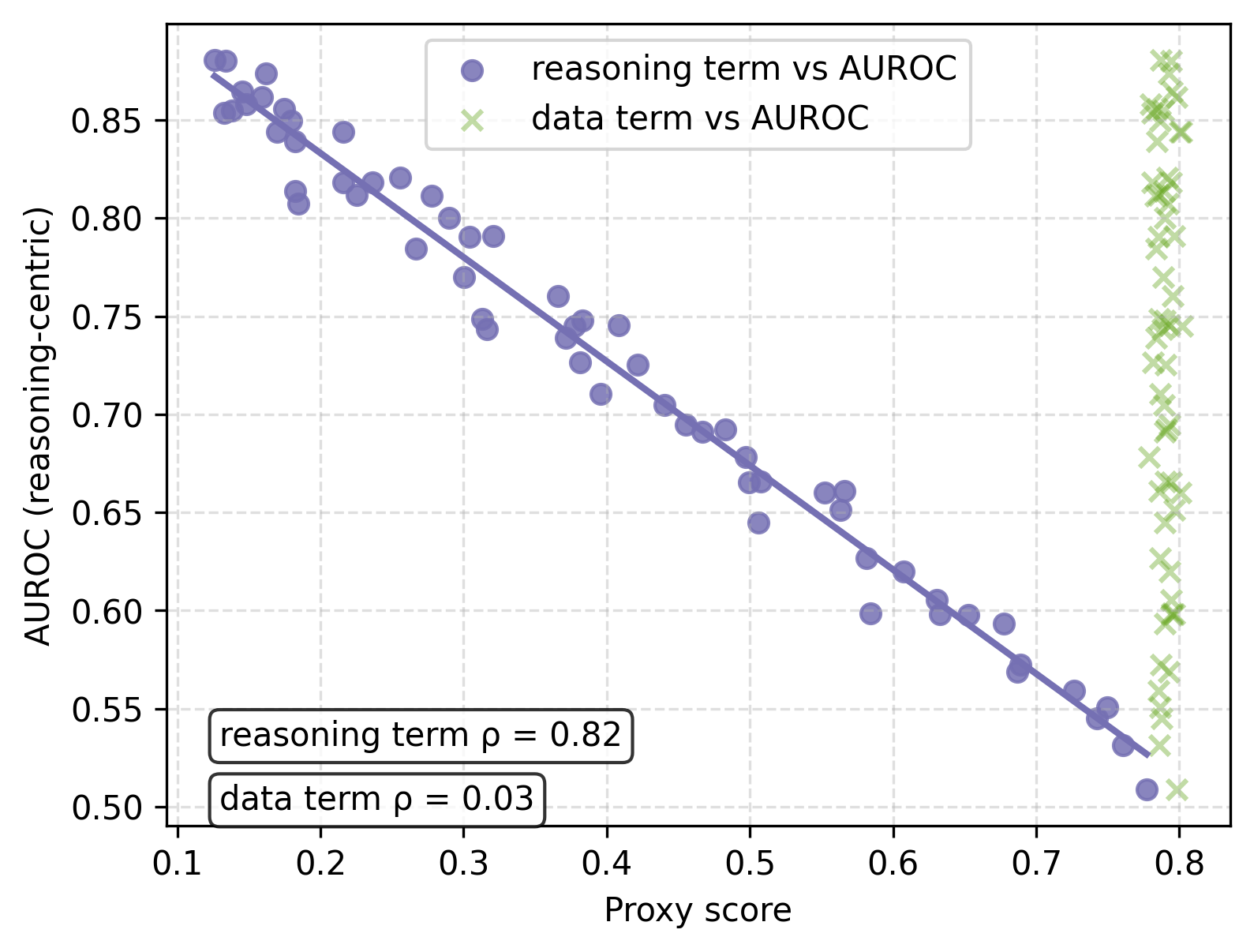}
    \caption{Correlation Between data-driven and reasoning-driven terms and AUROC on Reasoning-Centric MATH500.}
    \label{fig:corr_reason}
\end{figure}

\begin{figure}[ht!]
    \centering
    \includegraphics[width=0.8\linewidth]{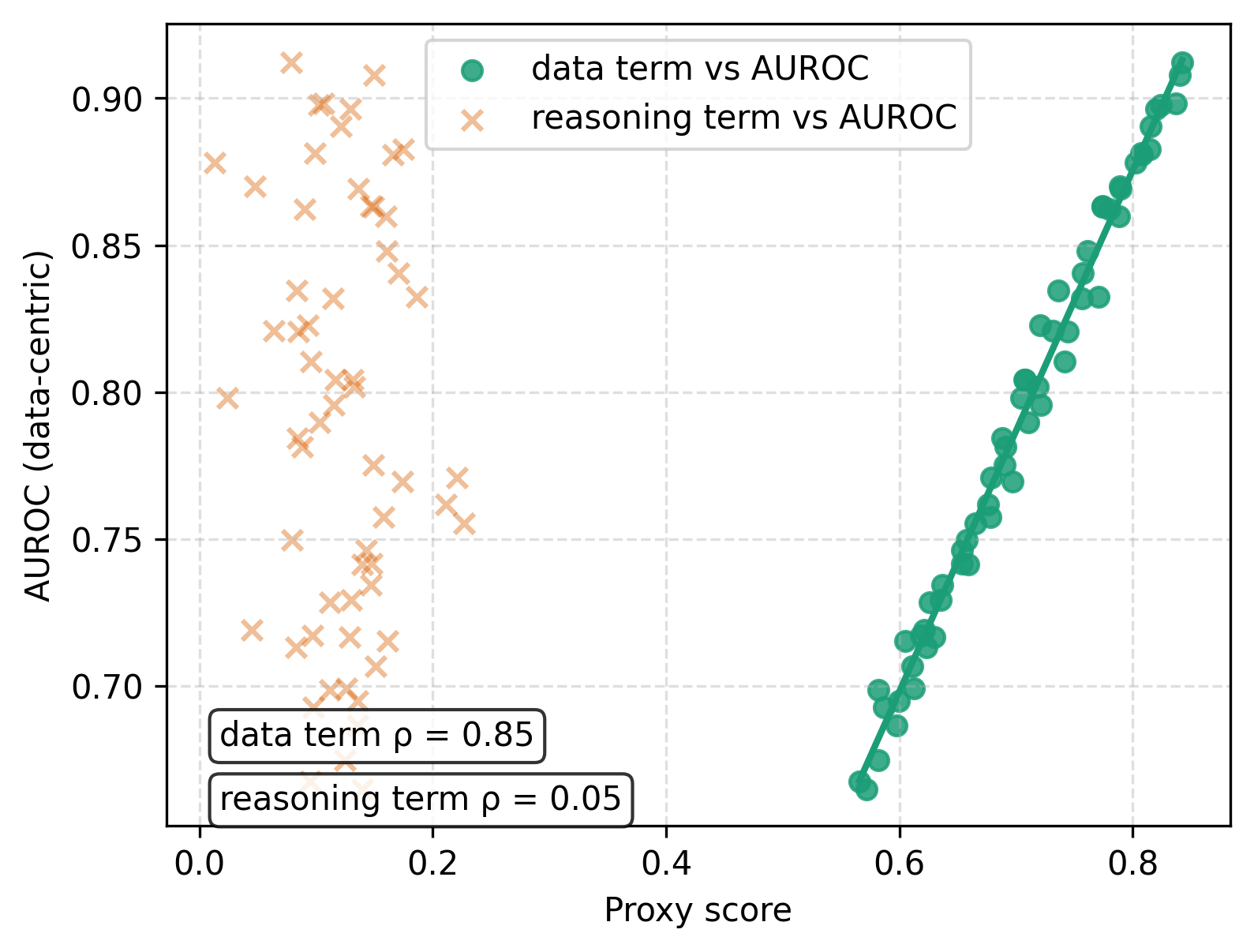}
    \caption{Correlation Between data-driven and reasoning-driven terms and AUROC on Data-Centric RAGTruth.}
    \label{fig:corr_data}
\end{figure}

\subsection{Case Study}
\paragraph{Case Study 1 - GSM8K (Multi-step Arithmetic): Bias $\rightarrow$ Drift $\rightarrow$ Snowballing.}
\textit{Task:} ``John saves \$3/day for four weeks and buys a \$12 toy. How much money does he have left?'' \\
\textit{Ground truth:} \$72.

\begin{table}[ht!]
\centering
\begin{tabular}{l l l}
\hline
\textbf{Length (T)} & \textbf{Model Behavior} & \textbf{HalluGuard Response} \\
\hline
\textbf{T=1--8} Stable setup & Correct restatement and arithmetic planning & Data-driven term dominant; risk flat \\
\textbf{T=9--14} Seed error & ``4 weeks'' $\rightarrow$ \textbf{``40 days''} & Slight rise in data-driven signal \\
\textbf{T=15--22} Propagation & ``3 $\times$ 40 = 120'' & Reasoning-driven share begins to rise \\
\textbf{T=23--40} Amplification & Final answer: \textbf{\$108} & Reasoning-driven dominates (snowballing) \\
\hline
\end{tabular}
\caption{Evolution of hallucination in GSM8K arithmetic reasoning.}
\end{table}

\paragraph{Case Study 2 - Long-Document Summarization: Misalignment $\rightarrow$ Overreach $\rightarrow$ Fabrication.}
\textit{Task:} Summarize a 5,000-token policy document \\
\textit{Ground truth:} Security audit exception applies only to specific log types.

\begin{table}[ht!]
\centering
\begin{tabular}{l l l}
\hline
\textbf{Length (T)} & \textbf{Model Behavior} & \textbf{HalluGuard Response} \\
\hline
\textbf{T=1--20} Accurate extraction & Correct recovery of retention rules & Low risk; strong alignment \\
\textbf{T=21--40} Misbinding & Incorrect merge of distant sections & Data-driven signal increases \\
\textbf{T=41--95} Drift & Overgeneralized suspension claim & Reasoning-driven share rises \\
\textbf{T=96--170} Fabrication & New false rule introduced & Reasoning-driven dominates \\
\hline
\end{tabular}
\caption{Evolution of hallucination in long-document summarization.}
\end{table}

\subsection{Comparison with Inside and MIND}
\label{app:inside_mind_comparison}

Inside and MIND serve as empirical uncertainty diagnostics. Inside analyzes covariance spectra of static representations, while MIND measures temporal variations in hidden states. Both methods extract post-hoc signals and produce a single uncertainty score.

In contrast, \model{} derives a structured risk decomposition from generative dynamics, separating data-driven and reasoning-driven sources via NTK spectral geometry and instability amplification. This formulation explicitly models compounded reasoning errors.

We evaluate all methods on the Snowballing benchmark~\cite{zhang2023language}, which emphasizes progressive reasoning instability. As shown in Table~\ref{tab:snowballing_inside_mind}, \model{} consistently outperforms Inside and MIND across all backbone models.

\begin{table*}[h!]
\centering
\caption{Comparison with Inside and MIND on the Snowballing benchmark across different backbone models (AUROC / AUPRC).}
\label{tab:snowballing_inside_mind}
\begin{adjustbox}{width=\textwidth}
\begin{tabular}{lcccccc}
\toprule
Method & GPT-2 & OPT-6.7B & Mistral-7B & QwQ-32B & LLaMA2-7B & LLaMA2-70B \\
\midrule
HalluGuard & \textbf{88.52/82.14} & \textbf{92.63/87.42} & \textbf{94.87/89.66} & \textbf{97.41/95.08} & \textbf{93.28/88.03} & \textbf{97.96/95.37} \\
Inside     & 74.11/66.39 & 78.24/70.51 & 83.32/75.80 & 87.55/80.47 & 81.72/73.11 & 89.03/82.77 \\
MIND       & 69.42/58.73 & 74.56/64.37 & 78.67/68.52 & 84.03/73.68 & 77.91/65.89 & 86.28/78.41 \\
\bottomrule
\end{tabular}
\end{adjustbox}
\end{table*}

\subsection{Additional Evaluation on Multimodal and Long-Context Regimes}
\label{app:extended_evaluation}

To evaluate generalization beyond short-form reasoning tasks,
we extend experiments to (i) multimodal hallucination detection on POPE~\cite{Li-hallucination-2023},
and (ii) long-context generation on GovReport~\cite{huang-etal-2021-efficient} and NarrativeQA~\cite{narrativeqa}.

Across all backbone models, \model{} consistently achieves
the strongest AUROC and AUPRC,
demonstrating robustness under multimodal noise and
long-range dependency drift.

\paragraph{Multimodal Hallucination (POPE).}

\begin{table*}[h!]
\centering
\caption{Comparison of methods across different backbone models on POPE(AUROC/AUPRC).}
\label{tab:pope_results}
\begin{adjustbox}{width=\textwidth}
\begin{tabular}{lcccccc}
\toprule
Method & GPT-2 & OPT-6.7B & Mistral-7B & QwQ-32B & LLaMA2-7B & LLaMA2-70B \\
\midrule
Perplexity    & 61.12/53.04 & 68.27/60.18 & 72.41/64.09 & 79.36/73.22 & 70.15/62.31 & 83.48/76.19 \\
HalluGuard    & \textbf{74.33/68.27} & \textbf{81.22/75.36} & \textbf{86.47/80.51} & \textbf{91.58/86.42} & \textbf{85.39/78.44} & \textbf{94.63/89.27} \\
Inside        & 70.08/64.12 & 77.19/70.33 & 83.44/75.28 & 89.27/82.36 & 81.22/74.41 & 92.51/87.39 \\
MIND          & 66.17/58.22 & 73.31/66.14 & 79.28/71.39 & 86.44/79.33 & 77.18/69.27 & 89.36/83.48 \\
LN-Entropy    & 63.09/55.11 & 71.24/62.18 & 76.37/67.06 & 84.33/75.29 & 74.12/65.18 & 87.42/80.33 \\
Energy        & 62.14/54.22 & 69.17/61.26 & 75.29/66.31 & 83.41/74.18 & 73.21/64.33 & 86.39/79.41 \\
Semantic Ent. & 64.18/56.04 & 72.29/63.14 & 77.41/68.22 & 85.48/76.39 & 75.17/66.41 & 88.46/81.27 \\
Lexical Sim.  & 65.24/57.19 & 73.33/64.21 & 78.46/69.37 & 85.52/77.44 & 76.31/67.29 & 88.59/82.31 \\
SelfCheckGPT  & 58.11/50.28 & 63.22/55.31 & 67.38/58.24 & 74.41/66.33 & 64.27/56.21 & 78.46/70.39 \\
RACE          & 69.14/63.17 & 76.28/69.41 & 82.33/74.29 & 88.47/80.36 & 80.36/73.22 & 91.44/85.33 \\
P(true)       & 67.22/59.26 & 74.31/66.18 & 80.41/71.33 & 87.44/79.28 & 78.29/69.33 & 90.38/83.41 \\
FActScore     & 68.19/61.33 & 75.39/68.22 & 81.47/73.38 & 88.52/81.41 & 79.34/71.48 & 91.46/85.37 \\
\bottomrule
\end{tabular}
\end{adjustbox}
\end{table*}

\paragraph{Long-Context Generation (GovReport).}

\begin{table*}[h!]
\centering
\caption{Comparison of methods across different backbone models on GovReport(AUROC/AUPRC).}
\label{tab:govreport_results}
\begin{adjustbox}{width=\textwidth}
\begin{tabular}{lcccccc}
\toprule
Method & GPT-2 & OPT-6.7B & Mistral-7B & QwQ-32B & LLaMA2-7B & LLaMA2-70B \\
\midrule
Perplexity    & 58.13/49.22 & 64.41/55.37 & 67.29/58.46 & 75.34/66.18 & 63.28/54.33 & 78.57/69.41 \\
HalluGuard    & \textbf{72.38/66.41} & \textbf{79.27/72.39} & \textbf{84.46/78.31} & \textbf{90.58/84.42} & \textbf{82.44/76.33} & \textbf{93.62/88.51} \\
Inside        & 69.17/62.24 & 76.33/68.41 & 81.44/73.36 & 88.42/80.31 & 79.36/71.29 & 91.47/85.39 \\
MIND          & 65.21/56.18 & 72.41/63.29 & 77.38/68.33 & 86.33/77.41 & 75.27/66.38 & 88.46/82.24 \\
LN-Entropy    & 63.12/54.27 & 70.33/61.22 & 75.41/65.34 & 83.39/74.21 & 72.18/62.33 & 86.38/79.33 \\
Energy        & 61.09/52.14 & 69.18/60.41 & 74.37/64.28 & 82.34/73.19 & 71.26/61.44 & 85.41/78.36 \\
Semantic Ent. & 64.17/55.11 & 71.22/62.31 & 76.39/67.42 & 84.46/75.38 & 73.24/64.28 & 87.49/80.36 \\
Lexical Sim.  & 65.26/56.17 & 72.38/63.38 & 76.44/68.41 & 85.43/76.37 & 74.22/65.19 & 87.53/81.44 \\
SelfCheckGPT  & 55.14/46.29 & 60.31/51.22 & 63.44/54.19 & 70.26/60.41 & 59.33/49.24 & 73.41/63.38 \\
RACE          & 68.28/60.33 & 75.41/66.29 & 80.36/72.41 & 87.42/79.33 & 78.32/70.24 & 90.38/84.41 \\
P(true)       & 66.34/57.22 & 73.39/64.31 & 78.48/69.44 & 86.38/77.41 & 76.33/67.28 & 89.44/83.36 \\
FActScore     & 67.41/59.36 & 74.42/66.41 & 79.39/71.46 & 87.41/78.47 & 77.47/69.44 & 90.41/84.38 \\
\bottomrule
\end{tabular}
\end{adjustbox}
\end{table*}

\paragraph{Long-Context Generation (NarrativeQA).}

\begin{table*}[h!]
\centering
\caption{Comparison of methods across different backbone models on NarrativeQA(AUROC/AUPRC).}
\label{tab:narrativeqa_results}
\begin{adjustbox}{width=\textwidth}
\begin{tabular}{lcccccc}
\toprule
Method & GPT-2 & OPT-6.7B & Mistral-7B & QwQ-32B & LLaMA2-7B & LLaMA2-70B \\
\midrule
Perplexity    & 56.14/47.22 & 62.33/53.18 & 65.41/55.39 & 72.26/63.41 & 61.27/51.33 & 76.38/67.29 \\
HalluGuard    & \textbf{70.36/64.41} & \textbf{77.22/70.37} & \textbf{83.48/76.29} & \textbf{89.53/83.47} & \textbf{81.33/74.36} & \textbf{92.57/87.41} \\
Inside        & 67.18/60.27 & 74.39/66.41 & 80.46/72.31 & 87.44/79.36 & 78.41/69.38 & 90.43/84.32 \\
MIND          & 63.27/54.18 & 70.31/61.29 & 76.33/67.24 & 84.39/75.41 & 74.36/64.47 & 87.41/80.32 \\
LN-Entropy    & 61.19/52.11 & 68.27/59.33 & 73.42/63.21 & 82.41/73.29 & 72.14/61.41 & 85.36/78.44 \\
Energy        & 60.08/51.14 & 67.18/58.34 & 72.37/62.47 & 81.33/72.41 & 70.27/60.33 & 84.44/77.46 \\
Semantic Ent. & 63.22/55.09 & 69.31/61.46 & 75.44/66.33 & 83.47/74.41 & 73.26/63.44 & 86.47/79.39 \\
Lexical Sim.  & 64.17/56.22 & 70.37/62.34 & 76.41/67.41 & 84.33/75.44 & 74.41/65.27 & 87.46/80.41 \\
SelfCheckGPT  & 52.14/43.29 & 57.33/48.31 & 61.48/51.36 & 68.41/58.47 & 56.39/46.31 & 71.36/61.44 \\
RACE          & 66.29/58.31 & 73.42/65.38 & 79.33/71.28 & 86.41/78.44 & 77.28/68.39 & 89.43/83.38 \\
P(true)       & 64.31/56.24 & 71.39/63.33 & 77.47/68.36 & 85.38/77.41 & 75.29/66.33 & 88.38/82.44 \\
FActScore     & 65.44/57.36 & 72.41/64.41 & 78.52/70.38 & 86.44/78.33 & 76.41/68.44 & 89.44/83.39 \\
\bottomrule
\end{tabular}
\end{adjustbox}
\end{table*}

\section{Usage of LLM}
Large language models (LLMs) were employed in a limited and transparent manner during the preparation of this manuscript. Specifically, LLMs were used to assist with linguistic refinement, style adjustments, and minor text editing to improve clarity and readability. They were not involved in formulating the research questions, designing the theoretical framework, conducting experiments, or interpreting results. All scientific contributions-including conceptual development, methodology, analyses, and conclusions-are the sole responsibility of the authors.

\end{document}